\documentclass[10pt]{article} 
\usepackage[preprint]{tmlr}


\usepackage{amsmath,amsfonts,bm}









\def\eqref#1{equation~\ref{#1}}









\def\1{\bm{1}}










\DeclareMathAlphabet{\mathsfit}{\encodingdefault}{\sfdefault}{m}{sl}
\SetMathAlphabet{\mathsfit}{bold}{\encodingdefault}{\sfdefault}{bx}{n}













\usepackage{hyperref}
\usepackage{url}

\title{Provably Convergent Policy Optimization via Metric-aware Trust Region Methods}


\author{\name Jun Song \email       
      juns113@uw.edu \\
      \addr Department of Industrial and Systems Engineering\\
      University of Washington
      \AND
      \name Niao He \email niao.he@inf.ethz.ch \\
      \addr Department of Computer Science \\ ETH Zürich
      \AND
      \name Lijun Ding \email lding47@wisc.edu\\
      \addr Wisconsin Institute for Discovery \\
      University of Wisconsin - Madison 
      \AND 
      \name Chaoyue Zhao \email 
      cyzhao@uw.edu \\
      \addr 
      Department of Industrial and Systems Engineering\\
      University of Washington
      }



\usepackage{bibentry}

\usepackage[utf8]{inputenc} 
\usepackage[T1]{fontenc}    
\usepackage{hyperref}       
\usepackage{url}            
\usepackage{booktabs}       
\usepackage{amsfonts}       
\usepackage{nicefrac}       
\usepackage{microtype}      
\usepackage{xcolor}         

\usepackage{amsmath}

\usepackage{mathtools}
\usepackage{amssymb}
\usepackage[shortlabels]{enumitem}
\usepackage{graphicx}
\usepackage{wrapfig}
\usepackage{color}
\usepackage{xcolor,soul}
\usepackage{subcaption}
\captionsetup{compatibility=false}
\usepackage{schemata}
\usepackage[ruled,vlined,algo2e]{algorithm2e}
\makeatletter
\newcommand{\algorithmstyle}[1]{\renewcommand{\algocf@style}{#1}}
\newcommand{\Lim}[1]{\raisebox{0.5ex}{\scalebox{0.95}{$\displaystyle \lim_{#1}\;$}}}
\newcommand{\Sup}[1]{\raisebox{0.5ex}{\scalebox{0.95}{$\displaystyle \sup_{#1}\;$}}}
\makeatother
\usepackage{adjustbox}
\usepackage{float}
\usepackage{amsthm}
\usepackage{chngcntr}
\usepackage{apptools}
\usepackage{thmtools}
\usepackage{thm-restate}
\usepackage{cleveref}
\usepackage{wrapfig}
\usepackage{enumitem}
\usepackage{comment}
\usepackage{tablefootnote}
\usepackage{longtable}

\allowdisplaybreaks

\newtheorem{remark}{Remark}

\newcommand{\hlight}{\textcolor{black}}
\newcommand{\bblack}{\textcolor{black}}

\newcommand{\zcyan}{\textcolor{black}}
\newcommand{\zzred}{\textcolor{black}}
\newcommand{\xred}{\textcolor{black}}

\newcommand{\bblue}{\textcolor{black}}
\newcommand{\ccred}{\textcolor{black}}
\newcommand{\rred}{\textcolor{black}}
\newcommand{\zblue}{\textcolor{black}}

\begin{document}

\maketitle

\begin{abstract}

Trust-region methods based on Kullback-Leibler divergence are pervasively used to stabilize policy optimization in reinforcement learning. In this paper, we exploit more flexible metrics and examine two natural extensions of policy optimization with Wasserstein and Sinkhorn trust regions, namely \emph{Wasserstein policy optimization (WPO)} and \emph{Sinkhorn policy optimization (SPO)}. Instead of restricting the policy to a parametric distribution class, we directly optimize the policy distribution and derive their closed-form policy updates based on the Lagrangian duality. Theoretically, we show that WPO guarantees a monotonic performance improvement, and SPO provably converges to WPO as the entropic regularizer diminishes. \zblue{Moreover, we prove that with a decaying Lagrangian multiplier to the trust region constraint, both methods converge to global optimality.} Experiments across tabular domains, robotic locomotion, \bblue{and continuous control tasks} further demonstrate the performance improvement of both approaches, more robustness of WPO to sample insufficiency, and faster convergence of SPO, over state-of-art policy gradient methods. 

\end{abstract}

\section{Introduction}
Policy-based reinforcement learning (RL) approaches have received remarkable success in many domains, including video games \citep{wang2017acer, wu2017acktr, mnih2016_a3c}, robotics \citep{grudic2003_pg_robotics, levine2016visuomotorpolicyforrobot}, and continuous control tasks \citep{duan2016_continuous, schulmanl2016_gae_continuous, heess2015continuouscontrolpolicy}.  One prominent example is policy gradient method \citep{grudic2003_pg_robotics, peters2006_pg_robotics, lillicrap2015_ddpg, sutton1999_pg, williams1992_reinforce, mnih2016_a3c, silver2014_dpg}. 
The core idea is to represent the policy with a probability distribution $\pi_\theta (a|s) = P [a|s; \theta]$, such that the action $a$ in state $s$ is chosen stochastically following the policy $\pi_\theta$ controlled by parameter $\theta$. 
Determining the right step size to update the policy is crucial for maintaining the stability of policy gradient methods: too conservative choice of stepsizes result in slow convergence, while too large stepsizes may lead to catastrophically bad updates. 

%
%

To control the size of policy updates, Kullback-Leibler (KL) divergence is commonly adopted to measure the difference between two policies. For example, the seminal work on trust region policy optimization (TRPO) by \citet{schulman2015_trpo} introduced KL divergence based constraints (trust region constraints) to restrict the size of the policy update; see also \citet{peng2019advantage, abdolmaleki2018maximum}.  \citet{kakade2001natural} and \citet{schulman2017_ppo} introduced a KL-based penalty term to the objective to prevent excessive policy shift.

Though KL-based policy optimization has achieved promising results, it remains interesting whether using other metrics to gauge the similarity between policies could bring additional benefits.  Recently, a few work~\citep{richemond2017_wrl,zhang2018_wgf,moskovitz2021wnpg,pacchiano2019_bgpg} has explored the Wasserstein metric to restrict the deviation between consecutive policies.  Compared with KL divergence, the Wasserstein metric has several desirable  properties. 
Firstly, it is a true symmetric distance measure. Secondly, it allows flexible user-defined costs between actions and is less sensitive to ill-posed likelihood ratios. Thirdly but most importantly, the Wasserstein metric takes into account the geometry of the metric space \citep{panaretos2019_wassersteingeometry} and allows distributions to have different or even non-overlapping supports. 

\textcolor{black}{
{\bf Motivating Example}: Below we provide an example of a grid world (see Figure \ref{fig:grid_world_example}) that illustrates the advantages of using the Wasserstein metric over the KL divergence to construct trust regions and policy updates. The grid world consists of $5$ regular grids and $2$ goal grids, and there are three possible actions: left, right, and pickup. The player always starts from the middle grid, and making a left or right move results in a reward of $-1$. Picking up yields a reward of $-3$ at regular grids, $+5$ at the blue goal grid, and $+10$ at the red goal grid. An episode terminates either at the maximum length of $10$ or immediately after picking up. We define the geometric distance between left and right actions to be $1$, and $4$ between other actions.}
\begin{figure}[H]
    \centering
    \includegraphics[width = 0.4\linewidth]{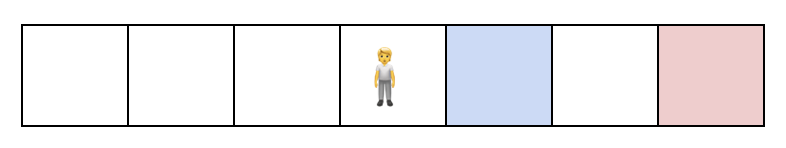}
    \caption{Motivating grid world example}
    \label{fig:grid_world_example}
\end{figure}

\textcolor{black}{
Figure \ref{fig:wd_geometric_advantage} shows the Wasserstein distance and KL divergence for different policy shifts of this grid world example. We can see that Wasserstein metric utilizes the geometric distance between actions to distinguish the shift of policy distribution to a close action (policy distribution 1 $\xrightarrow{}$ 2 in Figure \ref{fig:wd_geometric_advantage_close}) from the shift to a far action (policy distribution 1 $\xrightarrow{}$ 3 in Figure \ref{fig:wd_geometric_advantage_far}), while KL divergence does not. Figure \ref{fig:policy_update_example_kl_wass} demonstrates the constrained policy updates based on Wasserstein distance  and KL divergence respectively with a fixed trust region size $1$. We can see that Wasserstein-based policy update finds the optimal policy faster than KL-based policy update. This is because KL distance is larger than Waserstein when considering policy shifts of close actions (see Figure \ref{fig:wd_geometric_advantage_close}). Therefore, Wasserstein policy update is able to shift action (from left to right) in multiple states, while KL update is only allowed to shift action in a single state. Besides, KL policy update keeps using a suboptimal short-sighted solution between the 2nd and 4th iteration, which further slows down the convergence. }
\begin{figure}[H]
\centering
  \begin{subfigure}{0.48\columnwidth}
    \centering
    \includegraphics[width=0.95\linewidth]{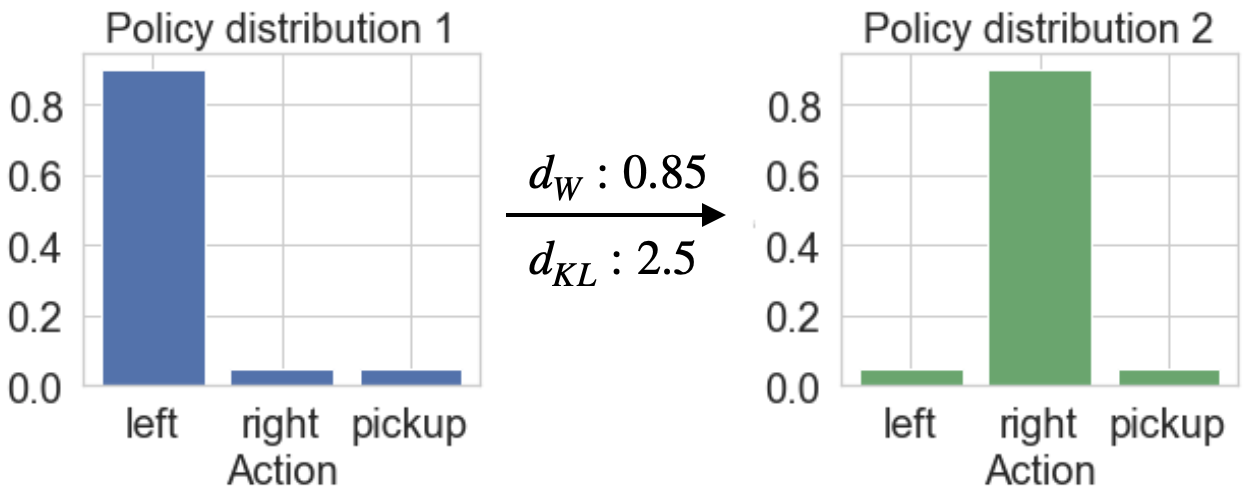}
    \caption{Policy shift of close action}
    \label{fig:wd_geometric_advantage_close}
  \end{subfigure}
   \begin{subfigure}{0.48\columnwidth}
    \centering
    \includegraphics[width=0.95\linewidth]{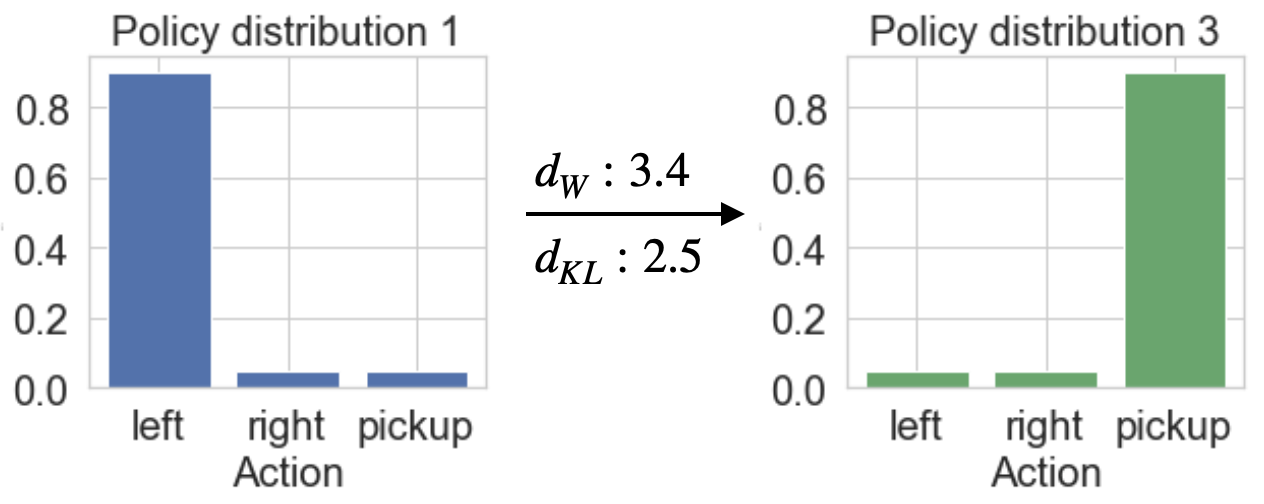}
    \caption{Policy shift of far action}
    \label{fig:wd_geometric_advantage_far}
  \end{subfigure}
\caption{Wasserstein utilizes geometric feature of action space}
\label{fig:wd_geometric_advantage}
\end{figure}

\begin{figure}[H]
\centering
\begin{minipage}{0.4\textwidth}
    \centering
    \includegraphics[width=0.8\linewidth]{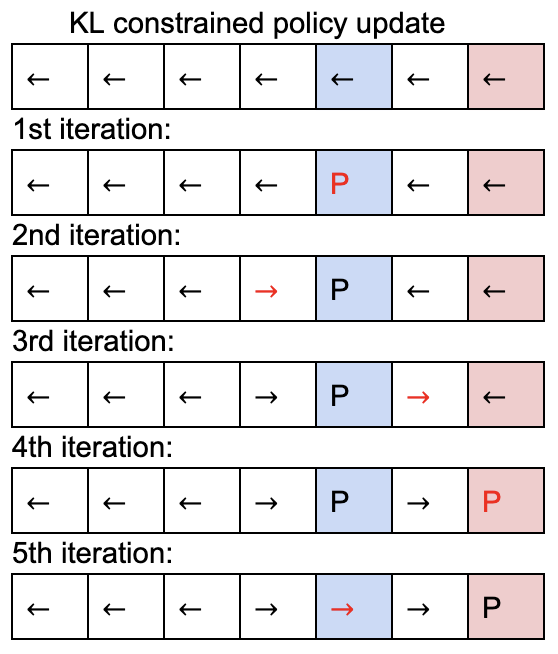}
\end{minipage}
\begin{minipage}{0.4\textwidth}
    \centering
    \includegraphics[width=0.8\linewidth]{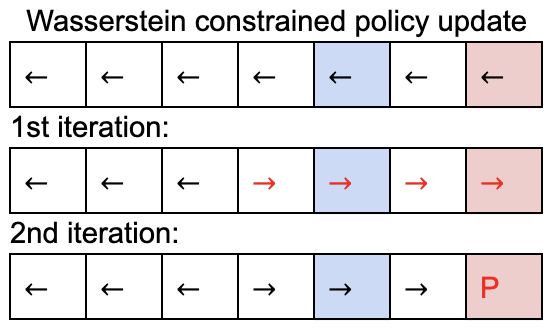}
\end{minipage}
\caption{Demonstration of policy updates under different trust regions \label{fig:policy_update_example_kl_wass}}
\end{figure}


However, the challenge of applying the Wasserstein metric for policy optimization is also evident: evaluating the Wasserstein distance requires solving an optimal transport problem, which could be computationally expensive. 
%
%
To avoid this computation hurdle, existing work resorts to different techniques to \emph{approximate the policy update} under Wasserstein regularization. For example, \citet{richemond2017_wrl} solved the resulting RL problem using Fokker-Planck equations; \citet{zhang2018_wgf} introduced particle approximation method to estimate the Wasserstein gradient flow. Recently, \citet{moskovitz2021wnpg} instead considered the second-order Taylor expansion of Wasserstein distance based on Wasserstein information matrix to characterize the local behavioral structure of policies. \citet{pacchiano2019_bgpg} tackled behavior-guided policy optimization with smooth Wasserstein regularization by solving an approximate dual reformulation defined on reproducing kernel Hilbert spaces. Aside from such approximation, some of these work also limits the policy representation to a particular parametric distribution class, As indicated in \citet{tessler2019_dpo}, since parametric distributions are not convex in the distribution space, optimizing over such distributions results in local movements in the action space and thus leads to convergence to a sub-optimal solution.  Until now, the theoretical performance of policy optimization under the Wasserstein metric remains elusive in light of these approximation errors.



In this paper, we study policy optimization with trust regions based on Wasserstein distance and Sinkhorn divergence. The latter is a smooth variant of Waserstein distance by imposing an entropic regularization to the optimal transport problem~\citep{cuturi2013_sinkhorn}. We call them, \emph{Wasserstein Policy Optimization (WPO)} and \emph{Sinkhorn Policy Optimization (SPO)}, respectively. Instead of confining the distribution of policy to a particular distribution class, we work on the space of policy distribution directly, and consider all admissible policies that are within the trust regions with the goal of avoiding approximation errors. Unlike existing work, we focus on \emph{exact characterization} of the policy updates. We would like to emphasize that our methodology and theoretical analysis in Section \ref{wasserstein}, \ref{sinkhorn}, and \ref{analysis} primarily concentrate on a discrete action space. However, we also present an extension of our method to accommodate a continuous action space, detailed in Section \ref{continuous_control}. We highlight our contributions as follows:

\begin{enumerate}[leftmargin=0.5cm,itemsep = 0.1pt]
    \item \textbf{Algorithms:} We develop closed-form expressions of the policy updates for both WPO and SPO based on the corresponding optimal Lagrangian multipliers of the trust region constraints. \bblue{To the best of our knowledge, this is the first explicit closed-form updates for policy optimization based on Wasserstein and Sinkhorn trust regions}. In particular, the optimal Lagrangian multiplier of SPO admits a simple form and can be computed efficiently.  A practical on-policy actor-critic algorithm is proposed based on the derived expressions of policy updates and advantage value function estimation.
    
    \item \textbf{Theory:} We theoretically show that WPO guarantees a \emph{monotonic performance improvement} through the iterations, \emph{even with non-optimal Lagrangian multipliers}. We also prove that SPO converges to WPO as the entropic regularizer diminishes. \zblue{Moreover, we prove that with a decaying schedule of the multiplier, SPO and WPO converge to \emph{global optimality}, and with a constant multiplier, both methods converge \emph{linearly} up to 
   a neighborhood of the optimal value.} {\color{black} To our best knowledge, this appears to be the first convergence rate analysis of policy optimization based on Wasserstein-type metrics.}

    \item \textbf{Experiments:} We provide comprehensive evaluation on the efficiency of WPO and SPO under several types of testing environments including tabular domains, robotic locomotion tasks, {\color{black} and further extend it to continuous control tasks}.  Compared to state-of-art policy gradients approaches that use KL divergence such as TRPO and PPO and those use Wasserstein metric such as Wasserstein Natural Policy Gradient (WNPG) \citep{moskovitz2021wnpg} and Behavior Guided Policy Gradients (BGPG) \citep{pacchiano2019_bgpg}, our methods achieve better sample efficiency, faster convergence, and improved final performance.  Numerical study indicates that by properly choosing the weight of entropic regularizer, {SPO} achieves a better trade-off between convergence and final performance than {WPO}. 
     
\end{enumerate}


\emph{Related work:} 
Wasserstein-like metrics have been explored in a number of works in the context reinforcement learning. \citet{ferns2004metrics} first introduced bisimulation metrics based on Wasserstein distance to quantify behavioral similarity between states for the purpose of state aggregation. Such bisimulation metrics were recently utilized for representation learning of RL; see e.g.,~\citet{castro2020scalable,agarwal2020contrastive}. In addition, a few recent work has also exploited Wasserstein distance for imitation learning (see e.g.,~\citet{xiao2019wasserstein,dadashi2021primal}) and unsupervised RL (see e.g., ~\citet{he2022_wurl}). \hlight{Our work is closely related to several previous studies, including }\citet{richemond2017_wrl,zhang2018_wgf,moskovitz2021wnpg,pacchiano2019_bgpg}\hlight{, which also utilize Wasserstein distance to measure the proximity of policies. However, unlike the aforementioned studies that solely employ Wasserstein distance as an explicit penalty function, we additionally utilize it as a trust region constraint. Moreover, we consider nonparametric policies and derive explicit policy update forms, whereas these studies update parametric policies using policy gradients. Furthermore, we demonstrate monotonic performance improvement and global convergence with our policy update, which is not provided in these previous works. Regarding the use of Sinkhorn divergence in RL, }\citet{pacchiano2019_bgpg}\hlight{ is the only related work to our best knowledge, where the entropy regularization is used to mitigate the computational burden of computing Wasserstein metric. However, no explicit form of policy update is provided in this work, while we derive an explicit Sinkhorn policy update and demonstrate its advantage in convergence speed. Additionally, we use Wasserstein distance to directly measure the proximity of nonparametric policies in the distribution space, while }\citet{pacchiano2019_bgpg, moskovitz2021wnpg}\hlight{ measure the similarity of parametric policies in the behavioral space.} 

Wasserstein-like  metrics are also pervasively studied in distributionally robust optimization (DRO); see e.g., ~\citet{esfahani2018data,gao2016distributionally,zhao2018_dro_wasserstein,blanchet_wasserstein_dro}. We also point out that a recent concurrent work by~\citet{wang2021sinkhorn}   studied DRO using the Sinkhorn distance. 
Our duality formulations are largely inspired from existing work in DRO. However, we note that constrained policy optimization is conceptually different from DRO.
Constrained policy optimization focuses on finding the optimistic policy that falls in a trust region, whereas DRO (e.g., the KL DRO) aims to optimize some worst-case loss given by the adversarial distribution of unknown parameters within some ambiguity set.

\section{Background and Notations}

\vspace{-0.1cm}

\noindent {\bf Markov Decision Process (MDP):} We consider an infinite-horizon discounted MDP, defined by the tuple $(\mathcal{S},\mathcal{A},P,r,\rred{\upsilon},\gamma)$, where $\mathcal{S}$ is the state space, $\mathcal{A}$ is the action space, $P: \mathcal{S} \times \mathcal{A} \times \mathcal{S} \xrightarrow{} \mathbb{R}$ is the transition probability, $r: \mathcal{S} \times \mathcal{A} \xrightarrow{} \mathbb{R}$ is the reward function, $\rred{\upsilon}: \mathcal{S} \xrightarrow{} \mathbb{R}$ is the distribution of the initial state $s_0$, and $\gamma\in (0,1)$ is the discount factor. We define the return of timestep $t$ as the accumulated discounted reward from $t$, $R_t = \sum_{k=0}^{\infty} \gamma^k r(s_{t+k},a_{t+k})$, \rred{and the value function as $V^\pi(s) = \mathbb{E}[R_t|s_t = s; \pi]$}. The performance of a stochastic policy $\pi$ is defined as ${J}(\pi) = \mathbb{E}_{s_0,a_0,s_1 \dots} [\sum_{t=0}^{\infty} \gamma^t r(s_t,a_t)]$ where $a_t \sim \pi(a_t|s_t)$, $s_{t+1} \sim P(s_{t+1}|s_t,a_t)$. As shown in \citet{kakade2002_approximatelyoptimal}, the expected return of a new policy $\pi'$ can be expressed in terms of the advantage over the old policy $\pi$: ${J}(\pi') = {J}(\pi) + \mathbb{E}_{s\sim \rho^{\pi'}_\upsilon, a \sim \pi'} [A^{\pi}(s,a)]$, where $A^{\pi}(s,a) = \mathbb{E}[R_t|s_t = s, a_t = a; \pi] - \mathbb{E}[R_t|s_t = s; \pi]$ represents the advantage function and $\rho^{\pi}_\upsilon$ represents the unnormalized discounted visitation frequencies with initial state distribution $\upsilon$, i.e., $\rho^{\pi}_\upsilon(s) = \mathbb{E}_{s_0 \sim \upsilon} [\sum_{t=0}^{\infty}\gamma^t P(s_t = s| s_0)]$.

\vspace{-0.1cm}

\paragraph{Trust Region Policy Optimization (TRPO):} In TRPO~{\citep{schulman2015_trpo}}, the policy $\pi$ is parameterized as $\pi_{\theta}$ with parameter vector $\theta$. For notation brevity, we use $\theta$ to represent the policy $\pi_{\theta}$. Then, the new policy $\theta'$ is found in each iteration to maximize the expected improvement ${J}(\pi') - {J}(\pi)$, or equivalently, the expected value of the advantage function:
\begin{equation}
\begin{split}
& \max_{\theta'}\ \  \mathbb{E}_{s\sim \rho^{\theta}_\upsilon, a \sim \theta'} [ A^{\theta} (s,a)]  \\
& \text{s.t.} \ \ \mathbb{E}_{s\sim \rho^{\theta}_\upsilon} [d_{\text{KL}} (\theta', \theta)] \le \delta, 
\end{split}
\label{trpo_problem}
\end{equation}
where $d_{\text{KL}}$ represents the KL divergence and $\delta$ is the threshold of the distance between new and old policies. 

\vspace{-0.1cm}

\paragraph{Wasserstein Distance: }  Given two probability distributions of policies $\pi$ and $\pi'$ on the discrete action space $\mathcal{A} = \{a_1, a_2, \dots, a_N\}$, the Wasserstein distance between the policies is defined as: 
\begin{equation}
d_\text{W}(\pi',\pi) = \inf_{Q \in \Pi(\pi', \pi)} \langle Q, D\rangle, \label{def_wassertein}
\end{equation}
where $\langle\cdot, \cdot\rangle$ denotes the Frobenius inner product. The infimum is taken over all joint distributions $Q$ with marginals $\pi'$ and $\pi$, and $D$ is the cost matrix with $D_{ij} = d(a_i, a_j)$, where $d(a_i, a_j)$ is defined as the distance between actions $a_i$ and $a_j$. Its largest entry in magnitude is denoted  by $\|D\|_{\infty}$. \hlight{In implementation, our choice of distance $d$ is task-dependent and is reported in Table {\ref{appendix_hyperparameters}} in Appendix {\ref{appendix:implementation}}.}

%
%

\vspace{-0.2cm}

\paragraph{Sinkhorn Divergence: } Sinkhorn divergence \citep{cuturi2013_sinkhorn} provides a smooth approximation of the Wasserstein distance by adding an entropic regularizer. The Sinkhorn divergence is defined as: 
\begin{equation}
d_\text{S}(\pi',\pi|\lambda) = \inf_{Q \in \Pi(\pi', \pi)} \left\{ \langle Q, D\rangle - \frac{1}{\lambda} h(Q)\right\}, \label{def_sinkhorn}
\end{equation}
where $h(Q) = -\sum_{i=1}^N \sum_{j=1}^N Q_{ij} \log Q_{ij}$ represents the entropy term, and $\lambda > 0$ is a regularization parameter. The intuition of adding the entropic regularization is: since most elements of the optimal joint distribution $Q$ will be $0$ with a high probability, by trading the sparsity with entropy, a smoother and denser coupling between distributions can be achieved \citep{courty2014domain,courty2016optimal}. Therefore, when the weight of the entropic regularization decreases (i.e., $\lambda$ increases), the sparsity of the divergence increases, and the Sinkhorn divergence converges to the Wasserstein metric, i.e.,
$\lim_{\lambda \rightarrow \infty} d_\text{S}(\pi',\pi|\lambda) = d_\text{W}(\pi',\pi)$. 
More critically, Sinkhorn divergence is useful to mitigate the computational burden of computing Wasserstein distance. In fact, the efficiency improvement that Sinkhorn divergence and the related algorithms brought paves the way to utilize Wasserstein-like metrics in many machine learning domains, including online learning \citep{cesa2006prediction}, model selection \citep{juditsky2008learning,rigollet2011exponential}, generative modeling \citep{genevay2018learning,petzka2017regularization,patrini2020sinkhorn}, dimensionality reduction \citep{huang2021riemannian,lin2020projection,wang2021two}.


\section{Wasserstein Policy Optimization}
\label{wasserstein}
Motivated by TRPO, here we consider a trust region based on the Wasserstein metric. Moreover, we lift the restrictive assumption that a policy has to follow a parametric distribution class by allowing all admissible policies. Then, the new policy $\pi'$ is found in each iteration to maximize the estimated expected value of the advantage function. Therefore, the \emph{Wasserstein Policy Optimization} (WPO) framework is shown as follows:
\begin{equation}
\begin{split}
& \max_{\pi' \in \mathcal{D}} \hspace{3mm} \mathbb{E}_{s\sim \rho^{\pi}_\upsilon, a \sim \pi'(\cdot|s)} [ A^{\pi} (s,a)] \\
& \text{where} \hspace{3mm} \mathcal{D} = \{\pi' | \mathbb{E}_{s\sim \rho^{\pi}_\upsilon} [d_{\text{W}} (\pi'(\cdot|s), \pi(\cdot|s))] \le \delta \},
\end{split}
\label{odrpo_problem}
\end{equation}
where the Wasserstein distance $d_\text{W}(\cdot,\cdot)$ is defined in (\ref{def_wassertein}).

In most practical cases, the reward $r$ is bounded and correspondingly, the accumulated discounted reward $R_t$ is bounded. So without loss of generality, we make the following assumption: 


\begin{restatable}{assume}{assumptionboundA}
Assume $A^{\pi} (s,a)$ is bounded, i.e.,   $\sup_{a \in \mathcal{A},s\in\mathcal{S}}{|A^{\pi} (s,a)|} \leq A^{\mbox{\tiny max}}$ for some $A^{\mbox{\tiny max}}>0$.
\label{bounded_A}
\end{restatable}

With Wasserstein metric based trust region constraint, we are able to derive the closed-form of the policy update shown in Theorem \ref{thm_opt_policy_wass}. The main idea is to form the Lagrangian dual of the
constrained optimization problem presented above, \bblue{which is inspired by the way to obtain the extremal distribution in Wasserstein DRO literature, see e.g., \citet{ kuhn2019wasserstein, blanchet_wasserstein_dro,zhao2018_dro_wasserstein}}. The detailed proof can be found in Appendix \ref{appendix:wass}. 

\begin{restatable}{thm}{thmoptpolicywass}
\textbf{(Closed-form policy update)} Let $\kappa^\pi_s(\beta,j) =  \text{argmax}_{k = 1 \dots N} \{A^{\pi} (s,a_k) -  \beta D_{kj}\}$, \bblack{where $D$ denotes the cost matrix}. If Assumption \ref{bounded_A} holds, then an optimal solution to (\ref{odrpo_problem}) is: 
\begin{equation}
    \pi^*(a_i|s) = {\sum}_{j=1}^N \pi(a_j|s) f_s^*(i, j),
\label{wass_policy_update}
\end{equation}
where $f_s^*(i, j) = 1$ if \zcyan{$i = \kappa_s^\pi(\beta^*,j)$} and $f_s^*(i, j) = 0$ otherwise, and $\beta^*$ is an optimal Lagrangian multiplier corresponds to the following dual formulation: 
\begin{equation}
\min_{\beta \ge 0} \hspace{1mm} F(\beta) =  \min_{\beta \ge 0} \hspace{1mm} \{\beta\delta + \mathbb{E}_{s \sim \rho^{\pi}_\upsilon} {\sum}_{j=1}^N \pi(a_j|s)  \max_{i = 1 \dots N} (A^{\pi} (s,a_i) -  \beta D_{ij})\}.
\label{wass_dual_formulation}
\end{equation}
Moreover, we have $\beta^* \leq \bar{\beta}$, where $\bar{\beta}:=\max_{s \in \mathcal{S}, k, j = 1 \dots N, k \ne j} {(D_{kj})^{-1}}{(A^\pi (s,a_k) - A^\pi (s,a_j))} $.
\label{thm_opt_policy_wass}
\end{restatable}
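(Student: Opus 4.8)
The plan is to solve the inner maximization over $\pi'(\cdot|s)$ for each fixed state $s$ by Lagrangian duality, and then assemble the state-wise solutions into the global policy update. First I would observe that the objective $\mathbb{E}_{s\sim\rho^\pi_\upsilon, a\sim\pi'}[A^\pi(s,a)]$ and the trust-region constraint both decompose across states once we introduce the Lagrange multiplier $\beta\ge 0$ for the single scalar constraint $\mathbb{E}_{s\sim\rho^\pi_\upsilon}[d_{\text{W}}(\pi'(\cdot|s),\pi(\cdot|s))]\le\delta$. Writing the Lagrangian
\begin{equation}
L(\pi',\beta) = \mathbb{E}_{s\sim\rho^\pi_\upsilon, a\sim\pi'}[A^\pi(s,a)] - \beta\Big(\mathbb{E}_{s\sim\rho^\pi_\upsilon}[d_{\text{W}}(\pi'(\cdot|s),\pi(\cdot|s))] - \delta\Big),
\end{equation}
the key step is to unfold the definition of $d_{\text{W}}$ in \eqref{def_wassertein} so that the transport coupling $Q$ becomes an explicit optimization variable alongside $\pi'$. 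Since $\pi'$ is exactly the first marginal of $Q$, I would eliminate $\pi'$ entirely and optimize jointly over the coupling $Q$ subject only to its second marginal equaling the fixed old policy $\pi(\cdot|s)$.

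The second step is the crux: for fixed $\beta$, the inner problem becomes, per state $s$, a maximization over couplings $Q$ with second marginal $\pi(\cdot|s)$ of $\sum_{i,j} Q_{ij}(A^\pi(s,a_i) - \beta D_{ij})$. Because only the column-sum constraint $\sum_i Q_{ij} = \pi(a_j|s)$ is active (the first marginal is free), this linear program decouples across columns $j$, and each column's mass $\pi(a_j|s)$ should be placed entirely on the row $i$ maximizing $A^\pi(s,a_i)-\beta D_{ij}$. This is precisely the index $\kappa^\pi_s(\beta,j)=\text{argmax}_k\{A^\pi(s,a_k)-\beta D_{kj}\}$, giving the transport map $f^*_s(i,j)=\1\{i=\kappa^\pi_s(\beta^*,j)\}$ and the pushforward formula \eqref{wass_policy_update}. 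Substituting this optimal coupling back recovers the dual objective $F(\beta)$ in \eqref{wass_dual_formulation}; I would invoke strong duality (the constraint set is a nonempty convex polytope and Slater holds since $\pi$ itself is strictly feasible when $\delta>0$) to guarantee that the primal optimum is attained by $\pi^*$ at an optimal $\beta^*$.

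Finally, to bound $\beta^*\le\bar\beta$, the idea is that once $\beta$ is large enough, the argmax in $\kappa^\pi_s(\beta,j)$ always returns $j$ itself, so the optimal coupling becomes the identity, $\pi^*=\pi$, and the trust-region constraint is slack; by complementary slackness the optimal multiplier cannot exceed that threshold. Concretely, $\kappa^\pi_s(\beta,j)=j$ holds as soon as $A^\pi(s,a_k)-\beta D_{kj}\le A^\pi(s,a_j)$ for all $k\ne j$, i.e. $\beta\ge (A^\pi(s,a_k)-A^\pi(s,a_j))/D_{kj}$, and taking the worst case over $s,k,j$ yields exactly $\bar\beta$. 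The main obstacle I anticipate is rigorously justifying the interchange of the outer $\min_\beta$ with the inner per-state maximization and confirming strong duality despite optimizing over the infinite-dimensional space of admissible nonparametric policies rather than a parametric class; here the finiteness of the discrete action space $\mathcal{A}$ and the boundedness from Assumption \ref{bounded_A} are what keep the dual well-defined and the optimal coupling a genuine maximizer rather than merely a supremum.
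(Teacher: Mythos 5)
Your proposal is correct and follows essentially the same route as the paper's proof: both reformulate the problem over transport couplings with the old policy $\pi(\cdot|s)$ as the fixed second marginal, invoke Slater's condition plus Assumption \ref{bounded_A} for strong duality, characterize the optimal coupling column-by-column via the argmax index $\kappa^\pi_s(\beta,j)$, and bound $\beta^*$ by observing that $\beta > \bar\beta$ forces the identity coupling. The only cosmetic differences are that the paper also dualizes the column-sum constraints $\sum_i f_s(i,j)=1$ and extracts the support condition through complementary slackness where you solve the inner per-column LP directly, and it bounds $\beta^*$ by comparing objective values rather than by your (equally valid) slackness argument.
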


\begin{remark} For ease of notation and simplicity, we assume the uniqueness of $\kappa_s^\pi(\beta,j)$ in order to form the simple expression of $f_s^*$ in Theorem \ref{thm_opt_policy_wass}. When it is not unique, a necessary condition for the  optimality of $\pi^*$ in (\ref{wass_policy_update}) is $\sum_{i \in \mathcal{K}^\pi_s(\beta,j)} f_s^*(i,j) = 1$, and $f_s^*(i, j) = 0$ for $i \notin \mathcal{K}^\pi_s(\beta,j)$, where $\mathcal{K}^\pi_s(\beta,j) =  \text{argmax}_{k = 1 \dots N} A^{\pi} (s,a_k) -  \beta D_{kj}$. The weight $f_s^*(i, j)$ for $i \in \mathcal{K}^\pi_s(\beta,j)$ could be determined through linear programming (see details in (\ref{wass_odrpo_problem_simple}) in Appendix \ref{appendix:wass}).
\end{remark}

The exact policy update for WPO  in (\ref{wass_policy_update}) requires computing the optimal Lagrangian multiplier $\beta^*$ by solving the one-dimensional subproblem (\ref{wass_dual_formulation}). A closed-form of $\beta^*$ is not easy to obtain in general, except for special cases of the distance $d(x,y)$ or cost matrix $D$. In Appendix \ref{appendix:specialdist},  we provide the closed-form of $\beta^*$ for the case when $d(x,y) = 0$ if $x = y$ and $1$ otherwise.

\textbf{WPO Policy Update:} Based on Theorem \ref{thm_opt_policy_wass}, we introduce the following WPO policy updating rule:
\begin{equation}
    \pi_{k+1}(a_i|s) =\mathbb{F}^{\textrm{WPO}}(\pi_k):= \sum_{j=1}^N \pi_k(a_j|s) f_s^k(i, j), \tag{WPO} \label{wass_policy_update_exact}
\end{equation}

\vspace{-0.2cm}

where $f_s^k(i,j) = 1$ if $i = \kappa_s^{\pi_k}(\beta_k,j)$ and $0$ otherwise. 
Note that different from (\ref{wass_policy_update}), we allow $\beta_k$ to be chosen arbitrarily and time dependently. We show that such policy update always leads to a monotonic improvement of the performance even when $\beta_k$ is not the optimal Lagrangian multiplier.  In particular, we propose two strategies to update multiplier $\beta_k$: 

\vspace{-0.2cm}


\begin{itemize}
    \item[(i)] Approximation of optimal $\beta_k$: To improve the convergence, we can approximately solve the optimal Lagrangian multiplier based on Sinkhorn divergence. More details in Section \ref{sinkhorn}. 
    \item[(ii)] Time-dependent $\beta_k$:  To improve the computational efficiency, we can simply treat $\beta_k$ as a time-dependent parameter, e.g., we can set $\beta_k$ as a diminishing sequence. In this setting, (\ref{wass_policy_update_exact}) produces the solution to the following penalty version of problem (\ref{odrpo_problem}) (with $d=d_{\text{W}}$):
\begin{equation}
\max_{\pi_{k+1}} \;\mathbb{E}_{s\sim \rho^{\pi_k}_\upsilon, a \sim {\pi_{k+1}}(\cdot|s)} [ A^{\pi_k} (s,a)] - \beta_k \mathbb{E}_{s\sim \rho^{\pi_k}_\upsilon} [d (\pi_{k+1}(\cdot|s), \pi_k(\cdot|s))].
\label{odrpo_WA}
\end{equation} 
\end{itemize}

\section{Sinkhorn Policy Optimization} \label{sinkhorn}
In this section, we introduce Sinkhorn policy optimization (SPO) by constructing trust region with Sinkhorn divergence.
In the following theorem, we derive the optimal policy update in each step when using Sinkhorn divergence based trust region. Detailed proofs are provided in Appendix \ref{appendix:sinkhorn}.

\begin{restatable}{thm}{thmoptpolicysinkhorn}
If Assumption \ref{bounded_A} holds, then the optimal solution to (\ref{odrpo_problem}) with Sinkhorn divergence is: 
\begin{equation}
    \pi^*_\lambda(a_i|s) = \sum_{j=1}^N \pi(a_j|s) f_{s,\lambda}^*(i,j),
\label{sinkhorn_policy_update}
\end{equation}
where $D$ denotes the cost matrix, $f_{s,\lambda}^*(i,j) = \frac{\exp{(\frac{\lambda}{\beta_\lambda^*}A^\pi(s,a_i) - \lambda D_{ij})}}{\sum_{k=1}^N \exp{(\frac{\lambda}{\beta_\lambda^*}A^\pi(s,a_k) - \lambda D_{kj})}}$ and $\beta_\lambda^*$ is an optimal solution to the following dual formulation:
\small
\begin{equation}
\begin{split}
& \min_{\beta \ge 0} \hspace{1mm}  F_\lambda(\beta) = \min_{\beta \ge 0} 
\Big\{ \beta \delta - \mathbb{E}_{s \sim \rho^{\pi}_\upsilon} \sum_{j=1}^N  \pi(a_j|s)(\frac{\beta}{\lambda}  + \frac{\beta}{\lambda} \ln (\pi(a_j|s)) - \frac{\beta}{\lambda} \ln [\sum_{i=1}^N \exp{(\frac{\lambda}{\beta}A^\pi(s,a_i) - \lambda D_{ij})}]) \\&\qquad\qquad \mathbb{E}_{s \sim \rho^{\pi}_\upsilon} \sum_{i=1}^N \sum_{j=1}^N  \frac{\beta}{\lambda} \frac{\exp{(\frac{\lambda}{\beta}A^\pi(s,a_i) - \lambda D_{ij})} \cdot \pi(a_j|s)}{\sum_{k=1}^N \exp{(\frac{\lambda}{\beta}A^\pi(s,a_k) - \lambda D_{kj})}}\Big \}.
\end{split}
\label{sinkhorn_dual_formulation}
\normalsize
\end{equation}

\noindent Moreover, we have $\beta_{\lambda}^* \leq \frac{2A^{\mbox {\tiny max}}}{\delta} $.
\label{thm_opt_policy_sinkhorn}
\end{restatable}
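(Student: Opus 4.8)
The plan is to derive the closed-form SPO policy update by forming the Lagrangian dual of problem \eqref{odrpo_problem} with the Sinkhorn divergence $d_\text{S}$ substituted for $d_\text{W}$, paralleling the argument behind Theorem \ref{thm_opt_policy_wass} but now exploiting the entropic term. First I would write the inner optimization over $\pi'$ together with the coupling $Q$: since $d_\text{S}(\pi'(\cdot|s),\pi(\cdot|s)|\lambda)$ is itself an infimum over joint distributions $Q \in \Pi(\pi'(\cdot|s),\pi(\cdot|s))$, the trust-region constraint couples $\pi'$ and $Q$. The key observation is that we can optimize jointly over $(\pi',Q)$, and because $\pi'$ is the first marginal of $Q$, the decision variable collapses to $Q$ alone; the objective $\mathbb{E}_{a\sim\pi'}[A^\pi(s,a)]$ becomes $\sum_{i,j} Q_{ij} A^\pi(s,a_i)$ with $\pi(a_j|s)$ fixed as the second marginal.

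Next I would introduce the Lagrange multiplier $\beta \ge 0$ for the Sinkhorn trust-region constraint $\mathbb{E}_{s}[\langle Q, D\rangle - \tfrac{1}{\lambda}h(Q)] \le \delta$, and hold the marginal constraint $\sum_i Q_{ij} = \pi(a_j|s)$ via an additional multiplier. The resulting inner problem is, for each $s$ and each column $j$, a strictly concave entropy-regularized maximization in the column $\{Q_{ij}\}_i$, whose first-order conditions yield the Gibbs/softmax form. Solving these KKT stationarity conditions gives the conditional coupling weight $f_{s,\lambda}^*(i,j) = \exp(\tfrac{\lambda}{\beta}A^\pi(s,a_i) - \lambda D_{ij}) / \sum_k \exp(\tfrac{\lambda}{\beta}A^\pi(s,a_k) - \lambda D_{kj})$, and marginalizing over $j$ against $\pi(a_j|s)$ produces \eqref{sinkhorn_policy_update}. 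Substituting this optimizer back into the Lagrangian recovers the dual objective $F_\lambda(\beta)$ in \eqref{sinkhorn_dual_formulation}; strong duality holds because the problem is convex in $Q$ with a strictly feasible point (Slater's condition), so the primal optimum is attained at $\beta_\lambda^*$.

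Finally, for the bound $\beta_\lambda^* \le \tfrac{2A^{\max}}{\delta}$, I would argue directly from the dual. The idea is that $F_\lambda$ is convex in $\beta$, and one can show that for $\beta$ large enough the derivative $F_\lambda'(\beta)$ becomes positive, forcing the minimizer to lie below the stated threshold. Concretely, $F_\lambda'(\beta) = \delta - \mathbb{E}_s[\langle Q_\beta, D\rangle - \tfrac{1}{\lambda}h(Q_\beta)]$ by the envelope theorem, so the bound follows once I control the Sinkhorn divergence term using Assumption \ref{bounded_A}: since $|A^\pi(s,a_i)| \le A^{\max}$, the advantage-driven deviation of the coupling is bounded, and the Sinkhorn distance cannot exceed roughly $2A^{\max}/\beta$, whence positivity of the derivative kicks in at $\beta = 2A^{\max}/\delta$. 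The main obstacle will be the bound derivation: the entropic term $-\tfrac{1}{\lambda}h(Q)$ can be negative and must be handled carefully so that the envelope-theorem estimate remains valid and the sign analysis of $F_\lambda'$ is clean; getting the constant $2A^{\max}$ right (rather than a looser bound) requires tracking how the softmax concentrates as $\beta$ grows and comparing against the optimal transport cost under the advantage differences.
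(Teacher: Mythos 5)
Your derivation of the update and the dual follows the paper's own proof almost step for step: collapse the decision variable to the coupling $Q$ (the paper's reformulation (\ref{sinkhorn_odrpo_problem})), dualize the trust-region and marginal constraints, solve the strictly concave entropy-regularized inner problem by stationarity to obtain the softmax coupling (\ref{sinkhorn_pij_optimum}), eliminate the marginal multipliers, and substitute back to get $F_\lambda$ in (\ref{sinkhorn_dual_formulation}); the only cosmetic difference is that you invoke Slater's condition where the paper exchanges $\max$ and $\min$ via Sion's minimax theorem, both of which are valid here. The genuine divergence is the bound $\beta_\lambda^* \leq 2A^{\mbox{\tiny max}}/\delta$. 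The paper argues on the primal side (Appendix \ref{UB_sinkhorn}): it shows the primal objective evaluated at $Q^*(\beta)$ is decreasing in $\beta$ (Lemma \ref{lemma_sinkhorn_objective_decrease_with_beta}, a long rearrangement argument), shows $Q^*(2A^{\mbox{\tiny max}}/\delta)$ is feasible for (\ref{constraint1_sinkhorn_primal}) (Lemma \ref{lemma_sinkhorn_has_bounded_feasible_beta}), and concludes by contradiction (Proposition \ref{prop_opt_sinkhorn_beta_bounded}). You instead argue on the dual side: by Danskin's theorem, $F_\lambda'(\beta) = \delta - \mathbb{E}_{s}\left[\langle Q_\beta, D\rangle - \frac{1}{\lambda}h(Q_\beta)\right]$, and the estimate that this constraint value is at most $2A^{\mbox{\tiny max}}/\beta$ forces $F_\lambda'(\beta) > 0$ for $\beta > 2A^{\mbox{\tiny max}}/\delta$, so no minimizer can lie beyond that threshold. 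This route is sound: the entropy term makes the inner maximizer $Q_\beta$ unique, so Danskin applies (the paper itself relies on this differentiability when writing the closed-form gradient in Appendix \ref{appendix:gradient}), and you do not even need convexity of $F_\lambda$ --- strict positivity of the derivative past the threshold already suffices. Moreover, the key estimate you flag as the ``main obstacle'' is precisely the computation in the paper's Lemma \ref{lemma_sinkhorn_has_bounded_feasible_beta}: writing the constraint value via the softmax form of $Q_\beta$ splits it into two terms, each bounded by $A^{\mbox{\tiny max}}/\beta$ using $|A^\pi| \leq A^{\mbox{\tiny max}}$ and $\sum_k \exp(\frac{\lambda}{\beta}A^\pi(s,a_k)-\lambda D_{kj}) \geq \exp(\frac{\lambda}{\beta}A^\pi(s,a_j))$; the paper states this only at $\beta = 2A^{\mbox{\tiny max}}/\delta$, but its proof gives it for every $\beta$, and the potentially negative entropy term is absorbed into this joint estimate rather than handled separately, so your worry there dissolves. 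The trade-off: your dual-side argument bypasses the paper's Lemma \ref{lemma_sinkhorn_objective_decrease_with_beta} entirely (the longest part of its bound proof), while the paper's primal-side argument stays elementary --- feasibility plus monotonicity plus contradiction --- and never needs an envelope theorem.
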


In contrast to the Wasserstein dual formulation (\ref{wass_dual_formulation}), the objective in the Sinkhorn dual formulation (\ref{sinkhorn_dual_formulation}) is differentiable in $\beta$ and admits closed-form gradients (shown in Appendix \ref{appendix:gradient}). With this gradient information, we can use gradient-based global optimization algorithms \citep{wales1998_basin_hopping, zhan2006_monte_carlo_basin, leary2000_global_optimization} to find a global optimal solution $\beta_\lambda^*$ to (\ref{sinkhorn_dual_formulation}). 

Next, we show that if the entropic regularization parameter $\lambda$ is large enough, then the optimal solution $\beta_\lambda^*$ is a close approximation to the $\beta^*$ of Wasserstein dual formulation. Proof is provided in Appendix \ref{appendix:optbetaconvergence}.


\begin{restatable}{thm}{thmoptbetasinkhornwassrelationship}
Define $\beta_{\text{UB}} = \max\{ \frac{2A^{\mbox {\tiny max}}}{\delta}, \bar{\beta}\}$. We have: 
\begin{enumerate}
\item 
 $F_\lambda(\beta)$ converges to $F(\beta)$ uniformly on $[0, \beta_{\text{UB}}]$: 
\hlight{$\Lim{\lambda \xrightarrow{} \infty} \Sup{0 \le \beta \le \beta_{\text{UB}}} \Big| F_\lambda(\beta) - F(\beta) \Big| \le \Lim{\lambda \xrightarrow{} \infty} \frac{\beta_{\text{UB}}}{\lambda} N \ln N = 0$}.
\item
$\begin{aligned} \lim_{\lambda \xrightarrow{} \infty} \text{argmin}_{0 \le \beta \le \beta_{\text{UB}}} F_{\lambda} (\beta) \subseteq \text{argmin}_{0 \le \beta \le \beta_{\text{UB}}} F(\beta) \end{aligned}$.
\end{enumerate}
\label{thm_opt_beta_sinkhorn_wass_relationship}
\end{restatable}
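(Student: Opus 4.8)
The plan is to reduce the comparison of the two dual objectives to the elementary log-sum-exp inequality, and then obtain the argmin statement from a standard uniform-convergence argument on the compact set $[0,\beta_{\text{UB}}]$.

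First I would simplify the unwieldy Sinkhorn dual $F_\lambda(\beta)$ in (\ref{sinkhorn_dual_formulation}). For each fixed $j$ the summand $\exp(\tfrac{\lambda}{\beta}A^\pi(s,a_i)-\lambda D_{ij})/\sum_k\exp(\tfrac{\lambda}{\beta}A^\pi(s,a_k)-\lambda D_{kj})$ sums to $1$ over $i$, so the entire double-sum term collapses to $\frac{\beta}{\lambda}\mathbb{E}_{s\sim\rho^\pi_\upsilon}\sum_j\pi(a_j|s)$, which exactly cancels the matching $-\frac{\beta}{\lambda}\mathbb{E}_{s\sim\rho^\pi_\upsilon}\sum_j\pi(a_j|s)$ piece inside the first bracket. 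Writing $Z_j(s):=\sum_{k=1}^N\exp\!\big(\tfrac{\lambda}{\beta}(A^\pi(s,a_k)-\beta D_{kj})\big)$ and recognizing $-\sum_j\pi(a_j|s)\ln\pi(a_j|s)=H(\pi(\cdot|s))\ge 0$ as the entropy, this leaves the clean form
\begin{equation*}
F_\lambda(\beta) = \beta\delta + \frac{\beta}{\lambda}\,\mathbb{E}_{s\sim\rho^\pi_\upsilon}\!\big[H(\pi(\cdot|s))\big] + \mathbb{E}_{s\sim\rho^\pi_\upsilon}\sum_{j=1}^N \pi(a_j|s)\,\frac{\beta}{\lambda}\ln Z_j(s).
\end{equation*}

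Next I would observe that $\frac{\beta}{\lambda}\ln Z_j(s)$ is precisely the log-sum-exp soft-max of the values $A^\pi(s,a_k)-\beta D_{kj}$ at inverse temperature $t=\lambda/\beta$, whose hard maximum over $k$ is exactly the inner $\max$ appearing in the Wasserstein dual $F(\beta)$ of (\ref{wass_dual_formulation}). The standard two-sided bound $\max_k x_k \le \frac{1}{t}\ln\sum_k e^{t x_k}\le \max_k x_k+\frac{\ln N}{t}$ with $t=\lambda/\beta$ gives $0\le \frac{\beta}{\lambda}\ln Z_j(s)-\max_k(A^\pi(s,a_k)-\beta D_{kj})\le\frac{\beta}{\lambda}\ln N$. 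Subtracting $F(\beta)$, the $\beta\delta$ and the $\max$-terms cancel, so $F_\lambda(\beta)-F(\beta)$ equals the nonnegative entropy term plus the nonnegative soft-max gap. Using $H(\pi(\cdot|s))\le\ln N$ and the termwise gap bound $\frac{\beta}{\lambda}\ln N$, replacing $\beta$ by its upper bound $\beta_{\text{UB}}$ and bounding the $N$ summands crudely yields $\sup_{0\le\beta\le\beta_{\text{UB}}}|F_\lambda(\beta)-F(\beta)|\le\frac{\beta_{\text{UB}}}{\lambda}N\ln N\to 0$, which is claim~1.

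Claim~2 then follows from the classical fact that uniform convergence of objectives forces every limit point of minimizers into the minimizer set of the limit. Concretely I would take $\lambda_n\to\infty$ with $\beta_n\in\text{argmin}_{[0,\beta_{\text{UB}}]}F_{\lambda_n}$, and by compactness of $[0,\beta_{\text{UB}}]$ pass to a subsequence with $\beta_n\to\bar\beta$. For every fixed $\beta$, optimality gives $F_{\lambda_n}(\beta_n)\le F_{\lambda_n}(\beta)$; letting $n\to\infty$ and using claim~1 on the right, and $|F_{\lambda_n}(\beta_n)-F(\beta_n)|\le\sup|F_{\lambda_n}-F|\to0$ together with continuity of $F$ (a finite maximum of affine functions of $\beta$, integrated against $\rho^\pi_\upsilon$, hence continuous and convex) on the left, one obtains $F(\bar\beta)\le F(\beta)$ for all $\beta\in[0,\beta_{\text{UB}}]$, i.e. $\bar\beta\in\text{argmin}\,F$.

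The main obstacle is not claim~2, which is routine, but the bookkeeping in the first step: verifying that the double-sum term cancels exactly and handling the endpoint $\beta=0$, where the inverse temperature $\lambda/\beta$ diverges. There one checks that $\frac{\beta}{\lambda}\ln Z_j(s)$ extends continuously to $\max_k A^\pi(s,a_k)$ and that the entropy term vanishes, so both the simplified formula and the uniform bound remain valid up to the boundary of $[0,\beta_{\text{UB}}]$; everything after that reduces to the log-sum-exp inequality.
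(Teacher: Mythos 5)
Your proposal is correct, and it reaches the theorem by a genuinely cleaner route than the paper's own proof. Both arguments ultimately rest on the same fact---the gap between the log-sum-exp at inverse temperature $\lambda/\beta$ and the hard maximum is at most $\frac{\beta}{\lambda}\ln N$---but the executions differ in three ways. First, you simplify $F_\lambda$ exactly, observing that the double-sum term collapses to $\beta/\lambda$ and cancels the matching term in the first bracket; the paper instead splits everything by the triangle inequality and carries a spurious $2\beta/\lambda$ term that only disappears in the limit. Second, your two-sided log-sum-exp inequality gives a nonasymptotic uniform bound $\sup_{0 \le \beta \le \beta_{\text{UB}}}\big|F_\lambda(\beta)-F(\beta)\big| \le \frac{2\beta_{\text{UB}}\ln N}{\lambda}$ valid for every $\lambda$ (sharper than the stated $\frac{\beta_{\text{UB}}}{\lambda}N\ln N$ once $N\ge 2$), whereas the paper rewrites the log-sum-exp as the maximum plus $\frac{\beta}{\lambda}\ln\sum_i \exp(-\frac{\lambda}{\beta}\epsilon_s^\pi(\beta,i,j))$ and runs an asymptotic sandwich argument identifying the limit of that logarithm as $\ln|\mathcal{K}^\pi_s(\beta,j)|$; that argument leans on the quantity $\sigma_s(j)=\min_{0\le\beta\le\beta_{\text{UB}}}\min_{i\notin\mathcal{K}^\pi_s(\beta,j)}\epsilon_s^\pi(\beta,i,j)$ being strictly positive, which is delicate because the index set of the inner minimum changes with $\beta$ and the gap of an excluded index can tend to zero as $\beta$ approaches a value where the argmax set switches---a subtlety your bound sidesteps entirely, since $\ln\sum_i\exp(-\frac{\lambda}{\beta}\epsilon_s^\pi(\beta,i,j))\le \ln N$ holds trivially because every exponent is nonpositive. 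Third, for claim 2 the paper invokes epi-convergence properties from variational analysis, while your compactness/subsequence argument is self-contained and proves exactly the natural reading of the set-limit statement (every limit point of minimizers of $F_\lambda$ minimizes $F$); it also handles the endpoint $\beta=0$, where $\lambda/\beta$ diverges and $F_\lambda$ must be defined by continuous extension---a point the paper never addresses, even though existence of its minimizers $\beta^\lambda$ implicitly requires it. The only thing the paper's longer route buys is finer asymptotic information (the gap behaves like $\frac{\beta}{\lambda}\ln|\mathcal{K}^\pi_s(\beta,j)|$, i.e., is governed by the number of maximizers), which the theorem itself does not need.
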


\vspace{-0.2cm}
Although it is difficult to obtain the exact value of the optimal solution $\beta^*$ to the  Wasserstein dual formulation (\ref{wass_dual_formulation}), the above theorem suggests that we can approximate $\beta^*$ via $\beta_\lambda^*$ by setting up a relative large $\lambda$. In practice, we can also adopt a smooth homotopy approach by setting an increasing sequence $\lambda_k$ for each {iteration} and letting $\lambda_k \to\infty$. 

\paragraph{SPO Policy Update:} Based on Theorem \ref{thm_opt_policy_sinkhorn}, we introduce the following SPO policy updating rule: 
\begin{equation}
\pi_{k+1}(a_i|s) = \mathbb{F}^{\textrm{SPO}}(\pi_k)   =  \sum_{j=1}^N \pi_k(a_j|s) f_{s,\lambda_k}^k (i,j). \tag{SPO} \label{sinkhorn_policy_update_exact}
\end{equation}

\vspace{-0.3cm}

Here $f_{s,\lambda_k}^k (i,j) = \frac{\exp{(\frac{\lambda_k}{\beta_k}A^{\pi_k}(s,a_i) - \lambda_k D_{ij})}}{{\sum}_{l=1}^N \exp{(\frac{\lambda_k}{\beta_k}A^{\pi_k}(s,a_{l}) - \lambda_k D_{lj})}}$, $\lambda_k\geq0$ and $\beta_k\geq0$ are some control parameters. The parameter $\beta_k$  can be either  computed via solving the one-dimensional subproblem (\ref{sinkhorn_dual_formulation}) or simply set as a diminishing sequence. The proper setup of $\lambda_k$ can effectively adjust the trade-off between convergence speed and final performance. More details are provided in the ablation study in Section \ref{section_experiments}. 








\section{Theoretical Analysis}
\label{analysis}

We first show that SPO policy update converges to WPO policy update as the regularization parameter increases (i.e., $\lambda \xrightarrow{} \infty$). The detailed proof is provided in Appendix \ref{appendix:spo_converge_to_wpo}. 


\begin{restatable}{lem}{lemmaspoconvergetowpo}
As $\lambda_k \xrightarrow{} \infty$, SPO update converges to WPO update: $\lim_{\lambda_k \xrightarrow{} \infty}\mathbb{F}^{\textrm{SPO}}(\pi_k) \in \mathbb{F}^{\textrm{WPO}}(\pi_k)$. 
\label{lemma_spo_converge_to_wpo}
\end{restatable}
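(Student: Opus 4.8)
The plan is to reduce the statement to the elementary fact that a Gibbs/softmax distribution concentrates on its maximizers as the temperature tends to zero. Both updates are finite convex combinations of the old policy with the same transport structure, $\mathbb{F}^{\textrm{SPO}}(\pi_k)(a_i|s) = \sum_{j=1}^N \pi_k(a_j|s)\, f_{s,\lambda_k}^k(i,j)$ and $\mathbb{F}^{\textrm{WPO}}(\pi_k)(a_i|s) = \sum_{j=1}^N \pi_k(a_j|s)\, f_s^k(i,j)$, with the same $\beta_k$ in both. Hence it suffices to prove that the transport weights converge, $f_{s,\lambda_k}^k(i,j) \to f_s^k(i,j)$ as $\lambda_k \to \infty$, for every fixed state $s$ and indices $i,j$; since the sum over $j$ is finite, convergence of the weights passes directly to convergence of the update.

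First I would rewrite the Sinkhorn weight by pulling $\lambda_k$ out of the exponent. Writing $g_l := \tfrac{1}{\beta_k} A^{\pi_k}(s,a_l) - D_{lj}$ (with $\beta_k > 0$), the weight becomes $f_{s,\lambda_k}^k(i,j) = \exp(\lambda_k g_i)\big/\sum_{l=1}^N \exp(\lambda_k g_l)$, i.e.\ a softmax over the scores $g_l$ at inverse temperature $\lambda_k$. Because $\beta_k>0$, scaling the scores by $\beta_k$ leaves the maximizer set unchanged, so $\mathrm{argmax}_l g_l = \mathrm{argmax}_l \{A^{\pi_k}(s,a_l) - \beta_k D_{lj}\}$, which is exactly the index (set) $\kappa_s^{\pi_k}(\beta_k,j)$ defining the WPO weight from Theorem \ref{thm_opt_policy_wass}.

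Next I would take the limit $\lambda_k \to \infty$. In the generic case where the maximizer $i^\star = \kappa_s^{\pi_k}(\beta_k,j)$ is unique, dividing numerator and denominator by $\exp(\lambda_k g_{i^\star})$ gives $f_{s,\lambda_k}^k(i^\star,j) = 1/(1+\sum_{l\ne i^\star}\exp(\lambda_k(g_l-g_{i^\star})))$; since $g_l - g_{i^\star} < 0$ for every $l\ne i^\star$, each such exponential tends to $0$, so $f_{s,\lambda_k}^k(i^\star,j)\to 1$ and $f_{s,\lambda_k}^k(i,j)\to 0$ for $i\ne i^\star$. This limit is precisely the indicator $f_s^k(\cdot,j)$, and summing against $\pi_k(a_j|s)$ yields $\lim_{\lambda_k\to\infty}\mathbb{F}^{\textrm{SPO}}(\pi_k) = \mathbb{F}^{\textrm{WPO}}(\pi_k)$.

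The one subtlety — and the reason the statement is phrased with a set membership ``$\in$'' rather than an equality — is the degenerate case where the argmax is attained by several indices. Here the same softmax limit shows the mass spreads uniformly over the tie set $\mathcal{K}_s^{\pi_k}(\beta_k,j)$, so $f_{s,\lambda_k}^k(i,j)\to 1/|\mathcal{K}_s^{\pi_k}(\beta_k,j)|$ for $i$ in the set and $0$ otherwise. This limiting weight satisfies exactly the necessary optimality condition recorded in the Remark after Theorem \ref{thm_opt_policy_wass} (namely $\sum_{i\in\mathcal{K}}f_s^*(i,j)=1$ and $f_s^*(i,j)=0$ off $\mathcal{K}$), so the resulting policy is an admissible element of the WPO update set. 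The main obstacle is thus not the analytic limit, which is routine, but making this tie-breaking bookkeeping precise and verifying the limiting combination is a valid WPO update; additionally, if $\beta_k$ is taken to be the Sinkhorn dual optimizer $\beta_{\lambda_k}^*$ rather than held fixed, one would further invoke the convergence $\beta_{\lambda_k}^*\to\beta^*$ from Theorem \ref{thm_opt_beta_sinkhorn_wass_relationship} before passing to the softmax limit.
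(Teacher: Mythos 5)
Your proof is correct and takes essentially the same approach as the paper's: the paper rewrites the Sinkhorn weight in terms of the nonnegative gaps $\xi_s^k(i,j)$ between the maximal score and the score of $i$ (equivalent to your dividing numerator and denominator by the max exponential), takes the zero-temperature softmax limit to obtain uniform mass $1/|\hat{\mathcal{K}}^{\pi_k}_s(\beta_k,j)|$ on the argmax set, and identifies the resulting combination as an admissible WPO update exactly as you do for the tie case. Your closing remark about handling $\beta_k = \beta_{\lambda_k}^*$ varying with $\lambda_k$ via Theorem \ref{thm_opt_beta_sinkhorn_wass_relationship} is a sensible addition, but it is not needed for (or used in) the paper's proof, which treats $\beta_k$ as a fixed parameter.
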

We then provide a theoretical justification that WPO policy update  (and SPO with $\lambda \xrightarrow{} \infty$) are always guaranteed to improve the true performance ${J}$ monotonically if we have access to the true advantage function. If the advantage function can only be evaluated inexactly with limited samples, then an extra estimation error (measured by the largest absolute entry $\|\cdot \|_\infty$) will occur. Proof can be found in Appendix \ref{appendix:monotonic}.

\vspace{0.1cm}

\begin{restatable}{thm}{thmmonotonicimprovementwpoandspo}
    \textbf{(Performance improvement)} For any initial state distribution $\upsilon$ and any $\beta_k \geq 0$, if $||\hat{A}^{\pi} - A^{\pi} ||_\infty \le \epsilon$ for some $\epsilon > 0$,  let $\hat{\mathcal{K}}^{\pi_k}_s(\beta_k,j) = \text{argmax}_{i = 1, \dots, N} \{\hat{A}^{\pi_k} (s, a_i) - \beta_k D_{ij}\}$, WPO policy update (and SPO with $\lambda \xrightarrow{} \infty$) guarantee the following performance improvement bound when the inaccurate advantage function $\hat{A}^{\pi}$ is used,

\vspace{-0.2cm}

\begin{equation}
{J}(\pi_{k+1}) \ge {J}(\pi_{k}) + \beta_k \mathbb{E}_{s \sim \rho_\upsilon^{\pi_{k+1}}} \sum_{j=1}^N \pi_{k}(a_j|s) \sum_{i \in \hat{\mathcal{K}}^{\pi_k}_s(\beta_k, j)} f_s^k(i, j) D_{ij}  - \frac{2\epsilon}{1-\gamma}. \label{monotonic_improvement_wpo_spo}
\end{equation}
\label{thm_monotonic_improvement_wpo_and_spo}
\end{restatable}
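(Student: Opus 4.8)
The plan is to combine the performance difference lemma of \citet{kakade2002_approximatelyoptimal} with the pointwise optimality of the argmax that defines the WPO update. First I would invoke the identity $J(\pi_{k+1}) = J(\pi_k) + \mathbb{E}_{s \sim \rho^{\pi_{k+1}}_\upsilon,\, a \sim \pi_{k+1}}[A^{\pi_k}(s,a)]$, so that proving the claim reduces to lower-bounding the expected advantage of the new policy measured against the old one. Substituting the update rule $\pi_{k+1}(a_i|s) = \sum_{j} \pi_k(a_j|s) f_s^k(i,j)$ and exchanging the order of summation gives $\mathbb{E}_{a \sim \pi_{k+1}}[A^{\pi_k}(s,a)] = \sum_{j} \pi_k(a_j|s) \sum_{i \in \hat{\mathcal{K}}^{\pi_k}_s(\beta_k,j)} f_s^k(i,j)\, A^{\pi_k}(s,a_i)$, where the inner sum is restricted to the estimated maximizing set because $f_s^k(\cdot,j)$ is supported there.

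The crux is a pointwise inequality. For every $i \in \hat{\mathcal{K}}^{\pi_k}_s(\beta_k,j)$, the defining optimality $\hat{A}^{\pi_k}(s,a_i) - \beta_k D_{ij} \geq \hat{A}^{\pi_k}(s,a_j) - \beta_k D_{jj}$ together with $D_{jj} = d(a_j,a_j) = 0$ yields $\hat{A}^{\pi_k}(s,a_i) \geq \hat{A}^{\pi_k}(s,a_j) + \beta_k D_{ij}$. Converting from the estimated to the true advantage twice, once for index $i$ and once for index $j$, via $\|\hat{A}^{\pi_k} - A^{\pi_k}\|_\infty \leq \epsilon$, gives $A^{\pi_k}(s,a_i) \geq A^{\pi_k}(s,a_j) + \beta_k D_{ij} - 2\epsilon$; this double conversion is the origin of the factor $2\epsilon$ rather than $\epsilon$. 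Since the weights satisfy $\sum_{i \in \hat{\mathcal{K}}^{\pi_k}_s(\beta_k,j)} f_s^k(i,j) = 1$ for each $j$, multiplying by these weights and summing isolates $A^{\pi_k}(s,a_j)$ out of the inner sum, leaving the lower bound $\sum_{j} \pi_k(a_j|s) A^{\pi_k}(s,a_j) + \beta_k \sum_{j} \pi_k(a_j|s) \sum_{i} f_s^k(i,j) D_{ij} - 2\epsilon$ for $\mathbb{E}_{a\sim\pi_{k+1}}[A^{\pi_k}(s,a)]$.

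I would then use the standard fact that a policy's advantage vanishes in expectation under itself, $\sum_{j} \pi_k(a_j|s) A^{\pi_k}(s,a_j) = V^{\pi_k}(s) - V^{\pi_k}(s) = 0$, to drop the first term. Taking the unnormalized expectation over $s \sim \rho^{\pi_{k+1}}_\upsilon$ and noting that its total mass is $\sum_s \rho^{\pi_{k+1}}_\upsilon(s) = 1/(1-\gamma)$ turns the constant $-2\epsilon$ into $-2\epsilon/(1-\gamma)$ and reproduces exactly the right-hand side of (\ref{monotonic_improvement_wpo_spo}). For the SPO-with-$\lambda \to \infty$ case I would simply appeal to Lemma \ref{lemma_spo_converge_to_wpo}, which guarantees that the SPO update converges to a WPO update, so that the same bound holds in the limit.

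The main obstacle I anticipate is bookkeeping rather than conceptual: I must keep the update, defined through the \emph{estimated} advantage $\hat{A}^{\pi_k}$ via $\hat{\mathcal{K}}$, carefully separated from the \emph{true} advantage $A^{\pi_k}$ entering the performance difference lemma, so that the error enters in exactly the two places producing the $2\epsilon$ gap. I must also handle the tie-breaking case $|\hat{\mathcal{K}}^{\pi_k}_s(\beta_k,j)| > 1$ using the normalization $\sum_i f_s^k(i,j) = 1$ rather than assuming a unique maximizer, and track the unnormalized discounted visitation measure $\rho^{\pi_{k+1}}_\upsilon$ so that the estimation-error term scales correctly with $1/(1-\gamma)$.
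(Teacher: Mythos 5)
Your proposal is correct and follows essentially the same route as the paper's proof: the performance difference lemma, substitution of the update $\pi_{k+1}(a_i|s)=\sum_j \pi_k(a_j|s)f_s^k(i,j)$, the pointwise inequality $A^{\pi_k}(s,a_i) \geq A^{\pi_k}(s,a_j) + \beta_k D_{ij} - 2\epsilon$ for $i \in \hat{\mathcal{K}}^{\pi_k}_s(\beta_k,j)$ obtained by passing through $\hat{A}^{\pi_k}$ twice, and finally the zero-mean property $\mathbb{E}_{a\sim\pi_k}[A^{\pi_k}(s,a)]=0$ together with the total mass $1/(1-\gamma)$ of $\rho_\upsilon^{\pi_{k+1}}$ to produce the $-2\epsilon/(1-\gamma)$ term. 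Your explicit treatment of ties via $\sum_i f_s^k(i,j)=1$ and the appeal to Lemma \ref{lemma_spo_converge_to_wpo} for the SPO limit match the paper's handling as well.
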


\vspace{-0.2cm}

\hlight{The value of $\epsilon$, which quantifies the approximation error of the advantage function, is dependent on various factors such as the advantage estimation algorithm used and the number of samples }\citep{schulmanl2016_gae_continuous}\hlight{. It is worth noting that the improvement bound of NPG/TRPO }\citep{cen2021fast}\hlight{ includes the same additional term $- \frac{2\epsilon}{1-\gamma}$, which indicates that our methods offer comparable theoretical performance guarantees to KL based updates.} In the following, we show that with a decreasing schedule of the multiplier $\beta_k$, both WPO and SPO policy updates have their values $J(\pi_k)$ converging to the optimal $J^\star = \max_{\pi} J(\pi)$ on the tabular domain. To start, for $k$-th iteration, we consider   (\ref{wass_policy_update_exact}) and (\ref{sinkhorn_policy_update_exact}) (with arbitrary $\lambda>0$) whose updates $\pi_{k+1}$ are optimal solutions to (\ref{odrpo_WA}) with $d$ being $d_{\text{W}}$ and $d_{\text{S}}$ respectively.




\begin{restatable}{assume}{assumptionfiniteness}
The state space and the action space are both finite, the reward function $r$ is non-negative, and the initial distribution covers all state.
\label{finiteness_B}
\end{restatable}

Note that once state and action spaces are both finite, the reward can be assumed non-negative without loss of generality, as we can always add $\max_{s,a}|r(s,a)|$ to the reward function without changing the optimal policy and the order of the policies. 
%
Defining the optimal value function $V^\star(s) = \max_{\pi} \mathbb{E}[R_t|s_t=s]$, we have the following theorem, whose proof is in Appendix \ref{appendix:globalconvergenceproof} 
and is inspired by \cite{bhandari2021linear}. 




\begin{restatable}{thm}{thmglobalconvergence}
\label{thm: GlobalConvergence} \textbf{(Global convergence)}
Under Assumption \ref{finiteness_B}, 
 we have for any $\beta_k \geq 0$, (\ref{wass_policy_update_exact})  satisfies that 
\begin{equation} \label{eq: iterativeWsRL}
\|V^\star - V^{\pi_{k+1}}\|_{\infty} \leq 
\gamma    \|V^\star - V^{\pi_k}\|_{\infty} +\beta_k \|D\|_\infty, 
\end{equation}
and  (\ref{sinkhorn_policy_update_exact}) satisfies that 
\begin{equation} \label{eq: iterativeWsRL2}
\|V^\star - V^{\pi_{k+1}}\|_{\infty} \leq 
\gamma    \|V^\star - V^{\pi_k}\|_{\infty} +2\frac{\beta_k}{1-\gamma} \left(\|D\|_\infty+2\frac{\log N}{\lambda}\right).
\end{equation}
If $\lim_{k\rightarrow \infty}\beta_k= 0$, we further have
$\lim_{k\rightarrow \infty} J(\pi_{k}) = J^\star.$
\end{restatable}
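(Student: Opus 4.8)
The plan is to recast each policy update as an \emph{approximate Bellman optimality step} and then run a value-iteration-style contraction in the sup norm. Write $T$ for the Bellman optimality operator $(TV)(s)=\max_a[r(s,a)+\gamma\sum_{s'}P(s'|s,a)V(s')]$, and write $T_\pi$ and $P_\pi$ for the Bellman evaluation operator and the transition matrix of a fixed policy $\pi$, so that $V^\pi$ is the unique fixed point of $T_\pi$ while $V^\star=TV^\star=T_{\pi^\star}V^\star$ for an optimal policy $\pi^\star$. The central quantity is the expected one-step advantage of the new policy under the old one, $\bar A_{k+1}(s):=\sum_i\pi_{k+1}(a_i|s)A^{\pi_k}(s,a_i)$; since $A^{\pi_k}=Q^{\pi_k}-V^{\pi_k}$ and $\max_a A^{\pi_k}(s,a)=(TV^{\pi_k})(s)-V^{\pi_k}(s)$, one has the identity $T_{\pi_{k+1}}V^{\pi_k}=V^{\pi_k}+\bar A_{k+1}$, which is what couples the update to $T$.

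For (WPO) I would first establish the key inequality $T_{\pi_{k+1}}V^{\pi_k}\ge TV^{\pi_k}-\beta_k\|D\|_\infty\mathbf{1}$. Because $\kappa_s^{\pi_k}(\beta_k,j)$ maximizes $A^{\pi_k}(s,a_i)-\beta_k D_{ij}$, comparing it against a greedy action $a^\star_s\in\argmax_a A^{\pi_k}(s,a)$ gives $A^{\pi_k}(s,a_{\kappa})\ge\max_a A^{\pi_k}(s,a)-\beta_k D_{a^\star_s,j}\ge\max_a A^{\pi_k}(s,a)-\beta_k\|D\|_\infty$, and averaging over $j\sim\pi_k(\cdot|s)$ yields $\bar A_{k+1}(s)\ge\max_a A^{\pi_k}(s,a)-\beta_k\|D\|_\infty$. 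Comparing $\kappa$ instead against the index $j$ itself (using $D_{jj}=0$ and $D\ge 0$) gives $\bar A_{k+1}(s)\ge\mathbb{E}_{a\sim\pi_k}[A^{\pi_k}(s,a)]=0$, i.e. $T_{\pi_{k+1}}V^{\pi_k}\ge V^{\pi_k}$, and monotonicity of $T_{\pi_{k+1}}$ upgrades this to $V^{\pi_{k+1}}\ge T_{\pi_{k+1}}V^{\pi_k}$. Chaining with $TV^{\pi_k}\ge T_{\pi^\star}V^{\pi_k}=V^\star+\gamma P_{\pi^\star}(V^{\pi_k}-V^\star)$ (using that $T_{\pi^\star}$ is affine and $T_{\pi^\star}V^\star=V^\star$) gives $V^\star-V^{\pi_{k+1}}\le\gamma P_{\pi^\star}(V^\star-V^{\pi_k})+\beta_k\|D\|_\infty\mathbf{1}$; taking the sup norm, and using $V^\star-V^{\pi_k}\ge 0$ with $P_{\pi^\star}$ row-stochastic, yields \eqref{eq: iterativeWsRL}.

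For (SPO) the same skeleton applies but the hard argmax becomes a softmax over the logits $g_i=\tfrac{\lambda}{\beta_k}A^{\pi_k}(s,a_i)-\lambda D_{ij}$. I would invoke the log-sum-exp identity $\sum_i f_i g_i=\log\sum_i e^{g_i}-H(f)\ge\max_i g_i-\log N$ (entropy bounded by $\log N$), together with $A^{\pi_k}(s,a_i)=\tfrac{\beta_k}{\lambda}g_i+\beta_k D_{ij}$ and $D\ge 0$, to obtain the approximate step $\bar A_{k+1}(s)\ge\max_a A^{\pi_k}(s,a)-\beta_k\|D\|_\infty-\tfrac{\beta_k\log N}{\lambda}$. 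The new difficulty is that softmax no longer guarantees $\bar A_{k+1}\ge 0$, so the monotonicity shortcut is unavailable. Instead I split $V^\star-V^{\pi_{k+1}}=(V^\star-T_{\pi_{k+1}}V^{\pi_k})+(T_{\pi_{k+1}}V^{\pi_k}-V^{\pi_{k+1}})$: the first term is controlled exactly as above, while for the second I write $T_{\pi_{k+1}}V^{\pi_k}-V^{\pi_{k+1}}=\gamma P_{\pi_{k+1}}(V^{\pi_k}-V^{\pi_{k+1}})$ and bound $V^{\pi_k}-V^{\pi_{k+1}}=-(I-\gamma P_{\pi_{k+1}})^{-1}\bar A_{k+1}$ via the performance-difference/resolvent identity using the lower bound on $\bar A_{k+1}$; this is exactly where the $\tfrac{1}{1-\gamma}$ factor (and the constants) in \eqref{eq: iterativeWsRL2} arise.

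Finally, both displays have the form $e_{k+1}\le\gamma e_k+c_k$ with $e_k=\|V^\star-V^{\pi_k}\|_\infty$, $\gamma\in(0,1)$, and $c_k\to 0$ whenever $\beta_k\to 0$ (for SPO at any fixed $\lambda>0$). Unrolling gives $e_k\le\gamma^k e_0+\sum_{i=0}^{k-1}\gamma^{k-1-i}c_i$, and since the convolution of a geometric sequence with a null sequence is null, $e_k\to 0$, i.e. $V^{\pi_k}\to V^\star$ uniformly. As $J(\pi_k)=\mathbb{E}_{s_0\sim\upsilon}[V^{\pi_k}(s_0)]$ and $J^\star=\mathbb{E}_{s_0\sim\upsilon}[V^\star(s_0)]$ (Assumption \ref{finiteness_B} ensures a policy simultaneously optimal at every state and that $\upsilon$ covers all states), sup-norm convergence forces $J(\pi_k)\to J^\star$. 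I expect the main obstacle to be the (SPO) case: quantifying the softmax/log-sum-exp gap cleanly and, more importantly, absorbing the resulting non-monotonicity through the resolvent $(I-\gamma P_{\pi_{k+1}})^{-1}$ to land the $\tfrac{1}{1-\gamma}$ factor, whereas (WPO) closes immediately through exact monotone improvement.
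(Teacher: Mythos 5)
Your proposal is correct, and its skeleton---recasting each update as an approximate Bellman optimality step and then contracting in the sup norm---is the same as the paper's; where you genuinely diverge is in how the one-step inequalities are derived, particularly for SPO. The paper never touches the closed-form weights: it uses the fact that $\pi_{k+1}$ optimizes the penalized objective (\ref{odrpo_WA}), tests that optimality against the greedy policy and against $\pi_k$, and converts the penalty into additive error via the two-sided bounds $0\le d_{\text{W}}\le \|D\|_\infty$ and $-2\tfrac{\log N}{\lambda}\le d_{\text{S}}\le \|D\|_\infty$ (the $2\log N$ arising from lower-bounding the Sinkhorn divergence over all $N\times N$ couplings). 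You instead read the inequalities directly off the explicit updates: the argmax comparison for (\ref{wass_policy_update_exact}) and the log-sum-exp/entropy identity for the softmax weights of (\ref{sinkhorn_policy_update_exact}), which localizes the entropy error to a single column and yields sharper constants---$\tfrac{\log N}{\lambda}$ in place of $\tfrac{2\log N}{\lambda}$ and a single factor $\tfrac{1}{1-\gamma}$ in place of $\tfrac{2}{1-\gamma}$---both of which still imply (\ref{eq: iterativeWsRL2}). Your resolvent step through $(I-\gamma P_{\pi_{k+1}})^{-1}$ is computationally identical to the paper's repeated application of the monotone contraction $T^{\pi_{k+1}}$ (both are the geometric series producing the $\tfrac{1}{1-\gamma}$ factor), so the error-propagation mechanics agree. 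What each route buys: yours is self-contained, not relying on the claim (asserted in the paper's main text) that the closed forms solve the penalized problem, nor on the covering initial distribution that the paper invokes to split (\ref{odrpo_WA}) state by state, and it pinpoints the softmax entropy as the source of the $\log N/\lambda$ term; the paper's variational argument is indifferent to the exact form of the maximizer (e.g., ties in the argmax) and extends verbatim to any penalty $d$ once two-sided bounds on it are available. Your unrolling of $e_{k+1}\le\gamma e_k+c_k$ at the end also makes explicit the limit argument that the paper leaves implicit.
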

%


\begin{remark}
Note the convergence is \emph{geometric}.  If we keep $\beta_k$ as a constant, then  $0\leq  J^\star - J(\pi^T) \leq \|V^\star - V^{\pi^T}\|_{\infty} \leq  \gamma^T
    \|V^\star - V^{\pi_0}\|_{\infty} +\frac{\beta B}{1-\gamma}$, where  $B =\|D\|_\infty $ for (\ref{wass_policy_update_exact}) and $B = 2\frac{\|D\|_\infty+2\frac{\log N}{\lambda}}{1-\gamma}$ for (\ref{sinkhorn_policy_update_exact}).
To achieve an $\epsilon$ optimality gap, we only need to take $\beta = \frac{(1-\gamma)\epsilon}{2B}$ and let $T\geq \frac{\log (\epsilon/2)}{\gamma}$.
\end{remark}


\begin{remark}
The study of global non-asymptotic convergence of nonconvex  policy optimization algorithms has been an  active research topic. Recent theoretical work has mostly centered on PG and natural policy gradient (NPG) \cite{kakade2001natural} - a close relative of TRPO; see e.g., \cite{agarwal2021theory, cen2021fast, lan2022policy}. To our best knowledge, a few work has discussed the global convergence of TRPO. \cite{neu2017unified} and \cite{geist2019theory} established the connection of TRPO to Mirror Descent, but did not provide any non-asymptotic rate; \cite{shani2020adaptive} showed that adaptive TRPO with decaying stepsize achieved $O(1/\sqrt{T})$ convergence rate for unregularized MDPs in the tabular setting (finite state and finite action).   
Our result seems to be the first non-asymptotic analysis of policy optimization based on Wasserstein and Sinkhorn divergence. It remains interesting to extend the convergence theory of TRPO/WPO/SPO to function approximation regime following recent advance~\cite{agarwal2021theory}. However, this is beyond the scope of our current work, as we focus on explicit closed-form update of WPO/SPO, which can be a viable alternative to TRPO in practice.  
\end{remark}

%

\section{A Practical Algorithm}
\label{section_odrpo_algorithm}

In practice, the advantage value functions are often estimated from sampled trajectories. In this section, we provide a practical on-policy actor-critic algorithm, described in Algorithm \ref{odrpo_algorithm}, that combines WPO/SPO with advantage function estimation. 


At each iteration, the first step is to collect trajectories, which can be either complete or partial. If the trajectory is complete, the total return can be directly expressed as the accumulated discounted rewards $R_t  = \sum_{k=0}^{T-t-1} \gamma^k r_{t+k}$. 
If the trajectory is partial, it can be estimated by applying the multi-step temporal difference (TD) methods \citep{asis2017_multistep}: $\hat{R}_{t:t+n} = \sum_{k=0}^{n-1} \gamma^k r_{t+k} + \gamma^n V(s_{t+n})$. 
Then for the advantage estimation, we can use Monte Carlo advantage estimation, i.e., $\hat{A}_t^{\pi_k}=R_t - V_{\psi_k}(s_t)$ or Generalized Advantage Estimation (GAE) \citep{schulmanl2016_gae_continuous}, which provides a more explicit control over the bias-variance trade-off. \zcyan{In the value update step, we use a neural net to represent the value function, where $\psi$ is the parameter that specifies the value net $s \rightarrow V(s)$. Then, we can update $\psi$ by using gradient descent, which significantly reduces the computational burden of computing advantage directly.} The computational complexity of the algorithm is discussed in Appendix \ref{appendix:complexity}.

\begin{algorithm2e}[H]
\SetAlgoLined
Input: number of iterations $K$, learning rate $\alpha$ \\
Initialize policy $\pi_0$ and value network $V_{\psi_0}$ with random parameter $\psi_0$\\
 \For{$k = 0,1,2 \dots K$}{
  Collect trajectory set $\mathcal{D}_k$ on policy $\pi_k$
  \\
  For each timestep $t$ in each trajectory, compute total returns $G_t$  and estimate advantages $\hat{A}_t^{\pi_k}$
  \\
  Update value: \\ $\begin{aligned} \psi_{k+1} \xleftarrow[]{} \psi_{k} - \alpha \nabla_{\psi_{k}} \sum (G_t - V_{\psi_k}(s_t))^2 \end{aligned}$
  \\
  Update policy: \\
  $\pi_{k+1} \xleftarrow[]{} \mathbb{F} (\pi_k)$ via WPO/ SPO with $\hat{A}_t^{\pi_k}$ \\
 }
 \caption{On-policy WPO/SPO algorithm}
 \label{odrpo_algorithm}
\end{algorithm2e}

\section{Experiments}
\label{section_experiments}

In this section, we evaluate the proposed WPO and SPO approaches presented in Algorithm \ref{odrpo_algorithm}.  We compare the performance of our methods with benchmarks including TRPO \citep{schulman2015_trpo}, PPO \citep{schulman2017_ppo}, A2C \citep{mnih2016_a3c}; and with BGPG \citep{pacchiano2019_bgpg}, WNPG \citep{moskovitz2021wnpg} for continuous control. 
The code of our WPO/SPO can be found  here\footnote{\url{https://github.com/efficientwpo/EfficientWPO}}. We adopt the implementations of TRPO, PPO and A2C from OpenAI Baselines \citep{openaibaseline} for MuJuCo tasks and Stable Baselines \citep{stable-baselines} for other tasks. For BGPG, we adopt the same implementation\footnote{\url{https://github.com/behaviorguidedRL/BGRL}} as \citep{pacchiano2019_bgpg}.

 Our experiments include (1) ablation study that focuses on sensitivity analysis of WPO and SPO; (2) tabular domain tasks with discrete state and action including the Taxi, Chain, and Cliff Walking environments;  (3) locomotion tasks with continuous state and discrete action including the CartPole, Acrobot environments; (4) comparison of KL and Wasserstein trust regions under tabular domain and locomotion tasks; and (5) extension to continuous control tasks with continuous action including HalfCheetah, Hopper, Walker, and Ant environments from MuJuCo.  See Table \ref{summary_performance_table} in Appendix \ref{appendix:implementation} for a summary of performance. The setting of hyperparameters and network sizes of our algorithms and additional results are provided in Appendix \ref{appendix:implementation}. 
 

\subsection{Ablation Study}
\label{section_ablation}

In this experiment, we first examine the sensitivity of WPO in terms of different strategies of $\beta_k$.  We test four settings of $\beta$ value for WPO policy update: (1) Setting 1: Computing optimal $\beta$ value for all policy update; (2) Setting 2: Computing optimal $\beta$ value for first $20\%$ of policy updates and decaying $\beta$ for the remaining; (3) Setting 3: Computing optimal $\beta$ value for first $20\%$ of policy updates and fix $\beta$ as its last updated value for the remaining; (4) Setting 4: Decaying $\beta$ for all policy updates (e.g., $\beta_k = \Theta(1/\log{k})$). In particular, Setting 2 is rooted in the observation that $\beta^*$ decays slowly in the later stage of the experiments carried out in the paper. Small perturbations are added to the approximate values to avoid any stagnation in updating. Taxi task \citep{dietterich1998_taxi} from tabular domain is selected for this experiment. 

\begin{table}[H]
\centering
\caption{Runtime for different $\beta$ settings, average across $5$ runs with random initialization \label{runtime_beta}}
\renewcommand{\arraystretch}{1.0}
\begin{adjustbox}{width=0.5\columnwidth,center}
\begin{tabular}{lll}
 \hline
 Runtime & Taxi (s) & CartPole (s) \\
 \hline
 Setting 1 (optimal $\beta$) & 1224.3 \hlight{$\pm$ 105.7} & 129.7 \hlight{$\pm$ 15.2}  \\
 \hline
 Setting 2 (optimal-then-decay) & 648.4 \hlight{$\pm$ 55.7} & 63.2 \hlight{$\pm$ 8.3} \\
 \hline
 Setting 3 (optimal-then-fix) & 630.2 \hlight{$\pm$ 67.4} & 67.1 \hlight{$\pm$ 9.7} \\
 \hline
 Setting 4 (decaying $\beta$) & 522.7 \hlight{$\pm$ 49.5} & 44.3 \hlight{$\pm$ 6.2}\\
 \hline
\end{tabular}
\end{adjustbox}
\end{table}

The performance comparisons and average run times are shown in Figure \ref{fig:tabular_different_betas_lambdas} and Table \ref{runtime_beta} respectively. Figure \ref{fig:tabular_different_betas} and Table \ref{runtime_beta} clearly indicate a tradeoff between computation efficiency and accuracy in terms of different choices of $\beta$ value. 
Setting 2 is the most effective way to balance the tradeoff between performance and run time. For the rest of experiments,  we adopt this setting for both WPO and SPO \hlight{(see Appendix {\ref{appendix:implementation_setting_2}} for how Setting 2 is tuned for each task)}. Figure \ref{fig:tabular_different_lambdas} shows that 
as $\lambda$ increases, the convergence of SPO becomes slower but the final performance of SPO improves and becomes closer to that of WPO, which verifies the convergence property of Sinkhorn to Wasserstein distance shown in Theorem \ref{thm_opt_beta_sinkhorn_wass_relationship}. Therefore, the choice of $\lambda$ can effectively adjust the trade-off between convergence and final performance. 
Similar results are observed when using time-varying $\lambda$ on Taxi, Chain and CartPole tasks, presented in Figure \ref{fig:additional_ablation} in Appendix \ref{appendix:implementation}.

\begin{figure}[H]
\centering
  \begin{subfigure}{0.35\columnwidth}
    \centering
    \includegraphics[width=0.99\linewidth]{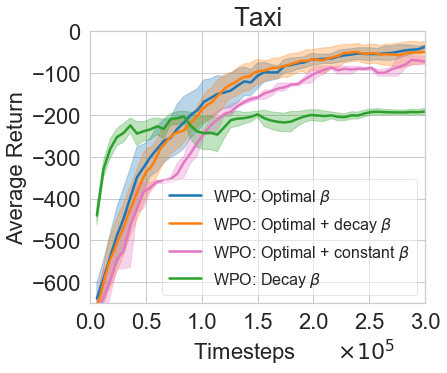}
    \caption{Different settings of $\beta$}
    \label{fig:tabular_different_betas}
  \end{subfigure}
  \hspace{0.5cm}
  \begin{subfigure}{0.35\columnwidth}
    \centering
    \includegraphics[width=0.99\linewidth]{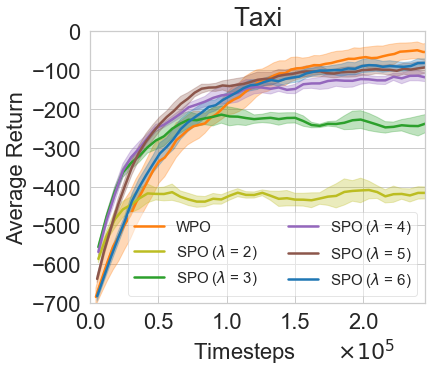}
    \caption{Different choices of $\lambda$}
    \label{fig:tabular_different_lambdas}
  \end{subfigure}
\caption{Episode rewards for Taxi with different $\beta$ and $\lambda$ settings, averaged across $5$ runs with random initialization. The shaded area depicts the mean $\pm$ the standard deviation.}
\label{fig:tabular_different_betas_lambdas}
\end{figure}

%

\subsection{Tabular Domains}

We evaluate WPO and SPO on tabular domain tasks and test the exploration ability of the algorithms on several environments  including Taxi, Chain, and Cliff Walking. We use a table of size $|\mathcal{S}| \times |\mathcal{A}|$ to represent the policy $\pi(a|s)$. For the value function, we use a neural net to smoothly update the values. The performance of WPO \zzred{and SPO are} compared to the performance of TRPO, PPO and A2C under the same neural net structure. Results on Taxi, Cliff and Chain are reported in  Figure \ref{fig:tabular}.

As shown in Figure \ref{fig:tabular}, the performances of WPO, SPO and TRPO are manifestly better than  A2C and PPO. Among the trust region based methods, WPO and SPO outperform TRPO in Taxi and Cliff Walking, whereas in Chain, the performances of these three methods are comparable. In all of the test cases, SPO converges faster than WPO but to a lower optimum. As further shown in Table \ref{taxi_table}, for the Taxi environment,  WPO has a higher successful drop-off rate and a lower task completion time while the original TRPO reaches the time limit with a drop-off rate $0$, suggesting that WPO finds a better policy than the original TRPO. In Figure \ref{fig:tabular_different_Ns}, we also compare the performance of WPO under Wasserstein and KL divergences given different  number of samples $N_A$ used to estimate the advantage function, and the result suggests that using Wasserstein metric is more robust than KL divergence under inaccurate advantage values.

\begin{figure}[H]
\centering
  \includegraphics[width=0.3\linewidth]{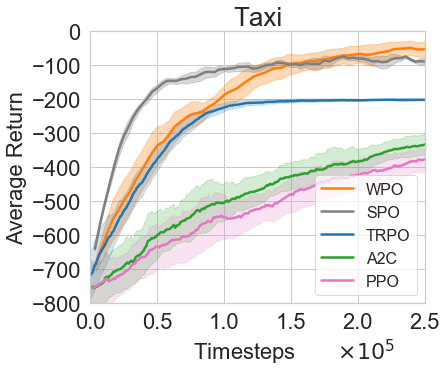} 
  \hspace{0.25cm}
  \includegraphics[width=0.3\linewidth]{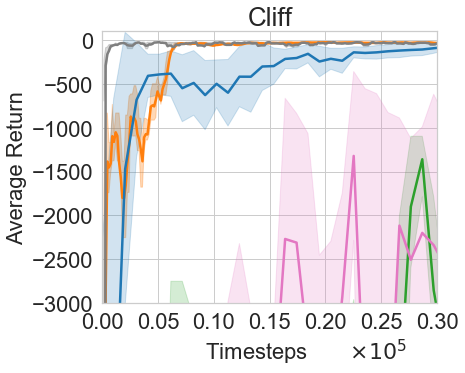}  
  \hspace{0.25cm}
  \includegraphics[width=0.3\linewidth]{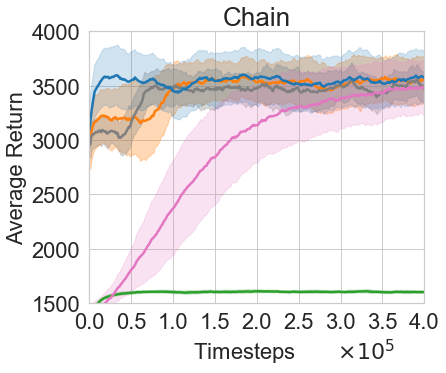}
\caption{Episode rewards during training for tabular domain tasks, averaged across $5$ runs with random initialization. The shaded area depicts the mean $\pm$ the standard deviation.  \label{fig:tabular}}
\end{figure}

\begin{table}[H]
\caption{Trained agents performance on Taxi \label{taxi_table}}
\renewcommand{\arraystretch}{1.0}
\begin{adjustbox}{width=0.5\columnwidth,center}
\begin{tabular}{lllll}
 \hline
 & Success (+20) & Fail (-10) &  Steps (-1) & Return  \\
 \hline 
 WPO & 0.753 & 0.232 & 70.891 &  -58.151 \\
 \hline 
 TRPO & 0 & 0 & 200 & -200 \\
 \hline 
\end{tabular}
\end{adjustbox}
\end{table}

\subsection{Robotic Locomotion Tasks}
We now integrate deep neural network architecture into WPO and SPO  and evaluate their performance on several locomotion tasks (with continuous state and discrete action), including CartPole \citep{barto1983_carpole} and Acrobot \citep{geramifard2015_acrobot}. We use two separate neural nets to represent the policy and the value. The policy neural net receives state $s$ as an input and outputs the categorical distribution of $\pi(a|s)$. A random subset of states $\mathcal{S}_k \in \mathcal{S}$ is sampled at each iteration to perform policy updates. 

Figure \ref{fig:locomotion_control} shows that WPO \zzred{and SPO outperform} TRPO, PPO and A2C in most tasks in terms of final performance, except in Acrobot where PPO performs the best. \zzred{In most cases, SPO converges faster but WPO has a better final performance.}  To train $10^5$ timesteps in the discrete locomotion tasks, the training wall-clock time is $63.2$ \hlight{$\pm 8.2s$} for WPO, $64.7$ \hlight{$\pm 7.8s$} for SPO, $59.4$ \hlight{$\pm 10.2s$} for TRPO and $69.9$ \hlight{$\pm 10.5s$} for PPO. Therefore, WPO has a similar computational efficiency as TRPO and PPO.

\begin{figure}[H]
\centering
  \includegraphics[width=.24\linewidth]{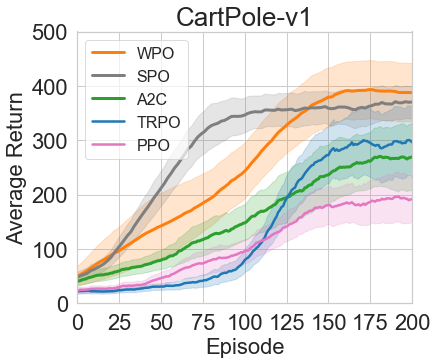} 
  \includegraphics[width=.24\linewidth]{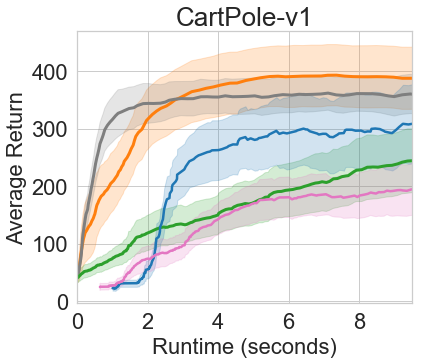}
  \includegraphics[width=.24\linewidth]{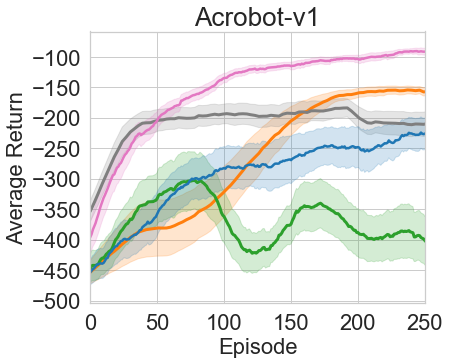}
  \includegraphics[width=.24 \linewidth]{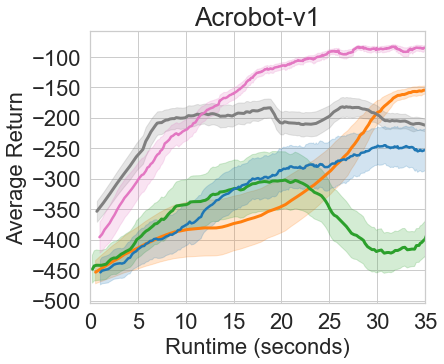}
\caption{Episode rewards during the training process for the locomotion tasks, averaged across $5$ runs with  random initialization. The shaded area depicts the mean $\pm$ the standard deviation.  \label{fig:locomotion_control}}
\end{figure}



\subsection{Comparison of Wasserstein and KL Trust Regions}

We show that compared with the KL divergence, the utilization of Wasserstein metric can cope with the inaccurate advantage estimations caused by the lack of samples. Let $N_A$ denote the number of samples used to estimate the advantage function. 
We evaluate the performance of WPO framework (\ref{odrpo_problem}) with Wasserstein and KL constraints (as derived in \citet{peng2019advantage}). We consider the Chain task and different $N_A$. As shown in Figure \ref{fig:tabular_different_Ns}, when $N_A$ is $1000$, KL performs slightly better than WPO. However, when $N_A$ decreases to $100$ or $250$, WPO outperforms KL. These results indicate that WPO is more robust than KL under inaccurate advantage values. This finding is consistent with our observations on the policy update formulations of Wasserstein and KL. For the Wasserstein update in (\ref{wass_policy_update}), policy will be updated only when the advantage difference between two actions is significant, i.e., $A^\pi(s,a_j) - \beta D_{ij} \ge A^\pi(s,a_i)$. However, for the KL update in  \citet{peng2019advantage}, policy will be updated as long as the current advantage function has a single non-zero value. Therefore, KL update is more sensitive; while Wasserstein update is more robust and more tolerant to advantage inaccuracies. Similar results are obtained for the locomotion tasks (Figure \ref{fig:locomotion_control_different_Ns} in Appendix \ref{appendix:implementation}). \hlight{The runtime of Wasserstein and KL updates are reported in Table {\ref{runtime_wpo_kl}} in Appendix {\ref{appendix:implementation}}.}

\begin{figure}[H]
\centering
    \subfloat[$N_A = 100$]{\includegraphics[width=0.25\linewidth]{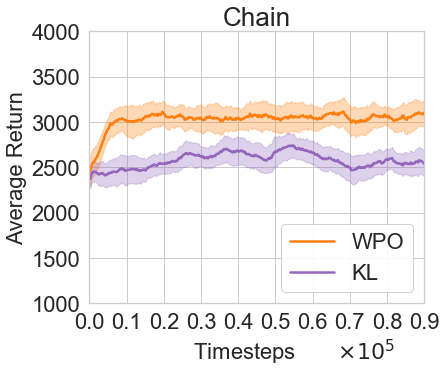}}  
    \hspace{0.35cm}
    \subfloat[$N_A = 250$]{\includegraphics[width=0.25\linewidth]{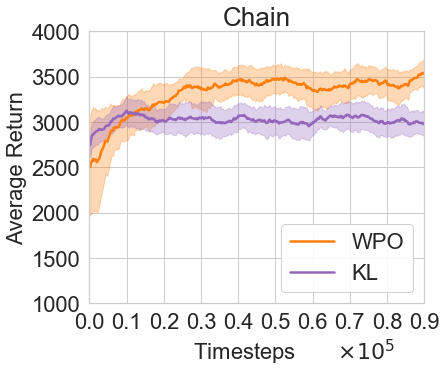}} 
    \hspace{0.35cm}
    \subfloat[$N_A = 1000$]{\includegraphics[width=0.25\linewidth]{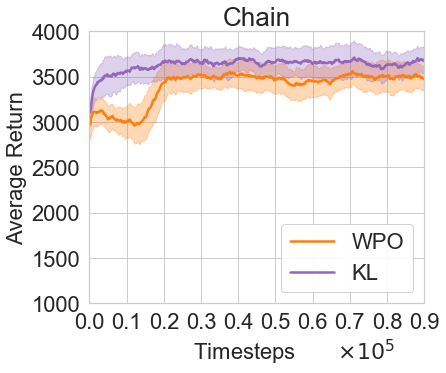}}
\caption{Episode rewards during training for the Chain task, where advantage value function is estimated under different number of samples, averaged across 5 runs with random initialization. The shaded area depicts the mean $\pm$ the standard deviation.} 
\label{fig:tabular_different_Ns}
\end{figure}

\subsection{Extension to Continuous Control}
\label{continuous_control}

To extend to environments with continuous action, we use Implicit Quantile Networks (IQN) \citep{dabney2018iqn} actor that can represent an arbitrary complex non-parametric policy. Let $F_s^{-1}(p)$ represent the quantile function associated with policy $\pi(\cdot|s)$. The IQN actor takes state $s$ and probability $p \in [0,1]$ as input, and outputs the corresponding quantile value $a = F_s^{-1}(p)$. IQN actor can be trained to approach pre-defined target policy distributions through quantile regression \citep{dabney2018iqn, tessler2019_dpo}. 

Define the action support for state $s$ in $k$-th iteration as $I^{\pi_k}(s) = \{a': A^{\pi_k}(s,a') > \min_{a \in I^{\pi_{k-1}}(s)} A^{\pi_k}(s,a)\}$. Then, the WPO/SPO target policy distribution to guide IQN update in the $k$-th iteration is: 
\begin{equation}
    P_{I^{\pi_k}(s)} (a'|s) = {\sum}_{a \in I^{\pi_{\tiny {k-1}}}(s)} \pi_k(a|s) f_s(a',a),
\end{equation}
where for WPO update $f_s(a',a) = 1$ if $a'= \text{argmax}_{a' \in I^{\pi_k}(s)} \{A^{\pi_k}(s,a') - \beta_k d(a',a)\}$ and $f_s(a',a) = 0$ otherwise; for SPO update, $f_s(a',a) =  \frac{\exp(\frac{\lambda_k}{\beta_k}A^{\pi_k}(s,a')-\lambda_k d(a',a))}{\sum_{a' \in I^{\pi_{\tiny {k}}}(s)} \exp(\frac{\lambda_k}{\beta_k}A^{\pi_k}(s,a')-\lambda_k d(a',a))}$. 
In implementation, we sample a batch of states $\mathcal{S}_k \in \mathcal{S}$ at each iteration to perform policy updates, and for each $s \in \mathcal{S}_k$, we sample $|\mathcal{A}_k|$ actions to approximate the support $I^{\pi_k}(s)$ and the target policy distribution $P_{I^{\pi_k}(s)} (\cdot|s)$. 

We additionally compare WPO and SPO with BGPG \citep{pacchiano2019_bgpg} and WNPG \citep{moskovitz2021wnpg} that are specially designed to address the continuous control with Wasserstein metric, for several MuJuCo tasks including HalfCheetah, Hopper, Walker, and Ant. Figure \ref{fig:mujuco} shows that WPO and SPO have consistently better performances than other benchmarks. \hlight{Similar results are obtained for the challenging Humanoid task, presented in Figure {\ref{fig:humanoid}} in Appendix {\ref{appendix:implementation}}. We also provide the runtime of each algorithm in Table {\ref{runtime_continuous_control}} in Appendix {\ref{appendix:implementation}}.}

\begin{figure}[H]
\centering
  \includegraphics[width=.24\linewidth]{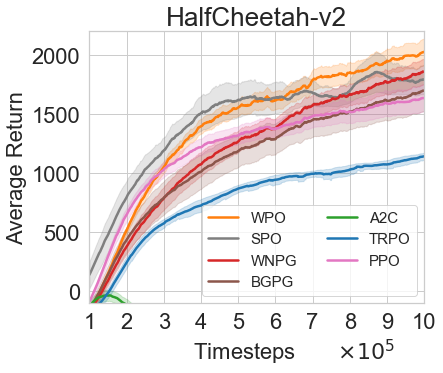}
  \includegraphics[width=.24\linewidth]{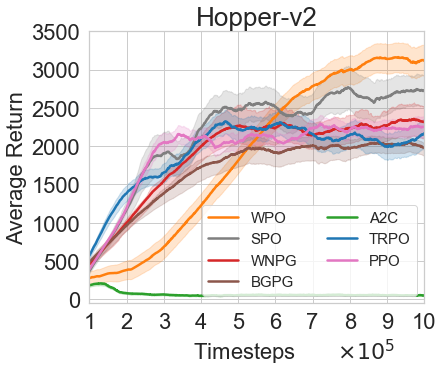}
  \includegraphics[width=.24\linewidth]{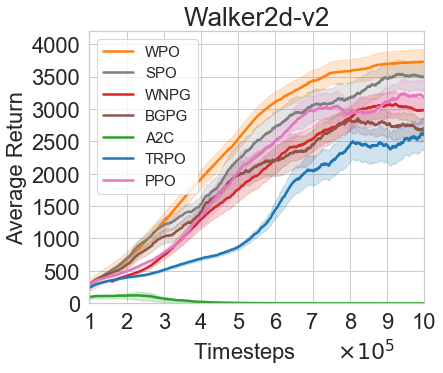}
  \includegraphics[width=.24 \linewidth]{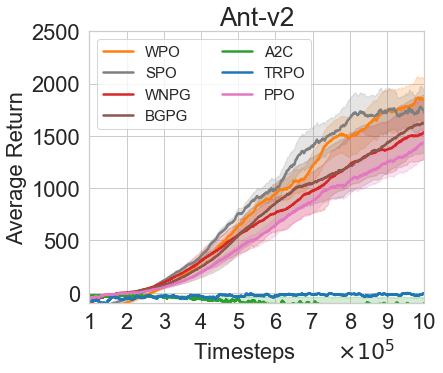}
\caption{\hlight{Episode rewards during training for MuJuCo continuous control tasks, averaged across $10$ runs with random initialization. The shaded area depicts the mean $\pm$ the standard deviation.}}
\label{fig:mujuco}
\end{figure}

\section{Conclusion}\label{conclusion}
In this paper, we present two policy optimization frameworks, WPO and SPO, which can exactly characterize the policy updates instead of confining their distributions to particular distribution class or requiring any approximation. Our methods outperform TRPO and PPO with better sample efficiency, faster convergence, and improved final performance. Our numerical results show that the Wasserstein metric is more robust to the ambiguity of advantage functions, compared with the KL divergence. Our strategy for adjusting $\beta$ value for WPO can reduce the computational time and boost the convergence without noticeable performance degradation. SPO improves the convergence speed of WPO by properly choosing the weight of the entropic regularizer. Performance improvement and global convergence for WPO are discussed. For future work, it remains interesting to extend the idea to PPO and natural policy gradients, which penalize the policy update instead of imposing trust region constraint, and extend it to off-policy frameworks. 




\newpage
\bibliography{ref}
\bibliographystyle{tmlr}

\newpage
\appendix
\onecolumn


\section{Implementation Details and Additional Results}
\label{appendix:implementation}

\bblue{The implementation of our WPO/SPO can be found in https://github.com/efficientwpo/EfficientWPO. We use the implementations of TRPO, PPO and A2C from OpenAI Baselines \citep{openaibaseline} for MuJuCo tasks and Stable Baselines \citep{stable-baselines} for other tasks. For BGPG, we adopt the same implementation as Pacchinao et al., (2020) based on the released code https://github.com/behaviorguidedRL/BGRL.}

\subsection{Visitation Frequencies Estimation: } The unnormalized discounted visitation frequencies are needed to compute the global optimal $\beta^*$. At the $k$-th iteration, the visitation frequencies $\rho^\pi_k$ are estimated using samples of the trajectory set $\mathcal{D}_k$. Specifically, we first initialize $\rho^\pi_k (s) = 0, \; \forall s \in S$. Then for each timestep $t$ in each trajectory from $\mathcal{D}_k$, we update $\rho^\pi_k$ as $\rho^\pi_k(s_t) \xleftarrow[]{} \rho^\pi_k (s_t) + \gamma^t / |\mathcal{D}_k|$.

\subsection{Optimal-then-decay Beta Strategy: }
\label{appendix:implementation_setting_2}

\hlight{During the training of multiple tasks, including Taxi, Chain and CartPole, we observe a consistent trend in the behavior of the optimal $\beta$ value during the policy updates: It initially fluctuates, then stabilizes and decays slowly towards 0. In the Taxi task, the optimal $\beta$ stabilizes after approximately $18\%$ of the total training iterations. If we decay $\beta$ before this stabilization point (e.g, using optimal beta for only first $5\%$ or $10\%$ updates), we observe a drop in performance. However, we do not observe any notable performance difference when we decay $\beta$ after this stabilization point (e.g., using optimal $\beta$ for first $20\%$ or $30\%$ updates). We also observe that the optimal $\beta$ decays at a very slow rate, and $\Theta(1/\log(k))$ matches this trend best. If we employ a faster decaying function, such as $\Theta(1/k)$ or $\Theta(1/k^2)$, we observe a drop in performance.}

\hlight{Based on these findings, when implementing the optimal-then-decay $\beta$ strategy on other tasks, we compute the optimal $\beta$ for each policy update until we observe that its value stabilizes across updates. At this point, we stop calculating the optimal $\beta$ and decay it using $\Theta(1/\log(k))$ for the remaining policy updates. The specific iteration at which the optimal $\beta$ value stabilizes varies across tasks, and we denote this point as $k_\beta$, which is reported in Table {\ref{appendix_hyperparameters}}. }


\subsection{Hyperparameters and Performance Summary}

Our main experimental results are reported in section \ref{section_experiments}. In addition, we provide the setting of hyperparameters and network sizes of our WPO/SPO algorithms in Table \ref{appendix_hyperparameters}, and a summary of performance in Table \ref{summary_performance_table}.


\renewcommand*{\arraystretch}{1.3}
\begin{longtable}{llllll}
\caption{Hyperparameters and network sizes} \\
 \hline
 & Taxi-v3 & NChain-v0 & CartPole-v1 & Acrobot-v1  &  MuJuCo tasks \\
 & & CliffWalking-v0 & & &  \\
 \hline
 $\gamma$ & $0.9$ & 0.9 & $0.95$ & $0.95$ & $0.99$ \\
 \hline
 $lr_{\pi}$ & \textbackslash & \textbackslash & $10^{-2}$ & $5 \times 10^{-3}$ & $10^{-4}$ \\
 \hline
 $lr_{\text{value}}$ & $10^{-2}$ & $10^{-2}$ & $10^{-2}$ & $5 \times 10^{-3}$ & $10^{-3}$ \\
 \hline
 $|\mathcal{D}_k|$ & $60$ (Taxi) & $1$ (Chain) & 2 & 3 & partial \\
 & & $3$ (CliffWalking) &  &  & \\
 \hline
  $\pi$ size & 2D array & 2D array & $[64,64]$ & $[64,64]$  & $[400,300]$ \\
 \hline
 \text{Q/v} size & $[10,7,5]$ & $[10,7,5]$ & $[64,64]$ & $[64,64]$ & $[400,300]$ \\
  \hline
 $|\mathcal{S}_k|$ & all states, $|\mathcal{S}|$ & all states, $|\mathcal{S}|$ & $128$ & $128$ & $64$ \\
 \hline
  $|\mathcal{A}_k|$ & all actions, $|\mathcal{A}|$ & all actions, $|\mathcal{A}|$ & all actions, $|\mathcal{A}|$ & all actions, $|\mathcal{A}|$ & $32$ \\
 \hline
 \hlight{$d(a,a')$} & \hlight{0-1 distance} \footnote{We note that specifying distance based on control relevance leads to higher performance in this test case: i.e., $d=1$ to distinct actions from set $A$ = $\{$ move north, move south, move west, move east$\}$, $d=1$ to distinct actions from set $B$ = $\{$ pickup, dropoff$\}$, and $d=4$ to actions from different sets. } & \hlight{0-1 distance} & \hlight{0-1 distance} & \hlight{0-1 distance} &  \hlight{L1 distance} \\
 \hline
 \hlight{$k_\beta$} & \hlight{$250$} & \hlight{$100$ (Chain)} & \hlight{$150$} & \hlight{$150$} & \hlight{$1000$} \\ 
 & & \hlight{$50$ (CliffWalking)} &  &  & \\
 \hline
\label{appendix_hyperparameters}
\end{longtable}

\begin{table}[H]
\caption{Averaged rewards over last 10\% episodes during the training process}
\renewcommand{\arraystretch}{1.5}
\centering
\begin{adjustbox}{width=\linewidth,center}
\begin{tabular}{llllllll}
 \hline\hline
Environment  & WPO & SPO & TRPO & PPO & A2C & BGPG & WNPG \\
\hline
Taxi-v3 & $-45 \pm 27$ & $-87 \pm 11$ & $-202 \pm 3$ & $-381 \pm 34$ & $-338 \pm 30$ & - & -\\ 
\hline
NChain-v0 & $3549 \pm 197$ & $3432 \pm 131$ & $3522 \pm 258$ & $3506 \pm 237$ & $1606 \pm 10$ & -  & -\\
\hline
CliffWalking-v0 & $-35 \pm 15$ & $-25 \pm 1$ & $-159 \pm 94$ & $-3290 \pm 2106$ & $-5587 \pm 1942$  & - & - \\ 
\hline\hline
CartPole-v1 & $388 \pm 54$ & $370 \pm 30$ & $297 \pm 65$ & $193 \pm 45$ & $267 \pm 61$ & - & -\\
\hline
Acrobot-v1 & $-162 \pm 8$ & $-185 \pm 15$ & $-248 \pm 33$ & $-103 \pm 5$ & $-379 \pm 39$ & -  & -\\
\hline\hline
HalfCheetah-v2 & $2050 \pm 108$ & $1750 \pm 172$ & $1158 \pm 35$ & $1628 \pm 136$ & $-645 \pm 31$ & $1697 \pm 195$ & $1832 \pm 125$ \\
\hline
Hopper-v2 & $3208 \pm 259$ & $2834 \pm 305$ & $2035 \pm 248$ & $2321 \pm 233$ & $43 \pm 21$ & $1982 \pm 218$ & $2361 \pm 272$\\
\hline
Walker2d-v2 & $3739 \pm 298$ & $3489 \pm 257$ & $2535 \pm 369$ & $3290 \pm 354$ & $28 \pm 1$ & $2775 \pm 301$ & $3059 \pm 209$ \\
\hline
Ant-v2 & $1863 \pm 271$ & $1780 \pm 257$ & $21 \pm 10$ & $1487 \pm 206$ & $-39 \pm 8$ & $1622 \pm 235$ & $1587 \pm 221$ \\
\hline
Humanoid-v2 & $965 \pm 76$ & $914 \pm 93$ & $725 \pm 112$ & $632 \pm 73$ & $107 \pm 15$ & $797 \pm 85$ & $820 \pm 91 $  \\
\hline
\end{tabular}
\label{summary_performance_table}
\end{adjustbox}
\end{table}

\subsection{Additional Results for Ablation Studies}
\begin{figure}[H]
\centering
    \includegraphics[width=0.32\linewidth]{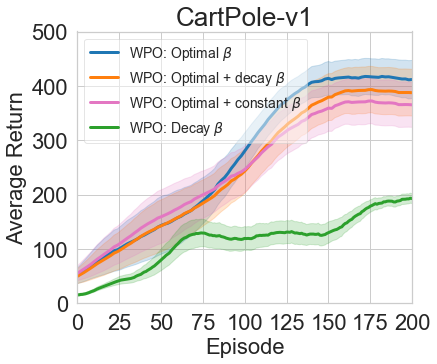}
    \includegraphics[width=0.32\linewidth]{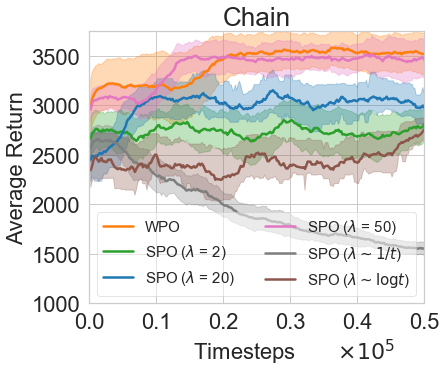}
    \includegraphics[width=0.32\linewidth]{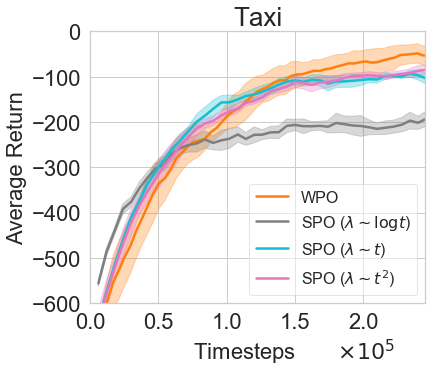}
\caption{Episode rewards during the training process for different $\beta$ and $\lambda$ settings, averaged across $5$ runs with a random initialization. The shaded area depicts the mean $\pm$ the standard deviation.}
\label{fig:additional_ablation}
\end{figure}

\subsection{Additional Comparison of Wasserstein and KL Trust Regions}

\begin{figure}[H]
\centering
\subfloat[$N_A = 100$]{
  \includegraphics[width=.22\linewidth]{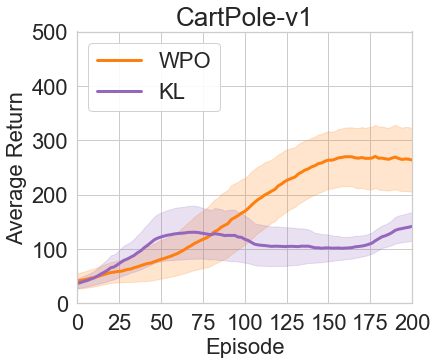}} 
\subfloat[$N_A = 500$]{
  \includegraphics[width=.22\linewidth]{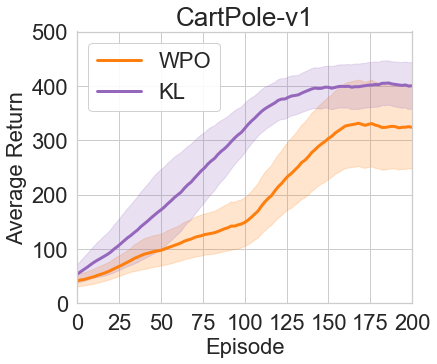}} 
\subfloat[$N_A = 100$]{
  \includegraphics[width=.22\linewidth]{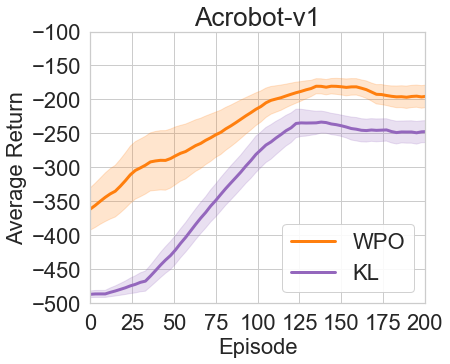}}
\subfloat[$N_A = 500$]{
  \includegraphics[width=.22\linewidth]{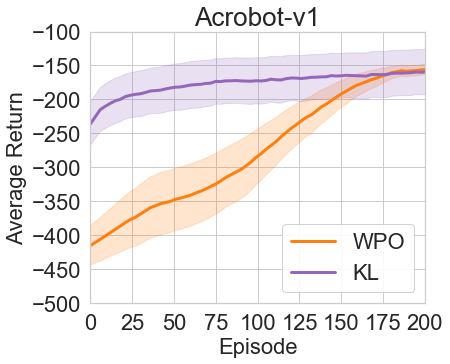}}
\caption{Episode rewards during the training process for the locomotion tasks, averaged across $5$ runs with a random initialization. The shaded area depicts the mean $\pm$ the standard deviation. \label{fig:locomotion_control_different_Ns}}
\end{figure}

\begin{table}[H]
\caption{\hlight{Average runtime (seconds) of WPO, SPO and KL}}
\renewcommand{\arraystretch}{1.3}
\begin{adjustbox}{width=0.6\columnwidth,center}
\begin{tabular}{llll}
 \hline
 & \hlight{WPO} & \hlight{SPO} & \hlight{KL} \\
 \hline 
 \hlight{Taxi-v3 (per $10^3$ steps)} & \hlight{$71.0 \pm 7.3$}  & \hlight{$69.5 \pm 8.7$} & \hlight{$74.3 \pm 9.5$}  \\
 \hline 
 \hlight{NChain-v0 (per $10^3$ steps)} & \hlight{$58.4 \pm 9.1$} & \hlight{$63.1 \pm 7.4$} & \hlight{$59.9 \pm 8.7$}  \\
 \hline 
 \hlight{CartPole-v1 (per $10^6$ steps)} & \hlight{$11.4 \pm 1.8$} & \hlight{$10.2 \pm 2.3$} & \hlight{$9.7 \pm 1.9$}  \\
 \hline 
 \hlight{Acrobot-v1 (per $10^5$ steps)} & \hlight{$10.4 \pm 1.9$} & \hlight{$9.7 \pm 2.5$} & \hlight{$10.9 \pm 2.3$} \\
 \hline 
 \hlight{Humanoid-v2 (per $10^5$ steps)} & \hlight{$422.7 \pm 65.4$} & \hlight{$409.1 \pm 46.5$} & \hlight{$438.5 \pm 61.2$} \\
 \hline
\end{tabular}
\end{adjustbox}
\label{runtime_wpo_kl}
\end{table}

\subsection{Additional Results for Large-scale Continuous Control}

\begin{figure}[H]
    \centering
    \includegraphics[width = 0.4\linewidth]{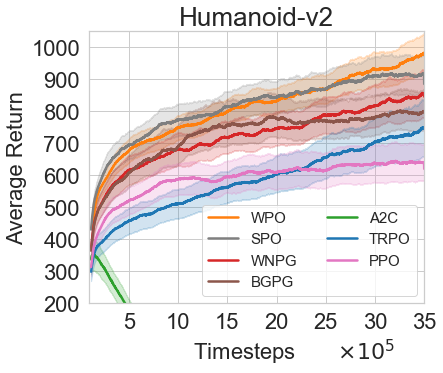}
    \caption{\hlight{Episode rewards during training for MuJuCo Humanoid task, averaged across 10 runs with random initialization. The shaded area depicts the mean $\pm$ the standard deviation.}}
    \label{fig:humanoid}
\end{figure}

\begin{table}[H]
\caption{\hlight{Average runtime (seconds per $10^5$ timesteps) for the MuJuCo continuous control tasks}}
\renewcommand{\arraystretch}{1.3}
\centering
\begin{adjustbox}{width=0.7\linewidth,center}
\begin{tabular}{llllllll}
\hline
\hlight{Environment}  & \hlight{WPO} & \hlight{SPO} & \hlight{TRPO} & \hlight{PPO} & \hlight{A2C} & \hlight{BGPG} & \hlight{WNPG} \\
\hline
\hlight{HalfCheetah-v2} & \hlight{$297 \pm 31$} & \hlight{$289 \pm 25$} & \hlight{$290 \pm 28$} & \hlight{$292 \pm 36$} & \hlight{$293 \pm 27$} & \hlight{$306 \pm 33$} & \hlight{$298 \pm 22$} \\
\hline
\hlight{Hopper-v2} & \hlight{$233 \pm 38$} & \hlight{$226 \pm 42$}  & \hlight{$242 \pm 56$} & \hlight{$167 \pm 36$} & \hlight{$254 \pm 49$} & \hlight{$201 \pm 32$} & \hlight{$197 \pm 31$} \\
\hline
\hlight{Walker2d-v2} & \hlight{$289 \pm 55$} & \hlight{$312 \pm 61$} & \hlight{$253 \pm 39$} &  \hlight{$307 \pm 52$} &  \hlight{$259 \pm 46$} & \hlight{$322 \pm 62$} & \hlight{$214 \pm 45$} \\
\hline
\hlight{Ant-v2} & \hlight{$307 \pm 51$} & \hlight{$290 \pm 57$} &  \hlight{$296 \pm 63$} & \hlight{$251 \pm 47$} & \hlight{$291 \pm 41$} & \hlight{$286 \pm 63$} & \hlight{$269 \pm 54$} \\
\hline
\hlight{Humanoid-v2} & \hlight{$423 \pm 65$} & \hlight{$401 \pm 47$} & \hlight{$446 \pm 52$} &  \hlight{$395 \pm 57$} & \hlight{$230 \pm 31$} & \hlight{$425 \pm 58$} & 
\hlight{$398 \pm 49$}  \\
\hline
\end{tabular}
\label{runtime_continuous_control}
\end{adjustbox}
\end{table}

\section{Proof of Theorem \ref{thm_opt_policy_wass}}
\label{appendix:wass}
\thmoptpolicywass*
\begin{proof}[Proof of Theorem \ref{thm_opt_policy_wass}]
First, we denote $Q^s$ as the joint distribution of $\pi(\cdot|s)$ and $\pi'(\cdot|s)$ with $\sum_{i=1}^N Q^s_{ij} = \pi(a_j|s)$ and $\sum_{j=1}^N Q^s_{ij} = \pi'(a_i|s)$. Also, let $f_s(i,j)$ represent the conditional distribution of $\pi'(a_i|s)$ under $\pi(a_j|s)$. Then $Q^s_{ij} = \pi(a_j|s)f_s(i,j)$, $\pi'(a_i|s) = \sum_{j=1}^N Q^s_{ij} = \sum_{j=1}^N \pi(a_j|s)f_s(i,j)$. In addition:  \begin{eqnarray*}
&d_\text{W} (\pi'(\cdot|s), \pi(\cdot|s)) & = \min_{Q^s_{ij}} \sum_{i=1}^N \sum_{j=1}^N D_{ij}Q^s_{ij} =  \min_{f_s(i,j)} \sum_{i=1}^N \sum_{j=1}^N D_{ij} \pi(a_j|s)f_s(i,j), \mbox{ and }\label{Was1}\\
&\mathbb{E}_{a \sim \pi'(\cdot|s)} [ A^{\pi} (s,a)] & = \sum_{i=1}^N   A^{\pi} (s,a_i) \pi'(a_i|s) = \sum_{i=1}^N \sum_{j=1}^N  A^{\pi} (s,a_i) \pi(a_j|s)f_s(i,j).\label{Was2}
\end{eqnarray*}
Thus, the WPO problem in (\ref{odrpo_problem}) can be reformulated as: 
\begin{subequations}
\begin{align}
 \max_{f_s(i,j) \ge 0} \hspace{3mm} \mathbb{E}_{s\sim \rho^{\pi}_\upsilon}  &\sum_{i=1}^N \sum_{j=1}^N  A^{\pi} (s,a_i) \pi(a_j|s)f_s(i,j) \label{objective_wass_primal} \\
 s.t.  \hspace{3mm} \mathbb{E}_{s\sim \rho^{\pi}_\upsilon}  &\sum_{i=1}^N \sum_{j=1}^N D_{ij} \pi(a_j|s)f_s(i,j)  \le \delta, \label{constraint1_wass_primal} \\
 &  \sum_{i=1}^N  f_s(i,j) = 1, \hspace{1cm} \forall s \in \mathcal{S}, j = 1 \dots N \label{constraint2_wass_primal}.
\end{align}
\label{wass_odrpo_problem}
\end{subequations}
Note here that (\ref{constraint1_wass_primal}) is equivalent to $\mathbb{E}_{s\sim \rho^{\pi}_\upsilon} \min_{f_s(i,j)} \sum_{i=1}^N \sum_{j=1}^N D_{ij} \pi(a_j|s)f_s(i,j)  \le \delta$ because if we have a feasible $f_s(i,j)$ to make (\ref{constraint1_wass_primal}) hold, we must have  $\mathbb{E}_{s\sim \rho^{\pi}_\upsilon} \min_{f_s(i,j)} \sum_{i=1}^N \sum_{j=1}^N D_{ij} \pi(a_j|s)f_s(i,j)  \le \delta$. 

Since both the objective function and the constraint are linear in $f_s(i,j)$, (\ref{wass_odrpo_problem}) is a convex optimization problem. Also, Slater's condition holds for (\ref{wass_odrpo_problem}) as the feasible region has an interior point, which is $f_s(i,i) = 1$  $\forall i$, and $f_s(i,j) = 0$  $\forall i \ne j$. Meanwhile, since $A^{\pi} (s,a)$ is bounded based on Assumption \ref{bounded_A}, the objective is bounded above. Therefore, strong duality holds for (\ref{wass_odrpo_problem}). At this point we can derive the dual problem of (\ref{wass_odrpo_problem}) as its equivalent reformulation:
\begin{equation}
\begin{split}
    & \min_{\beta \ge 0, \zeta^s_{j}} \hspace{3mm} \beta\delta + \int_{s \in \mathcal{S}} \sum_{j=1}^N \zeta^s_{j} ds\\
    & s.t. \hspace{3mm} A^{\pi} (s,a_i) \pi(a_j|s) - \beta D_{ij} \pi(a_j|s) - \frac{\zeta^s_{j}}{\rho^{\pi}_\upsilon(s)}  \le 0, \hspace{1cm} \forall s \in \mathcal{S}, i, j = 1 \dots N.
\label{wass_dual_formulation1_in_proof}
\end{split}
\end{equation}
We observe that with a fixed $\beta$, the optimal $\zeta^s_{j}$ will be achieved at: \begin{equation}\zeta^{s*}_{j}(\beta) = \max_{i = 1 \dots N} \rho^{\pi}_\upsilon (s)\pi(a_j|s)(A^{\pi} (s,a_i) -  \beta D_{ij}).\label{optimaleta}
\end{equation}
Denote $\beta^*$ as an optimal solution to (\ref{wass_dual_formulation1_in_proof}) \zcyan{and $f_s^*(i,j)$ as an optimal solution to (\ref{wass_odrpo_problem}}). 
Due to the complimentary slackness, the following equations hold:
\begin{equation*}
    (A^{\pi} (s,a_i) \pi(a_j|s) - \beta^* D_{ij} \pi(a_j|s) - \frac{\zeta^{s*}_{j}(\beta^*)}{\rho^{\pi}_\upsilon(s)}) f_s^*(i,j) = 0, \hspace{1cm} \forall s,i,j.
\label{wass_complimentary_slackness}
\end{equation*}
In this case, $f_s^*(i,j)$ can have non-zero values only when $A^{\pi} (s,a_i) \pi(a_j|s) - \beta^* D_{ij} \pi(a_j|s) - \frac{\zeta^{s*}_{j}(\beta^*)}{\rho^{\pi}_\upsilon(s)}  = 0$, which means $\zeta^{s*}_{j}(\beta^*) = \rho^{\pi}_\upsilon(s)\pi(a_j|s)(A^{\pi}(s,a_i) - \beta^* D_{ij})$. Given the expression of the optimal $\zeta^{s*}_{j}$ in (\ref{optimaleta}), $f_s^*(i,j)$ can have non-zero values only when \zcyan{$i \in \mathcal{K}^\pi_s(\beta^*,j)$}, where $\zcyan{\mathcal{K}^\pi_s(\beta,j)} =  \text{argmax}_{k = 1 \dots N} A^{\pi} (s,a_k) -  \beta D_{kj}$.   
                 
\ccred{
When there exists a unique optimizer, i.e., $|\mathcal{K}^\pi_s(\beta^*,j)| = 1$, let $\kappa^\pi_s(\beta^*,j)$ denote the optimizer. Since $\sum_{i=1}^N  f_s^*(i,j) = 1$ as indicated in (\ref{constraint2_wass_primal}), the only optimal solution is: 
\[ f_s^*(i,j) = \begin{cases*}
                    1 & \hspace{3mm} if  $i = \kappa^\pi_s(\beta^*,j)$,  \\
                    0 & \hspace{3mm} otherwise.
                 \end{cases*} \]%
}

\ccred{
When there exists multiple optimizers, i.e., $|\mathcal{K}^\pi_s(\beta^*,j)| > 1$, the optimal weights $f_s^*(i,j)$ for $i \in \mathcal{K}^\pi_s(\beta^*,j)$ could be determined by solving the following linear programming: 
\begin{equation}
\begin{split}
 \max_{f_s^*(i,j) \ge 0, i \in \mathcal{K}^\pi_s(\beta^*,j)} \hspace{3mm} \mathbb{E}_{s\sim \rho^{\pi}_\upsilon}  & \sum_{j=1}^N  \pi(a_j|s) \sum_{i \in \mathcal{K}^\pi_s(\beta^*,j)} A^{\pi} (s,a_i) f_s^*(i,j)  \\
 s.t.  \hspace{3mm} \mathbb{E}_{s\sim \rho^{\pi}_\upsilon}  & \sum_{j=1}^N \pi(a_j|s) \sum_{i \in \mathcal{K}^\pi_s(\beta^*,j)} D_{ij}  f_s^*(i,j) \le \delta, \\
 &  \sum_{i \in \mathcal{K}^\pi_s(\beta^*,j)}  f_s^*(i,j) = 1, \hspace{1cm} \forall s \in \mathcal{S}, j = 1 \dots N .
\end{split}
\label{wass_odrpo_problem_simple}
\end{equation}
}

And then the corresponding optimal solution is, $\pi^*(a_i|s) = \sum_{j=1}^N \pi(a_j|s)f_s^*(i,j)$. 

Last, by substituting $\zeta^{s*}_{j}(\beta) =  \rho^{\pi}_\upsilon (s)\pi(a_j|s) \max_{i = 1 \dots N} (A^{\pi} (s,a_i) -  \beta D_{ij})$ into the dual problem (\ref{wass_dual_formulation1_in_proof}), we can reformulate (\ref{wass_dual_formulation1_in_proof}) into:
\begin{equation}
    \min_{\beta \ge 0} \{\beta\delta + \int_{s \in \mathcal{S}} \sum_{j=1}^N \zeta^{s*}_{j}(\beta) ds \} = \min_{\beta \ge 0} \{\beta\delta + \mathbb{E}_{s \sim \rho^{\pi}_\upsilon} \sum_{j=1}^N \pi(a_j|s) \max_{i = 1 \dots N} (A^{\pi} (s,a_i) -  \beta D_{ij}) \} \label{wass_dual_formulation2_in_proof}. 
\end{equation}
The optimal $\beta$ can then be obtained by solving (\ref{wass_dual_formulation2_in_proof}). 

\xred{
We will further show that $\beta^* \leq \bar{\beta}:=\max_{s \in \mathcal{S}, k, j = 1 \dots N, k \ne j} {(D_{kj})^{-1}}{(A^\pi (s,a_k) - A^\pi (s,a_j))}$. }

\xred{In the general case, i.e., $\beta \ge 0$, (\ref{objective_wass_primal}) is non-negative because: }

\xred{
\begin{subequations}
\begin{align}
\mathbb{E}_{s\sim \rho^{\pi}_\upsilon} & \sum_{i=1}^N \sum_{j=1}^N  A^{\pi} (s,a_i) \pi(a_j|s)f_s^*(i,j) \\
&= \mathbb{E}_{s\sim \rho^{\pi}_\upsilon} \sum_{j=1}^N \pi(a_j|s) \sum_{i=1}^N A^{\pi} (s,a_i) f_s^*(i,j) \\
&= \mathbb{E}_{s\sim \rho^{\pi}_\upsilon} \sum_{j=1}^N \pi(a_j|s) \sum_{i \in \mathcal{K}^\pi_s(\beta^*,j)} f_s^*(i,j) A^{\pi} (s,a_i) \\  
&\ge \mathbb{E}_{s\sim \rho^{\pi}_\upsilon} \sum_{j=1}^N \pi(a_j|s) \sum_{i \in \mathcal{K}^\pi_s(\beta^*,j)} f_s^*(i,j) [A^{\pi} (s,a_j) + \beta^* D_{ij}] 
\label{primal_objective_non_negative_proof} \\
& = \mathbb{E}_{s\sim \rho^{\pi}_\upsilon} \sum_{j=1}^N \pi(a_j|s) A^{\pi} (s,a_j) + \mathbb{E}_{s\sim \rho^{\pi}_\upsilon} \sum_{j=1}^N \pi(a_j|s) \sum_{i \in \mathcal{K}^\pi_s(\beta^*,j)} f_s^*(i,j) \beta^* D_{ij}
\\
& = \mathbb{E}_{s\sim \rho^{\pi}_\upsilon} \sum_{j=1}^N \pi(a_j|s) \beta^* \sum_{i \in \mathcal{K}^\pi_s(\beta^*,j)} f_s^*(i,j) D_{ij}
\\
&\ge 0,
\end{align}
\end{subequations}
where (\ref{primal_objective_non_negative_proof}) holds since for $i \in \mathcal{K}^\pi_s(\beta^*,j)$,
$A^{\pi} (s,a_i) - \beta^* D_{ij} \ge A^\pi(s,a_j) - \beta^* D_{jj} = A^\pi(s,a_{j})$. }
\xred{
When $\beta^* > \max_{s \in \mathcal{S}, k, j = 1 \dots N, k \ne j} \{\frac{A^\pi (s,a_k) - A^\pi (s,a_j)}{D_{kj}} \}$, we have that for all $s \in \mathcal{S}$, $\kappa^\pi_s(\beta^*,j) = j$. Thus, $f_s^*(i,i) = 1$, $\forall i$ and $f_s^*(i,j) = 0$, $\forall i \ne j$. The objective value (\ref{objective_wass_primal}) will be $0$ because $\mathbb{E}_{s\sim \rho^{\pi}_\upsilon} \sum_{i=1}^N \sum_{j=1}^N  A^{\pi} (s,a_i) \pi(a_j|s)f_s^*(i,j) = \mathbb{E}_{s\sim \rho^{\pi}_\upsilon} \sum_{i=1}^N  A^{\pi} (s,a_i) \pi(a_i|s) = 0$. The left hand side of (\ref{constraint1_wass_primal}) equals to $\mathbb{E}_{s\sim \rho^{\pi}_\upsilon} \sum_{i=1}^N \sum_{j=1}^N D_{ij} \pi(a_j|s)f_s^*(i,j) = \mathbb{E}_{s\sim \rho^{\pi}_\upsilon} \sum_{i=1}^N D_{ii} \pi(a_i|s) = 0$. Thus, for any $\delta > 0$, (\ref{constraint1_wass_primal}) is always satisfied.} 

\xred{
Since the objective of the primal Wasserstein trust-region constrained problem in (\ref{wass_dual_formulation}) constantly evaluates to $0$ when $\beta^* > \max_{s \in \mathcal{S}, k, j = 1 \dots N, k \ne j} \{\frac{A^\pi (s,a_k) - A^\pi (s,a_j)}{D_{kj}} \}$, and is non-negative when $\beta^* \leq \max_{s \in \mathcal{S}, k, j = 1 \dots N, k \ne j} \{\frac{A^\pi (s,a_k) - A^\pi (s,a_j)}{D_{kj}} \}$, we can use $\max_{s \in \mathcal{S}, k, j = 1 \dots N, k \ne j} \{\frac{A^\pi (s,a_k) - A^\pi (s,a_j)}{D_{kj}} \}$ as an upper bound for the optimal dual variable $\beta^*$.}

\end{proof}

\section{Optimal Beta for a Special Distance}
\label{appendix:specialdist}

\begin{restatable}{prop}{propoptimalbeta}
\label{prop_optimal_beta_wass}
Let $k_s = \text{argmax}_{i=1,\dots,N}A^\pi(s,a_i)$, we have:

(1). If the initial point $\beta_0$ is in $[\max_{s, j} \{A^{\pi}(s,a_{k_s}) - A^{\pi} (s,a_j) \}, +\infty)$, the local optimal $\beta$ solution is $\max_{s, j} \{A^{\pi}(s,a_{k_s}) - A^{\pi} (s,a_j) \}$. 
\\
(2). If the initial point $\beta_0$ is in $[0, \min_{s, j \ne k_s} \{A^{\pi}(s,a_{k_s}) - A^{\pi} (s,a_j) \}]$: if $\delta - \int_{s\in \mathcal{S}} \rho^{\pi} (s)  (1 -  \pi(a_{k_s}|s)) ds < 0$, the local optimal $\beta$ is $\min_{s, j \ne k_s} \{A^{\pi}(s,a_{k_s}) - A^{\pi} (s,a_j) \}$; otherwise, the local optimal $\beta$ solution is $0$. 

(3). If the initial point $\beta_0$ is in $(\min_{s, j \ne k_s} \{A^{\pi}(s,a_{k_s}) - A^{\pi} (s,a_j) \}, \max_{s, j} \{A^{\pi}(s,a_{k_s}) - A^{\pi} (s,a_j) \})$, we construct sets $I_s^1$ and $I_s^2$ as: 

$\textbf{\textup{for}} \hspace{1mm} s \in \mathcal{S}, j \in \{1,2\dots N\} \hspace{1mm}: \hspace{1mm} \textbf{\textup{if}} \hspace{1mm} \beta_0 \ge A^{\pi}(s,a_{k_s}) - A^{\pi} (s,a_j) \hspace{1mm} \textbf{\textup{then}} \hspace{1mm} \textup{\text{Add}}\hspace{1mm} j \hspace{1mm} \textup{\text{to}} \hspace{1mm} I_s^1 \hspace{1mm} \textbf{\textup{else}} \hspace{1mm} \textup{\text{Add}} \hspace{1mm} j \hspace{1mm} \textup{\text{to}} \hspace{1mm} I_s^2$. Then, if $\delta - \mathbb{E}_{s \sim \rho^{\pi}}\sum_{j \in I_s^2} \pi(a_j|s) < 0$, the local optimal $\beta$ is $\min_{s \in \mathcal{S}, j \in I_s^2} \{ A^{\pi}(s,a_{k_s}) - A^{\pi} (s,a_j) \}$; otherwise, the local optimal $\beta$ is $\max_{s \in \mathcal{S}, j \in I_s^1} \{A^{\pi}(s,a_{k_s}) - A^{\pi} (s,a_j)\}$. \end{restatable}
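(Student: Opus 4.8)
The plan is to exploit the simple structure that the $0$--$1$ distance forces on the dual objective $F(\beta)$ of (\ref{wass_dual_formulation}), reducing it to an explicit one-dimensional convex piecewise-linear function whose breakpoints and slopes can be read off directly. First I would substitute $D_{ij}=0$ for $i=j$ and $D_{ij}=1$ for $i\neq j$ into the inner maximum. Writing $k_s=\text{argmax}_{i}A^\pi(s,a_i)$, a short case check gives, for $j\neq k_s$,
\[
\max_{i=1,\dots,N}\bigl(A^\pi(s,a_i)-\beta D_{ij}\bigr)=\max\bigl\{A^\pi(s,a_j),\,A^\pi(s,a_{k_s})-\beta\bigr\},
\]
whereas for $j=k_s$ the maximum equals $A^\pi(s,a_{k_s})$ since $\beta\geq 0$. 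Setting $\tau_{s,j}:=A^\pi(s,a_{k_s})-A^\pi(s,a_j)\geq 0$, the $j$-th summand is convex and piecewise linear in $\beta$ with a single kink at $\tau_{s,j}$, having slope $-1$ for $\beta<\tau_{s,j}$ and slope $0$ for $\beta>\tau_{s,j}$ (and the $k_s$-th summand is constant).

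Next I would assemble these pieces. Because $F$ is the linear term $\beta\delta$ plus a nonnegative mixture (integrating against $\rho^\pi_\upsilon(s)\pi(a_j|s)\geq 0$) of such convex functions, it is convex and piecewise linear, and away from the finitely many breakpoints its derivative is
\[
F'(\beta)=\delta-\mathbb{E}_{s\sim\rho^\pi_\upsilon}\Bigl[\,{\sum}_{j:\ \tau_{s,j}>\beta}\pi(a_j|s)\Bigr].
\]
As $\beta$ increases the index set $\{(s,j):\tau_{s,j}>\beta\}$ only shrinks, so $F'$ is nondecreasing, reconfirming convexity; hence the minimizer over any linear piece is pinned down purely by the sign of $F'$ on it. The three regimes in the statement are exactly the three placements of $\beta_0$ relative to the extreme breakpoints $\min_{s,j\neq k_s}\tau_{s,j}$ and $\max_{s,j}\tau_{s,j}$.

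I would then conclude by a direct case analysis of the slope. If $\beta_0\geq\max_{s,j}\tau_{s,j}$, the active index set is empty, so $F'(\beta)=\delta>0$ on all of $[\max_{s,j}\tau_{s,j},\infty)$ and the minimum over this piece is at its left endpoint $\max_{s,j}\tau_{s,j}$, which is (1). If $\beta_0\in[0,\min_{s,j\neq k_s}\tau_{s,j}]$, every $j\neq k_s$ is active, so $F'\equiv\delta-\mathbb{E}_{s\sim\rho^\pi_\upsilon}[\,1-\pi(a_{k_s}|s)\,]$ is constant there, and its sign places the minimum at the right endpoint $\min_{s,j\neq k_s}\tau_{s,j}$ (negative slope) or at $0$ (nonnegative slope), which is (2). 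If $\beta_0\in(\min_{s,j\neq k_s}\tau_{s,j},\max_{s,j}\tau_{s,j})$, the active set at $\beta_0$ is $\{(s,j):j\in I_s^2\}$, so $F'(\beta_0)=\delta-\mathbb{E}_{s\sim\rho^\pi_\upsilon}[\sum_{j\in I_s^2}\pi(a_j|s)]$; when this is negative the piece descends to its right endpoint, the smallest breakpoint above $\beta_0$, namely $\min_{s,j\in I_s^2}\tau_{s,j}$, and otherwise it is minimized at its left endpoint, the largest breakpoint at or below $\beta_0$, namely $\max_{s,j\in I_s^1}\tau_{s,j}$, which is (3).

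The main obstacle I anticipate is the bookkeeping in case (3): one must verify that the maximal linear piece containing $\beta_0$ has precisely the endpoints $\max_{s,j\in I_s^1}\tau_{s,j}$ and $\min_{s,j\in I_s^2}\tau_{s,j}$, i.e.\ that $I_s^1$ and $I_s^2$ correctly record the kinks below and above $\beta_0$, using that $k_s\in I_s^1$ because $\tau_{s,k_s}=0<\beta_0$. A secondary point is ties---when $k_s$ is non-unique or several $\tau_{s,j}$ coincide---where $F'$ at a breakpoint must be read as a one-sided slope (subdifferential); convexity still guarantees that these one-sided slopes determine the minimizer of the relevant piece, so the stated endpoints remain correct.
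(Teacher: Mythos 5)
Your proposal is correct and follows essentially the same route as the paper's proof: reduce the inner maximum under the $0$--$1$ cost to $\max\{A^{\pi}(s,a_j),\,A^{\pi}(s,a_{k_s})-\beta\}$ for $j\neq k_s$ (constant for $j=k_s$), identify the linear piece of the dual objective containing $\beta_0$ via the breakpoints $\tau_{s,j}=A^{\pi}(s,a_{k_s})-A^{\pi}(s,a_j)$, and pick the endpoint according to the sign of the slope $\delta-\mathbb{E}_{s\sim\rho^{\pi}_\upsilon}\bigl[\sum_{j\in I_s^2}\pi(a_j|s)\bigr]$, which is exactly the paper's three-case analysis. Your additional observation that $F$ is globally convex (nondecreasing slope), so that these local optima are in fact global, is a mild strengthening the paper does not state, but it does not change the argument.
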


\begin{proof}[Proof of Proposition \ref{prop_optimal_beta_wass}]

(1). When $\beta \in [\max_{s, j} \{A^{\pi}(s,a_{k_s}) - A^{\pi} (s,a_j) \}, +\infty)$, we have $A^{\pi} (s,a_j) \ge A^{\pi}(s,a_{k_s}) - \beta$ for all $s \in \mathcal{S}$, $j = 1 \dots N$. Since $A^{\pi}(s,a_{k_s}) - \beta \ge A^{\pi}(s,a_k) - \beta$ for all $k = 1 \dots N$, we have $A^{\pi} (s,a_j) \ge A^{\pi}(s,a_k) - \beta$  for all  $s \in \mathcal{S}$, $j = 1 \dots N$, $k = 1 \dots N$. Thus, $j \in \text{argmax}_{k = 1 \dots N} \{A^{\pi} (s,a_k) -  \beta D_{kj}\}$, for all $s \in \mathcal{S}$,  $j = 1 \dots N$. Therefore, (\ref{wass_dual_formulation}) can be reformulated as: 
\begin{equation*}
    \min_{\beta \ge 0} \{\beta\delta + \mathbb{E}_{s \sim \rho^{\pi}_\upsilon} \sum_{j=1}^N \pi(a_j|s) A^{\pi} (s,a_j) \}.
\end{equation*}
Since $\delta \ge 0$, we have the local optimal $\beta = \max_{s, j} \{A^{\pi}(s,a_{k_s}) - A^{\pi} (s,a_j) \}$.

(2). When $\beta \in [0, \min_{s, j \ne k_s} \{A^{\pi}(s,a_{k_s}) - A^{\pi} (s,a_j) \}]$, we have $A^{\pi} (s,a_j) \le A^{\pi}(s,a_{k_s}) - \beta$ for all $s \in \mathcal{S}$,  $j = 1 \dots N$, $j \ne k_s$. Thus $k_s \in \text{argmax}_{k = 1 \dots N} \{A^{\pi} (s,a_k) -  \beta D_{kj}\}$ for all $s \in \mathcal{S}$,  $j = 1 \dots N$. The inner part of (\ref{wass_dual_formulation}) then is: 
\begin{equation*}
\begin{gathered}
    \beta\delta + \mathbb{E}_{s \sim \rho^{\pi}_\upsilon} \{ \sum_{j=1, j \ne k_s}^N \pi(a_j|s) ( A^{\pi} (s,a_{k_s}) - \beta ) + \pi(a_{k_s}|s) A^{\pi} (s,a_{k_s}) \} \\
    = \beta (\delta - \mathbb{E}_{s \sim \rho^{\pi}_\upsilon}  \sum_{j=1, j \ne k_s}^N \pi(a_j|s))+ \mathbb{E}_{s \sim \rho^{\pi}_\upsilon}  \sum_{j=1}^N \pi(a_j|s)  A^{\pi} (s,a_{k_s}) \\
    = \beta (\delta - \int_{s\in \mathcal{S}} \rho^{\pi}_\upsilon (s)  (1 -  \pi(a_{k_s}|s)) ds ) + \mathbb{E}_{s \sim \rho^{\pi}_\upsilon}  \sum_{j=1}^N \pi(a_j|s) A^{\pi} (s,a_{k_s}).
\end{gathered} 
\end{equation*}
If $\delta - \int_{s\in \mathcal{S}} \rho^{\pi}_\upsilon (s)  (1 -  \pi(a_{k_s}|s)) ds < 0$, we have the local optimal $\beta = \min_{s, j \ne k_s} \{A^{\pi}(s,a_{k_s}) - A^{\pi} (s,a_j) \}$.
If $\delta - \int_{s\in \mathcal{S}} \rho^{\pi}_\upsilon (s)  (1 -  \pi(a_{k_s}|s)) ds  \ge 0$, we have the local optimal $\beta = 0$.

(3). For an initial point $\beta_0$ in $(\min_{s, j \ne k_s} \{A^{\pi}(s,a_{k_s}) - A^{\pi} (s,a_j) \}, \max_{s, j} \{A^{\pi}(s,a_{k_s}) - A^{\pi} (s,a_j) \})$, we construct partitions $I_s^1$ and $I_s^2$ of the set $\{1,2 \dots N\}$ in the way described in Proposition \ref{prop_optimal_beta_wass} for all $s \in \mathcal{S}$. Consider $\beta$ in the neighborhood of $\beta_0$, i.e., $\beta \ge A^{\pi}(s,a_{k_s}) - A^{\pi} (s,a_j)$ for $s \in \mathcal{S}, j \in I_s^1$ and $\beta \le A^{\pi}(s,a_{k_s}) - A^{\pi} (s,a_j)$ for $s \in \mathcal{S}, j \in I_s^2$. Then the inner part of (\ref{wass_dual_formulation}) can be reformulated as: 
\begin{equation*}
\begin{gathered}
    \beta\delta + \mathbb{E}_{s \sim \rho^{\pi}_\upsilon} \{ \sum_{j \in I_s^1} \pi(a_j|s)  A^{\pi} (s,a_{j})  + \sum_{j \in I_s^2} \pi(a_j|s) ( A^{\pi} (s,a_{k_s}) - \beta ) \} \\
    =  \beta (\delta - \mathbb{E}_{s \sim \rho^{\pi}_\upsilon}  \sum_{j \in I_s^2} \pi(a_j|s))+ \mathbb{E}_{s \sim \rho^{\pi}_\upsilon}  \{ \sum_{j \in I_s^1} \pi(a_j|s)  A^{\pi} (s,a_{j}) + \sum_{j \in I_s^2} \pi(a_j|s)  A^{\pi} (s,a_{k_s}) \}. \\
\end{gathered} 
\end{equation*}
If $\delta - \mathbb{E}_{s \sim \rho^{\pi}_\upsilon}\sum_{j \in I_s^2} \pi(a_j|s) < 0$, we have the local optimal $\beta = \min_{s \in \mathcal{S}, j \in I_s^2} \{ A^{\pi}(s,a_{k_s}) - A^{\pi} (s,a_j) \}$. If $\delta - \mathbb{E}_{s \sim \rho^{\pi}_\upsilon}\sum_{j \in I_s^2} \pi(a_j|s) \ge 0$, we have the local optimal $\beta = \max_{s \in \mathcal{S}, j \in I_s^1} \{A^{\pi}(s,a_{k_s}) - A^{\pi} (s,a_j)\}$. 
\end{proof}

\section{Proof of Theorem \ref{thm_opt_policy_sinkhorn}}
\label{appendix:sinkhorn}
\thmoptpolicysinkhorn*

\begin{proof}[Proof of Theorem \ref{thm_opt_policy_sinkhorn}]

Invoking the definition of Sinkhorn divergence in (\ref{def_sinkhorn}), the trust region constrained problem with Sinkhorn divergence can be reformulated as:
\begin{subequations}
\begin{align}
    \max_{Q} \hspace{5mm} & \mathbb{E}_{s \sim \rho_{\upsilon}^{\pi}} [{\sum}_{i=1}^N A^{\pi} (s,a_i) {\sum}_{j=1}^N Q^s_{ij}] \label{objective_sinkhorn_primal}\\
    s.t. \hspace{3mm} & \mathbb{E}_{s \sim \rho_{\upsilon}^{\pi}} [{\sum}_{i=1}^N {\sum}_{j=1}^N D_{ij}Q^s_{ij} + \frac{1}{\lambda} Q^s_{ij}\log Q^s_{ij} ] \le \delta \label{constraint1_sinkhorn_primal}\\
    & {\sum}_{i=1}^N Q^s_{ij} = \pi(a_j|s), \hspace{5mm} \forall j = 1,\dots, N, s \in \mathcal{S} \label{constraint2_sinkhorn_primal}.
\end{align}
\label{sinkhorn_odrpo_problem}
\end{subequations}

 Let $\beta$ and $\omega$ represent the dual variables of constraints (\ref{constraint1_sinkhorn_primal}) and (\ref{constraint2_sinkhorn_primal}) respectively, then the Lagrangian duality of (\ref{sinkhorn_odrpo_problem}) can be derived as:
\begin{subequations}
\begin{align}
 &\hspace{-0.2cm}   \max_{Q} \min_{\beta \ge 0, \omega} L(Q, \beta, \omega) =  \max_{Q}\min_{\beta \ge 0, \omega} \mathbb{E}_{s \sim \rho_{\upsilon}^{\pi}} [\sum_{i=1}^N A^{\pi} (s,a_i) \sum_{j=1}^N Q^s_{ij}]   \nonumber\\
        & + \int_{s \in \mathcal{S}} \sum_{j=1}^N \omega^s_j (\sum_{i=1}^N Q^s_{ij} - \pi(a_j|s)) ds  + \beta(\delta - \mathbb{E}_{s \sim \rho_{\upsilon}^{\pi}} [\sum_{i=1}^N \sum_{j=1}^N D_{ij}Q^s_{ij} + \frac{1}{\lambda} Q^s_{ij}\log Q^s_{ij} ])  \\
       \hspace{1cm}  =  &\ \ \max_{Q} \min_{\beta \ge 0, \omega}\mathbb{E}_{s \sim \rho_{\upsilon}^{\pi}} [\sum_{i=1}^N A^{\pi} (s,a_i) \sum_{j=1}^N Q^s_{ij}] + \int_{s \in \mathcal{S}} \sum_{j=1}^N \sum_{i=1}^N \frac{\omega^s_j}{\rho_{\upsilon}^{\pi}(s)}Q^s_{ij} \rho_{\upsilon}^{\pi}(s) ds  \nonumber\\
        & - \int_{s \in \mathcal{S}} \sum_{j=1}^N \omega^s_j \pi(a_j|s) ds + \beta\delta - \beta \mathbb{E}_{s \sim \rho_{\upsilon}^{\pi}} [\sum_{i=1}^N \sum_{j=1}^N D_{ij}Q^s_{ij} + \frac{1}{\lambda} Q^s_{ij}\log Q^s_{ij} ])  \\
      \hspace{1cm}  =  &\ \ \max_{Q} \min_{\beta \ge 0, \omega}\beta\delta  - \int_{s \in \mathcal{S}} \sum_{j=1}^N \omega^s_j \pi(a_j|s) ds \nonumber\\
      & + \mathbb{E}_{s \sim \rho_{\upsilon}^{\pi}} [\sum_{i=1}^N \sum_{j=1}^N (A^\pi(s,a_i) - \beta D_{ij} + \frac{\omega^s_j}{\rho_{\upsilon}^{\pi}(s)})Q^s_{ij} - \frac{\beta}{\lambda}Q^s_{ij}\log Q^s_{ij}]\\
     \hspace{1cm}  =  &\ \ \min_{\beta \ge 0, \omega}\max_{Q} \beta\delta  - \int_{s \in \mathcal{S}} \sum_{j=1}^N \omega^s_j \pi(a_j|s) ds \nonumber\\
      & + \mathbb{E}_{s \sim \rho_{\upsilon}^{\pi}} [\sum_{i=1}^N \sum_{j=1}^N (A^\pi(s,a_i) - \beta D_{ij} + \frac{\omega^s_j}{\rho_{\upsilon}^{\pi}(s)})Q^s_{ij} - \frac{\beta}{\lambda}Q^s_{ij}\log Q^s_{ij}] \label{minmax},
\end{align}
\label{sinkhorn_lagrangian}
\end{subequations}
where (\ref{minmax}) holds since the Lagrangian function $L(Q, \beta, \omega)$ is concave in $Q$ and linear in $\beta$ and $\omega$, and we can exchange the $\max$ and the $\min$ following the Minimax theorem \citep{sion1958general}. 

Note that the inner max problem of (\ref{minmax}) is an unconstrained concave problem, and we can obtain the optimal $Q$ by taking the derivatives and setting them to $0$. That is,
\begin{equation}
    \frac{\partial L}{\partial Q^s_{ij}} = A^\pi(s,a_i) - \beta D_{ij} + \frac{\omega^s_j}{\rho_{\upsilon}^{\pi}(s)} - \frac{\beta}{\lambda} (\log Q^s_{ij} + 1) = 0, \ \ \forall i, j = 1,\cdots,N, s \in \mathcal{S}. 
\label{sinkhorn_lagrangian_derivative}
\end{equation}
Therefore, we have the optimal $Q^{s*}_{ij}$ as:
\begin{align}
    Q^{s*}_{ij} = & \exp{(\frac{\lambda}{\beta} A^\pi (s,a_i) - \lambda D_{ij})} \exp{(\frac{\lambda \omega^s_j}{\beta \rho_{\upsilon}^{\pi}(s)} - 1)}, \ \ \forall i, j = 1,\cdots,N, s \in \mathcal{S}. \label{optP1}
\end{align}
In addition, since $\sum_{i=1}^N Q^{s*}_{ij} = \pi(a_j|s)$, we have the following hold: 
\begin{equation}
    \exp{(\frac{\lambda \omega^s_j}{\beta \rho_{\upsilon}^{\pi}(s)} - 1)} = \frac{\pi(a_j|s)}{\sum_{i=1}^N \exp{(\frac{\lambda}{\beta}A^\pi(s,a_i) - \lambda D_{ij})}}.
\label{sinkhorn_dual_omega}
\end{equation}
By substituting the left hand side of (\ref{sinkhorn_dual_omega}) into (\ref{optP1}), we can further reformulate the optimal $Q^{s*}_{ij}$ as:
\begin{equation}
    Q^{s*}_{ij} = \frac{\exp{(\frac{\lambda}{\beta}A^\pi(s,a_i) - \lambda D_{ij})}}{\sum_{k=1}^N \exp{(\frac{\lambda}{\beta}A^\pi(s,a_k) - \lambda D_{kj})}} \pi(a_j|s), \ \ \forall i, j = 1,\cdots,N, s \in \mathcal{S}.
\label{sinkhorn_pij_optimum}
\end{equation}
To obtain the optimal dual variables, based on (\ref{sinkhorn_dual_omega}), we have the optimal $\omega^*$ as:
\begin{equation}
   \omega^{s*}_j = \rho_{\upsilon}^{\pi}(s) \{\frac{\beta}{\lambda} + \frac{\beta}{\lambda} \ln (\pi(a_j|s)) - \frac{\beta}{\lambda} \ln [\sum_{i=1}^N \exp{(\frac{\lambda}{\beta}A^\pi(s,a_i) - \lambda D_{ij})}] \}, \ \ \forall j = 1,\cdots,N, s \in \mathcal{S}
\label{sinkhorn_dual_omega2}
\end{equation}
By substituting (\ref{sinkhorn_pij_optimum}) and (\ref{sinkhorn_dual_omega2}) into (\ref{minmax}), we can obtain the optimal $\beta^*$ via:
\begin{equation}
\begin{split}
    \min_{\beta \ge 0} \hspace{3mm} & \beta \delta - \mathbb{E}_{s \sim \rho^{\pi}_\upsilon} \sum_{j=1}^N \pi(a_j|s) \{\frac{\beta}{\lambda} + \frac{\beta}{\lambda} \ln (\pi(a_j|s)) - \frac{\beta}{\lambda} \ln [\sum_{i=1}^N \exp{(\frac{\lambda}{\beta}A^\pi(s,a_i) - \lambda D_{ij})}]\} \\
    & + \mathbb{E}_{s \sim \rho^{\pi}_\upsilon} \sum_{i=1}^N \sum_{j=1}^N \frac{\beta}{\lambda} \frac{\exp{(\frac{\lambda}{\beta}A^\pi(s,a_i) - \lambda D_{ij})} \cdot \pi(a_j|s)}{\sum_{k=1}^N \exp{(\frac{\lambda}{\beta}A^\pi(s,a_k) - \lambda D_{kj})}}.
\nonumber
\end{split}
\end{equation}
The proof for the upper bound of sinkhorn optimal $\beta$ can be found in Appendix \ref{UB_sinkhorn}.
\end{proof}

\section{Upper bound of Sinkhorn Optimal Beta}\label{UB_sinkhorn}
\xred{In this section, we will derive the upper bound of Sinkhorn optimal $\beta$. First, for a given $\beta$, the optimal $Q_{ij}^{s*}(\beta)$ to the Lagrangian dual $L(Q,\beta,\omega)$ can be expressed in (\ref{sinkhorn_pij_optimum}). With this, we will present the following two lemmas:}

\begin{restatable}{lemma}{lemmasinkhornobjectivedecreasewithbeta}
The objective function (\ref{objective_sinkhorn_primal}) with respect to $Q_{ij}^{s*}(\beta)$ decreases as the dual variable $\beta$ increases.
\label{lemma_sinkhorn_objective_decrease_with_beta}
\end{restatable}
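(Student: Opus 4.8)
The plan is to evaluate the primal objective (\ref{objective_sinkhorn_primal}) along the curve $\beta \mapsto Q^{s*}(\beta)$ prescribed by (\ref{sinkhorn_pij_optimum}) and to show it is non-increasing by differentiating in $\beta$. The crucial observation is that the $\beta$-dependence enters only through the scalar $t := \lambda/\beta$, and that for each fixed state $s$ and index $j$ the normalized weights
\[
p_i(t) := \frac{\exp(t\, A^\pi(s,a_i) - \lambda D_{ij})}{\sum_{k=1}^N \exp(t\, A^\pi(s,a_k) - \lambda D_{kj})}
\]
form a Gibbs/softmax distribution over $i$ at inverse temperature $t$. Writing $A_i := A^\pi(s,a_i)$, the constraint $\sum_i Q^{s*}_{ij} = \pi(a_j|s)$ decouples the objective across $j$, so the contribution of each pair $(s,j)$ equals $\pi(a_j|s)\,\phi_{sj}(t)$, where $\phi_{sj}(t) := \sum_{i=1}^N A_i\, p_i(t) = \mathbb{E}_{p(t)}[A]$, since $\sum_i A_i\, Q^{s*}_{ij}(\beta) = \pi(a_j|s)\sum_i A_i\, p_i(t)$.

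First I would compute the derivative of this softmax expectation with respect to the temperature. Using $\tfrac{d}{dt}\exp(t A_i - \lambda D_{ij}) = A_i \exp(t A_i - \lambda D_{ij})$ one obtains the standard identity $\tfrac{d}{dt}p_i(t) = p_i(t)\,(A_i - \mathbb{E}_{p(t)}[A])$, and therefore
\[
\frac{d}{dt}\phi_{sj}(t) = \sum_{i=1}^N A_i\, p_i(t)\,(A_i - \mathbb{E}_{p(t)}[A]) = \mathbb{E}_{p(t)}[A^2] - (\mathbb{E}_{p(t)}[A])^2 = \Var_{p(t)}[A] \ge 0 .
\]
Hence each $\phi_{sj}$ is non-decreasing in $t$.

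Next I would assemble the full objective. Summing over $j$ and taking the expectation over $s \sim \rho^{\pi}_\upsilon$, the objective value at $Q^{s*}(\beta)$ is $g(\beta) = \mathbb{E}_{s \sim \rho^{\pi}_\upsilon}\big[\sum_{j=1}^N \pi(a_j|s)\,\phi_{sj}(\lambda/\beta)\big]$. Since $t = \lambda/\beta$ satisfies $\tfrac{dt}{d\beta} = -\lambda/\beta^2 < 0$, the chain rule gives $\tfrac{d}{d\beta}\phi_{sj}(\lambda/\beta) = \Var_{p}[A]\cdot(-\lambda/\beta^2) \le 0$. Because $\pi(a_j|s) \ge 0$ and $\rho^{\pi}_\upsilon$ is a non-negative measure, every term in $g'(\beta)$ is non-positive, so $g$ is non-increasing in $\beta$, which is the claim.

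I expect the only real subtlety — rather than a genuine obstacle — to be the bookkeeping that shows the $\beta$-dependence factors through the single scalar $t = \lambda/\beta$ inside a per-$(s,j)$ softmax, so that the variance identity applies term by term; once this reduction is made, monotonicity is immediate. I would also note in passing that differentiating under the expectation over $s$ is justified because the action space is finite (the inner sums have finitely many terms) and $A^\pi$ is bounded by Assumption \ref{bounded_A}, so the integrand and its $\beta$-derivative are uniformly bounded on any interval $\beta \ge \beta_0 > 0$.
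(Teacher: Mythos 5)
Your proof is correct and takes a genuinely different route from the paper's. The paper argues by finite differences and a discrete rearrangement: for $\beta_2 > \beta_1$ it sorts actions by advantage value, writes the change in softmax weights as $f_s(i) = m_s(i)w_s(i)q_s(i)$ with $q_s(i)$ decreasing in the sorted index, deduces that $f_s(i)$ is nonnegative up to a threshold index $k_s$ and negative afterwards, and then bounds $\sum_i A^{\pi}(s,a_{[i]}) f_s(i)$ from below by $\bigl(A^{\pi}(s,a_{[k_s]}) - A^{\pi}(s,a_{[k_s+1]})\bigr)\sum_{i\le k_s}|f_s(i)| \ge 0$. You instead differentiate along the curve $\beta \mapsto Q^{s*}(\beta)$ of (\ref{sinkhorn_pij_optimum}), observing that the $\beta$-dependence factors through $t=\lambda/\beta$ and invoking the Gibbs identity $\frac{d}{dt}\mathbb{E}_{p(t)}[A] = \Var_{p(t)}[A]\ge 0$. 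Your argument is considerably shorter, makes the mechanism transparent, and quantifies the rate of decrease ($g'(\beta) = -(\lambda/\beta^2)\,\mathbb{E}_s\sum_j \pi(a_j|s)\Var_{p}[A]$); the paper's finite-difference argument, in exchange, needs no justification for differentiating under the expectation and compares arbitrary pairs $\beta_1<\beta_2$ directly.

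One point to tighten: the lemma is used in Proposition \ref{prop_opt_sinkhorn_beta_bounded} in a contradiction argument that needs a \emph{strict} decrease (if the two objective values could be equal, optimality of $Q^{s*}_{ij}(\beta_\lambda^*)$ would not be contradicted), and the paper's proof accordingly ends with $G_\lambda(\beta_1)-G_\lambda(\beta_2)>0$ in non-trivial cases. As written, your derivative bound only gives that $g$ is non-increasing. The fix is immediate within your framework: the softmax weights $p_i(t)$ are strictly positive, so $\Var_{p(t)}[A]=0$ forces $A^\pi(s,a_i)$ to be constant in $i$, and since $\sum_i \pi(a_i|s)A^\pi(s,a_i)=0$ that constant must be zero; excluding this trivial case (as the paper explicitly does), the variance is strictly positive on a set of states of positive $\rho^\pi_\upsilon$-measure, hence $g'(\beta)<0$ and the decrease is strict.
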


\begin{restatable}{lemma}{lemmasinkhornhasboundedfeasiblebeta}
If Assumption \ref{bounded_A} holds, then for every $\delta > 0$, $Q_{ij}^{s*}(\frac{2A^{\mbox {\tiny max}}}{\delta})$ is feasible to (\ref{constraint1_sinkhorn_primal}) for any $\lambda$. 
\label{lemma_sinkhorn_has_bounded_feasible_beta}
\end{restatable}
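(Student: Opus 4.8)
The plan is to characterize $Q^{s*}_{ij}(\beta)$ from (\ref{sinkhorn_pij_optimum}) as the maximizer of a single penalized objective and then compare it against the trivial diagonal coupling. Write $\mathcal{A}(Q) := \mathbb{E}_{s\sim\rho^\pi_\upsilon}[\sum_{i=1}^N A^\pi(s,a_i)\sum_{j=1}^N Q^s_{ij}]$ for the objective (\ref{objective_sinkhorn_primal}) and $C_\lambda(Q) := \mathbb{E}_{s\sim\rho^\pi_\upsilon}[\sum_{i,j}(D_{ij}Q^s_{ij}+\frac{1}{\lambda}Q^s_{ij}\log Q^s_{ij})]$ for the left-hand side of the constraint (\ref{constraint1_sinkhorn_primal}). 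The first step is the structural observation that, for fixed $\beta$, the closed form $Q^{s*}_{ij}(\beta)$ is exactly the maximizer of $\mathcal{A}(Q)-\beta\,C_\lambda(Q)$ over all nonnegative couplings obeying the column-marginal constraint $\sum_{i}Q^s_{ij}=\pi(a_j|s)$. This follows because the penalized objective is additively separable over $(s,j)$, and each per-column subproblem maximizes a strictly concave linear-minus-entropy functional over a scaled simplex, whose stationarity conditions reproduce the softmax form (\ref{sinkhorn_pij_optimum}) already computed in (\ref{sinkhorn_lagrangian_derivative})--(\ref{sinkhorn_pij_optimum}). I would state this as a key fact and reduce the lemma to showing $C_\lambda(Q^*(\beta_0))\le\delta$ at $\beta_0=2A^{\mbox{\tiny max}}/\delta$.

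Next I would introduce the diagonal coupling $\bar Q$ with $\bar Q^s_{ij}=\pi(a_j|s)\,\mathbf{1}\{i=j\}$, which also satisfies the column-marginal constraint and is therefore a legitimate competitor. Two elementary evaluations drive the argument: first, since $D_{jj}=0$ and $x\log x\le 0$ on $[0,1]$, we have $C_\lambda(\bar Q)=\frac{1}{\lambda}\mathbb{E}_{s\sim\rho^\pi_\upsilon}[\sum_j\pi(a_j|s)\log\pi(a_j|s)]\le 0$; second, because $\mathbb{E}_{a\sim\pi(\cdot|s)}[A^\pi(s,a)]=0$ for every $s$, the objective vanishes, $\mathcal{A}(\bar Q)=0$. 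Optimality of $Q^*(\beta_0)$ for $\mathcal{A}-\beta_0 C_\lambda$ over the marginal-feasible set then gives $\mathcal{A}(Q^*(\beta_0))-\beta_0 C_\lambda(Q^*(\beta_0))\ge \mathcal{A}(\bar Q)-\beta_0 C_\lambda(\bar Q)=-\beta_0 C_\lambda(\bar Q)\ge 0$, i.e.
\begin{equation*}
\beta_0\,C_\lambda(Q^*(\beta_0))\ \le\ \mathcal{A}(Q^*(\beta_0)).
\end{equation*}

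To finish I would bound the right-hand side. Writing $\pi^*(a_i|s)=\sum_j Q^{s*}_{ij}(\beta_0)$ and using $\mathbb{E}_{a\sim\pi}[A^\pi(s,a)]=0$ once more, $\mathcal{A}(Q^*(\beta_0))=\mathbb{E}_{s\sim\rho^\pi_\upsilon}[\sum_i A^\pi(s,a_i)(\pi^*(a_i|s)-\pi(a_i|s))]$, so Assumption \ref{bounded_A} together with $\sum_i|\pi^*(a_i|s)-\pi(a_i|s)|\le 2$ yields $\mathcal{A}(Q^*(\beta_0))\le 2A^{\mbox{\tiny max}}$. Substituting $\beta_0=2A^{\mbox{\tiny max}}/\delta$ into the displayed inequality gives $\frac{2A^{\mbox{\tiny max}}}{\delta}\,C_\lambda(Q^*(\beta_0))\le 2A^{\mbox{\tiny max}}$, hence $C_\lambda(Q^*(\beta_0))\le\delta$, which is precisely feasibility in (\ref{constraint1_sinkhorn_primal}); this conclusion holds regardless of the sign of $C_\lambda(Q^*(\beta_0))$ and for every $\lambda>0$, as required.

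I expect the main obstacle to be the first step: rigorously justifying that the closed form (\ref{sinkhorn_pij_optimum}) is a genuine \emph{global} maximizer of $\mathcal{A}-\beta C_\lambda$ on the marginal-feasible polytope, rather than merely a stationary point of the Lagrangian. The clean way around this is the per-column separation described above, since concavity makes the interior softmax stationary point the unique maximizer of each column subproblem. Everything else is bookkeeping: the two identities $D_{jj}=0$ and $\mathbb{E}_{a\sim\pi}[A^\pi(s,a)]=0$ do all the work, and no appeal to Lemma \ref{lemma_sinkhorn_objective_decrease_with_beta} is needed for feasibility itself.
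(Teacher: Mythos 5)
Your proof is correct, and it takes a genuinely different route from the paper's. The paper argues by direct computation: it substitutes the closed form (\ref{sinkhorn_pij_optimum}) into the left-hand side of (\ref{constraint1_sinkhorn_primal}), notes that the $\lambda D_{ij}$ terms inside $\log Q^{s*}_{ij}$ cancel the transport cost, and splits what remains into an advantage term and a log-partition term, bounding each piece by $\delta/2$ at $\beta=2A^{\mbox{\tiny max}}/\delta$ via $|A^\pi|\le A^{\mbox{\tiny max}}$, $\log\pi(a_j|s)\le 0$, and retaining only the $k=j$ term (where $D_{jj}=0$) of the partition sum. You instead use a variational comparison: once $Q^*(\beta)$ is identified as the global maximizer of the penalized objective $\mathcal{A}(Q)-\beta C_\lambda(Q)$ over the marginal polytope --- a step you justify correctly, since the problem separates over $(s,j)$ columns and each column subproblem is strictly concave with the softmax as its interior stationary point --- feasibility falls out of comparing against the diagonal coupling, whose penalized cost is nonpositive (using $D_{jj}=0$ and $x\log x\le 0$ on $[0,1]$) and whose objective value vanishes (using $\mathbb{E}_{a\sim\pi}[A^\pi(s,a)]=0$). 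Your route buys conceptual robustness: it never manipulates the softmax expression after the variational characterization, so it would extend essentially verbatim to other regularizers or costs for which no closed-form optimizer is available, provided the identity coupling has nonpositive penalized cost; it even leaves slack, since $\mathcal{A}(Q^*(\beta))\le A^{\mbox{\tiny max}}$ holds directly (as $\pi^*(\cdot|s)$ is a probability distribution), so your argument in fact establishes feasibility already at $\beta=A^{\mbox{\tiny max}}/\delta$, a factor-two improvement on the stated threshold. The paper's route buys self-containedness: it needs no re-derivation of global optimality of the Lagrangian stationary point over the constrained polytope (precisely the step you flag as delicate), and it exhibits the constraint value explicitly as two halves each below $\delta/2$, which is the form most directly tied to how $\beta_\lambda^*\le 2A^{\mbox{\tiny max}}/\delta$ is then used in Proposition \ref{prop_opt_sinkhorn_beta_bounded}.
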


We provide proofs for Lemma \ref{lemma_sinkhorn_objective_decrease_with_beta} and Lemma \ref{lemma_sinkhorn_has_bounded_feasible_beta} in Appendix \ref{appendix:lemma1} and Appendix \ref{appendix:lemma2} respectively. Given the above two lemmas, we are able to prove the following proposition \xred{on the upper bound of Sinkhorn optimal $\beta$:}

\begin{restatable}{prop}{propoptsinkhornbetabounded}
If $\beta_\lambda^*$ is the optimal dual solution to the Sinkhorn dual formulation (\ref{sinkhorn_dual_formulation}), then $\beta_\lambda^* \leq \frac{2A^{\mbox {\tiny max}}}{\delta}$ for any $\lambda$.
\label{prop_opt_sinkhorn_beta_bounded}
\end{restatable}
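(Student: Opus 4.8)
The plan is to exhibit the one-dimensional convex dual objective $F_\lambda$ in (\ref{sinkhorn_dual_formulation}) as non-decreasing on $[\bar\beta_0,\infty)$, where $\bar\beta_0 := \frac{2A^{\mbox {\tiny max}}}{\delta}$; this forces every minimizer to lie in $[0,\bar\beta_0]$ and hence yields $\beta_\lambda^* \le \bar\beta_0$. To set up, for each fixed $\beta\ge 0$ let $Q^{s*}(\beta)$ denote the inner maximizer given in closed form by (\ref{sinkhorn_pij_optimum}), and write $h(\beta)$ for the value of the primal objective (\ref{objective_sinkhorn_primal}) and $g(\beta)$ for the value of the constraint functional on the left-hand side of (\ref{constraint1_sinkhorn_primal}), both evaluated at $Q^{s*}(\beta)$. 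Since $Q^{s*}(\beta)$ always satisfies the marginal constraints, the $\omega$-terms in the Lagrangian vanish and the dual function collapses to $F_\lambda(\beta)=h(\beta)+\beta\bigl(\delta-g(\beta)\bigr)$.

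First I would record the two structural facts I need about $F_\lambda$. Because $F_\lambda$ is a pointwise supremum over marginal-feasible couplings $Q$ of maps that are affine in $\beta$, it is convex in $\beta$, and by the envelope (Danskin) theorem its derivative is $F_\lambda'(\beta)=\delta-g(\beta)$. The role of Lemma \ref{lemma_sinkhorn_objective_decrease_with_beta} is to pin down the monotonicity of the realized constraint value: since $h$ is non-increasing in $\beta$ and the objective moves together with the transport cost through the common stationarity condition defining $Q^{s*}(\beta)$ (concretely, differentiating the collapsed form of $F_\lambda$ and matching it with the envelope derivative gives the identity $h'(\beta)=\beta\,g'(\beta)$, so $h'\le 0$ and $\beta\ge0$ force $g'\le 0$), the constraint value $g$ is non-increasing in $\beta$. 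Equivalently, convexity of $F_\lambda$ directly makes $F_\lambda'=\delta-g$ non-decreasing.

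Next I would anchor feasibility at $\bar\beta_0$. Lemma \ref{lemma_sinkhorn_has_bounded_feasible_beta} states precisely that $Q^{s*}(\bar\beta_0)$ is feasible for (\ref{constraint1_sinkhorn_primal}), i.e.\ $g(\bar\beta_0)\le\delta$, and this holds for every $\lambda$. Combining this anchor with the monotonicity of $g$ gives $g(\beta)\le g(\bar\beta_0)\le\delta$ for all $\beta\ge\bar\beta_0$, hence $F_\lambda'(\beta)=\delta-g(\beta)\ge 0$ there. Thus $F_\lambda$ is non-decreasing on $[\bar\beta_0,\infty)$, so its minimum over $[0,\infty)$ is attained at some point of $[0,\bar\beta_0]$; taking the smallest minimizer (or invoking complementary slackness, $\beta_\lambda^*>0\Rightarrow g(\beta_\lambda^*)=\delta$, to rule out a strict minimizer beyond $\bar\beta_0$) yields $\beta_\lambda^*\le\bar\beta_0=\frac{2A^{\mbox {\tiny max}}}{\delta}$ for every $\lambda$.

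The main obstacle I anticipate is the step establishing that the realized constraint value $g(\beta)$ is genuinely non-increasing: the entropic term $\frac1\lambda Q\log Q$ in (\ref{constraint1_sinkhorn_primal}) makes it non-obvious that raising the multiplier $\beta$ actually lowers the transport-plus-entropy cost, and this is exactly the content that Lemma \ref{lemma_sinkhorn_objective_decrease_with_beta} is designed to supply (through the identity $h'(\beta)=\beta\,g'(\beta)$, or equivalently through convexity of $F_\lambda$ combined with the envelope formula). A secondary, more routine subtlety is the possible non-uniqueness of the dual minimizer when $F_\lambda$ has a flat stretch reaching into $[\bar\beta_0,\infty)$; this is handled by passing to the least minimizer, at which the claimed bound holds.
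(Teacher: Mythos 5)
Your proposal is correct, but it proves the bound from the dual side, whereas the paper argues on the primal side. The paper's proof is a short contradiction: by Lemma \ref{lemma_sinkhorn_has_bounded_feasible_beta} the coupling $Q^{s*}\!\left(\tfrac{2A^{\mbox{\tiny max}}}{\delta}\right)$ is feasible for (\ref{constraint1_sinkhorn_primal}), and since $\beta_\lambda^*$ is dual optimal, $Q^{s*}(\beta_\lambda^*)$ is optimal for the primal problem (\ref{sinkhorn_odrpo_problem}); if $\beta_\lambda^* > \tfrac{2A^{\mbox{\tiny max}}}{\delta}$, then Lemma \ref{lemma_sinkhorn_objective_decrease_with_beta} (the primal objective evaluated along $\beta \mapsto Q^{s*}(\beta)$ is \emph{strictly} decreasing) exhibits a feasible point with strictly larger objective, contradicting that optimality. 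You instead collapse the dual to $F_\lambda(\beta) = h(\beta) + \beta(\delta - g(\beta))$ (an identity that does hold for (\ref{sinkhorn_dual_formulation}), as one can check directly), use convexity of $F_\lambda$ and Danskin's theorem to get $F_\lambda'(\beta) = \delta - g(\beta)$ with $g$ non-increasing, anchor $g\!\left(\tfrac{2A^{\mbox{\tiny max}}}{\delta}\right) \le \delta$ via Lemma \ref{lemma_sinkhorn_has_bounded_feasible_beta}, and conclude that $F_\lambda$ is non-decreasing beyond the threshold. Both proofs consume the same two lemmas, but yours buys a structural insight the paper leaves implicit: convexity of the dual alone already forces the realized constraint value $g$ to be non-increasing, so Lemma \ref{lemma_sinkhorn_objective_decrease_with_beta} is only needed for strictness; conversely, the paper's primal exchange is shorter and bounds \emph{every} optimal dual solution for free, precisely because that lemma is strict. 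One caution on your closing patch: passing to the least minimizer, or the complementary-slackness remark that $g(\beta_\lambda^*) = \delta$ at an interior minimizer, does not by itself exclude a flat stretch of minimizers with $g \equiv \delta$ extending past $\tfrac{2A^{\mbox{\tiny max}}}{\delta}$, so it proves the bound only for one particular minimizer. To get the proposition as stated, combine your own identity $h'(\beta) = \beta\, g'(\beta)$ with the strict decrease of $h$ from Lemma \ref{lemma_sinkhorn_objective_decrease_with_beta}: this makes $g$ strictly decreasing, hence $F_\lambda'(\beta) = \delta - g(\beta) > 0$ strictly beyond the threshold, and no optimal dual solution can exceed it.
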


\begin{proof}[Proof of Proposition \ref{prop_opt_sinkhorn_beta_bounded}]
We will prove it by contradiction. According to Lemma \ref{lemma_sinkhorn_has_bounded_feasible_beta}, $Q_{ij}^{s*}(\frac{2A^{\mbox {\tiny max}}}{\delta})$ is feasible to (\ref{constraint1_sinkhorn_primal}). Since $\beta_{\lambda}^*$ is the optimal dual solution, $Q_{ij}^{s*}(\beta_{\lambda}^*)$ is optimal to (\ref{sinkhorn_odrpo_problem}). If $\beta_\lambda^* > \frac{2A^{\mbox {\tiny max}}}{\delta}$, according to Lemma \ref{lemma_sinkhorn_objective_decrease_with_beta}, the objective value in (\ref{objective_sinkhorn_primal}) with respect to $\frac{2A^{\mbox {\tiny max}}}{\delta}$ is smaller than the objective value in (\ref{objective_sinkhorn_primal}) with respect to $\beta_\lambda^*$, which contradicts the fact that $Q_{ij}^{s*}(\beta_{\lambda}^*)$ is the optimal solution to (\ref{sinkhorn_odrpo_problem}).
\end{proof}



\subsection{Proof of Lemma \ref{lemma_sinkhorn_objective_decrease_with_beta}}\label{appendix:lemma1}
\lemmasinkhornobjectivedecreasewithbeta*

\begin{proof}[Proof of Lemma \ref{lemma_sinkhorn_objective_decrease_with_beta}]
Let $G_{\lambda}(\beta)$ represent the objective function (\ref{objective_sinkhorn_primal}). By substituting the optimal $Q_{ij}^{s*}$ in (\ref{sinkhorn_pij_optimum}) into (\ref{objective_sinkhorn_primal}), we have: 
\begin{subequations}
\begin{align}
G_{\lambda}(\beta) &= \mathbb{E}_{s \sim \rho_{\upsilon}^{\pi}} [\sum_{i=1}^N A^{\pi} (s,a_i) \sum_{j=1}^N \frac{\exp{(\frac{\lambda}{\beta}A^\pi(s,a_i) - \lambda D_{ij})}}{\sum_{k=1}^N \exp{(\frac{\lambda}{\beta}A^\pi(s,a_k) - \lambda D_{kj})}} \pi(a_j|s)] \\
& = \mathbb{E}_{s \sim \rho_{\upsilon}^{\pi}} [\sum_{j=1}^N \pi(a_j|s) \sum_{i=1}^N  A^{\pi} (s,a_i)  \frac{\exp{(\frac{\lambda}{\beta}A^\pi(s,a_i) - \lambda D_{ij})}}{\sum_{k=1}^N \exp{(\frac{\lambda}{\beta}A^\pi(s,a_k) - \lambda D_{kj})}}]. 
\end{align}
\end{subequations}
For any $\beta_2 > \beta_1 > 0$, we have:
\begin{subequations}
\begin{align}
& G_{\lambda}(\beta_1) - G_{\lambda}(\beta_2) \nonumber\\
& = \mathbb{E}_{s \sim \rho_{\upsilon}^{\pi}} \sum_{j=1}^N \pi(a_j|s) \sum_{i=1}^N  A^{\pi} (s,a_i)  \{\frac{\exp{(\frac{\lambda}{\beta_1}A^\pi(s,a_i) - \lambda D_{ij})}}{\sum_{k=1}^N \exp{(\frac{\lambda}{\beta_1}A^\pi(s,a_k) - \lambda D_{kj})}} \nonumber\\
& - \frac{\exp{(\frac{\lambda}{\beta_2}A^\pi(s,a_i) - \lambda D_{ij})}}{\sum_{k=1}^N \exp{(\frac{\lambda}{\beta_2}A^\pi(s,a_k) - \lambda D_{kj})}}\} \\
& = \mathbb{E}_{s \sim \rho_{\upsilon}^{\pi}} \sum_{j=1}^N \pi(a_j|s) \sum_{i=1}^N  A^{\pi} (s,a_{[i]})  \{\frac{\exp{(\frac{\lambda}{\beta_1}A^\pi(s,a_{[i]}) - \lambda D_{[i]j})}}{\sum_{k=1}^N \exp{(\frac{\lambda}{\beta_1}A^\pi(s,a_{[k]}) - \lambda D_{[k]j})}} \nonumber\\
&- \frac{\exp{(\frac{\lambda}{\beta_2}A^\pi(s,a_{[i]}) - \lambda D_{[i]j})}}{\sum_{k=1}^N \exp{(\frac{\lambda}{\beta_2}A^\pi(s,a_{[k]}) - \lambda D_{[k]j})}}\},
\end{align}
\end{subequations}
where $[i]$ denotes sorted indices that satisfy $A^\pi(s,a_{[1]}) \ge A^\pi(s,a_{[2]}) \ge \dots \ge A^\pi(s,a_{[N]})$. Let 
\begin{subequations}
\begin{align}
f_s(i) &= \frac{\exp{(\frac{\lambda}{\beta_1}A^\pi(s,a_{[i]}) - \lambda D_{[i]j})}}{\sum_{k=1}^N \exp{(\frac{\lambda}{\beta_1}A^\pi(s,a_{[k]}) - \lambda D_{[k]j})}} - \frac{\exp{(\frac{\lambda}{\beta_2}A^\pi(s,a_{[i]}) - \lambda D_{[i]j})}}{\sum_{k=1}^N \exp{(\frac{\lambda}{\beta_2}A^\pi(s,a_{[k]}) - \lambda D_{[k]j})}} \\
&= \frac{\exp{((\frac{\lambda}{\beta_1}-\frac{\lambda}{\beta_2})A^\pi(s,a_{[i]}))}\exp{(\frac{\lambda}{\beta_2}A^\pi(s,a_{[i]}) - \lambda D_{[i]j})}}{\sum_{k=1}^N \exp{((\frac{\lambda}{\beta_1}-\frac{\lambda}{\beta_2})A^\pi(s,a_{[k]}))}  \exp{(\frac{\lambda}{\beta_2}A^\pi(s,a_{[k]}) - \lambda D_{[k]j})}} \nonumber\\
&- \frac{\exp{(\frac{\lambda}{\beta_2}A^\pi(s,a_{[i]}) - \lambda D_{[i]j})}}{\sum_{k=1}^N \exp{(\frac{\lambda}{\beta_2}A^\pi(s,a_{[k]}) - \lambda D_{[k]j})}} \label{diff}.
\end{align}
\end{subequations}
For notation brevity, we let $m_s(i) = \exp{((\frac{\lambda}{\beta_1}-\frac{\lambda}{\beta_2})A^\pi(s,a_{[i]}))} > 0$, $w_s(i) = \exp{(\frac{\lambda}{\beta_2}A^\pi(s,a_{[i]}) - \lambda D_{[i]j})} > 0$ and $q_s(i) = \frac{1}{\sum_{k=1}^N m_s(k) w_s(k)} - \frac{1}{\sum_{k=1}^N m_s(i) w_s(k)}$. Then we have 
\begin{subequations}
\begin{align}
(\ref{diff}) & = \frac{m_s(i) w_s(i)}{\sum_{k=1}^N m_s(k) w_s(k)} - \frac{w_s(i)}{\sum_{k=1}^N w_s(k)} \\
& = m_s(i) w_s(i) ( \frac{1}{\sum_{k=1}^N m_s(k) w_s(k)} - \frac{1}{\sum_{k=1}^N m_s(i) w_s(k)} ) \\
& = m_s(i) w_s(i) q_s(i).
\end{align}
\end{subequations}
Since $\frac{\lambda}{\beta_1}-\frac{\lambda}{\beta_2} > 0$, $m_s(i)$ decreases as $i$ increases. Thus, $q_s(i)$ decreases as $i$ increases. Since $m_s(1) \ge m_s(k)$ and $m_s(N) \le m_s(k)$ for all $k = 1, \dots, N$, we have $q_s(1) = \frac{1}{\sum_{k=1}^N m_s(k) w_s(k)} - \frac{1}{\sum_{k=1}^N m_s(1) w_s(k)} \ge \frac{1}{\sum_{k=1}^N m_s(k) w_s(k)} - \frac{1}{\sum_{k=1}^N m_s(k) w_s(k)} = 0$, and $q_s(N) = \frac{1}{\sum_{k=1}^N m_s(k) w_s(k)} - \frac{1}{\sum_{k=1}^N m_s(N) w_s(k)} \le \frac{1}{\sum_{k=1}^N m_s(k) w_s(k)} - \frac{1}{\sum_{k=1}^N m_s(k) w_s(k)} = 0$. Since $q_s(1) \ge 0$, $q_s(N) \le 0$ and $q_s(i)$ decreases as $i$ increases, there exists an index $1 \le k_s \le N$ such that $q_s(i) \ge 0$ for $i \le k_s$ and $q_s(i) < 0$ for $i > k_s$. Since $m_s(i), w_s(i) > 0$, we have $f_s(i) \ge 0$ for $i \le k_s$ and $f_s(i) < 0$ for $i > k_s$. In addition, we have $\sum_{i=1}^N f_s(i) = 0$ directly follows from the definition. Thus, $\sum_{i=1}^N f_s(i) = \sum_{i=1}^{k_s}|f_s(i)| - \sum_{i=k_s+1}^{N}|f_s(i)| = 0$. Therefore, 
\begin{subequations}
\begin{align}
G_{\lambda}(\beta_1) - G_{\lambda}(\beta_2) &= \mathbb{E}_{s \sim \rho_{\upsilon}^{\pi}} \sum_{j=1}^N \pi(a_j|s) \sum_{i=1}^N  A^{\pi} (s,a_{[i]}) f_s(i) \\
& = \mathbb{E}_{s \sim \rho_{\upsilon}^{\pi}} \sum_{j=1}^N \pi(a_j|s) \{\sum_{i=1}^{k_s}  A^{\pi} (s,a_{[i]}) |f_s(i)| -  \sum_{i=k_s+1}^N  A^{\pi} (s,a_{[i]}) |f_s(i)|\} \\
& \ge \mathbb{E}_{s \sim \rho_{\upsilon}^{\pi}} \sum_{j=1}^N \pi(a_j|s) \{\sum_{i=1}^{k_s}  A^{\pi} (s,a_{[k_s]}) |f_s(i)| -  \sum_{i=k_s+1}^N  A^{\pi} (s,a_{[k_s+1]}) |f_s(i)|\} \label{Adecrease} \\
& = \mathbb{E}_{s \sim \rho_{\upsilon}^{\pi}} \sum_{j=1}^N \pi(a_j|s) \{A^{\pi} (s,a_{[k_s]}) \sum_{i=1}^{k_s} |f_s(i)| -  A^{\pi} (s,a_{[k_s+1]}) \sum_{i=k_s+1}^N |f_s(i)|\} \\
& = \mathbb{E}_{s \sim \rho_{\upsilon}^{\pi}} \sum_{j=1}^N \pi(a_j|s) \{A^{\pi} (s,a_{[k_s]}) \sum_{i=1}^{k_s} |f_s(i)| -  A^{\pi} (s,a_{[k_s+1]}) \sum_{i=1}^{k_s}|f_s(i)|\} \\
& = \mathbb{E}_{s \sim \rho_{\upsilon}^{\pi}} \sum_{j=1}^N \pi(a_j|s) (A^{\pi} (s,a_{[k_s]}) -  A^{\pi} (s,a_{[k_s+1]})) \sum_{i=1}^{k_s}|f_s(i)| \\
& \ge 0 \label{Adecrease2}.
\end{align}
\end{subequations}
where (\ref{Adecrease}) and (\ref{Adecrease2}) hold since $A^{\pi} (s,a_{[i]})$ is non-increasing as $i$ increases. Furthermore, at least one inequality of (\ref{Adecrease}) and (\ref{Adecrease2}) will not hold at \zcyan{equality} since $\sum_{i=1}^N \pi(a_i|s) A^\pi(s,a_i) = 0$, $\forall s \in  \mathcal{S}$, and for non-trivial cases, $Pr\{A^{\pi} (s,a) = 0, \forall s \in \mathcal{S}, \forall a \in \mathcal{A}\} < 1$, which means $Pr\{\exists s_1, s_2 \in \mathcal{S}, a_1, a_2 \in \mathcal{A}, \ s.t. \ A^{\pi} (s_1,a_1) \neq  A^{\pi} (s_2,a_2)\} > 0$.  Therefore, we have $G_{\lambda}(\beta_1) - G_{\lambda}(\beta_2) > 0$. 
\end{proof}

\subsection{Proof of Lemma \ref{lemma_sinkhorn_has_bounded_feasible_beta}}\label{appendix:lemma2}
\lemmasinkhornhasboundedfeasiblebeta*

\begin{proof}[Proof of Lemma \ref{lemma_sinkhorn_has_bounded_feasible_beta}]
By substituting the optimal $Q_{ij}^{s*}$ in (\ref{sinkhorn_pij_optimum}) into (\ref{constraint1_sinkhorn_primal}), we can reformulate the left hand side of (\ref{constraint1_sinkhorn_primal}) as follows:
\begin{subequations}
\begin{align}
& \mathbb{E}_{s \sim \rho_{\upsilon}^{\pi}} [\sum_{i=1}^N \sum_{j=1}^N D_{ij}Q^{s*}_{ij} + \frac{1}{\lambda} Q^{s*}_{ij}\log Q^{s*}_{ij}] \\
& =  \mathbb{E}_{s \sim \rho_{\upsilon}^{\pi}} \{\sum_{i=1}^N \sum_{j=1}^N D_{ij}Q^{s*}_{ij} + \frac{1}{\lambda} Q^{s*}_{ij}[\frac{\lambda}{\beta}A^\pi(s,a_i) - \lambda D_{ij} + \log{\frac{\pi(a_j|s)}{\sum_{k=1}^N \exp{(\frac{\lambda}{\beta}A^\pi(s,a_k) - \lambda D_{kj})}}}] \} \\
& = \mathbb{E}_{s \sim \rho_{\upsilon}^{\pi}} \{ \sum_{i=1}^N \sum_{j=1}^N \frac{1}{\beta}Q^{s*}_{ij} A^\pi(s,a_i) + \frac{1}{\lambda}Q^{s*}_{ij}\log{\frac{\pi(a_j|s)}{\sum_{k=1}^N \exp{(\frac{\lambda}{\beta}A^\pi(s,a_k) - \lambda D_{kj})}}}
\}.
\end{align}
\end{subequations}
Now we prove that when $\beta = \frac{2A^{\mbox {\tiny max}}}{\delta}$, $\mathbb{E}_{s \sim \rho_{\upsilon}^{\pi}} \{ \sum_{i=1}^N \sum_{j=1}^N \frac{1}{\beta}Q^{s*}_{ij}(\beta) A^\pi(s,a_i)\} \le \frac{\delta}{2}$ and $\mathbb{E}_{s \sim \rho_{\upsilon}^{\pi}} \{\frac{1}{\lambda}Q^{s*}_{ij}(\beta)\log{\frac{\pi(a_j|s)}{\sum_{k=1}^N \exp{(\frac{\lambda}{\beta}A^\pi(s,a_k) - \lambda D_{kj})}}}\} \le \frac{\delta}{2}$ hold. For the first part, we have: 
\begin{subequations}
\begin{align}
&\mathbb{E}_{s \sim \rho_{\upsilon}^{\pi}} \{ \sum_{i=1}^N \sum_{j=1}^N \frac{1}{\beta}Q^{s*}_{ij} A^\pi(s,a_i) \} \\
& = \frac{1}{\beta} \mathbb{E}_{s \sim \rho_{\upsilon}^{\pi}} \{ \sum_{i=1}^N [\sum_{j=1}^N Q^{s*}_{ij}] A^\pi(s,a_i) \} \\
& = \frac{1}{\beta} \mathbb{E}_{s \sim \rho_{\upsilon}^{\pi}} \{ \sum_{i=1}^N \zcyan{\pi'(a_i|s)} A^\pi(s,a_i) \}  \\ 
& \le \frac{1}{\beta} \mathbb{E}_{s \sim \rho_{\upsilon}^{\pi}} \{ \sum_{i=1}^N \zcyan{\pi'(a_i|s)} |A^\pi(s,a_i)| \}  \\
& \le \frac{A^{\mbox \tiny max}}{\beta} = \frac{\delta}{2}.
\end{align}
\end{subequations}
For the second part, the followings hold:
\begin{subequations}
\begin{align}
& \mathbb{E}_{s \sim \rho_{\upsilon}^{\pi}} \{ \sum_{i=1}^N \sum_{j=1}^N  \frac{1}{\lambda} Q^{s*}_{ij}\log{\frac{\pi(a_j|s)}{\sum_{k=1}^N \exp{(\frac{\lambda}{\beta}A^\pi(s,a_k) - \lambda D_{kj})}}}
\} \\
& = \mathbb{E}_{s \sim \rho_{\upsilon}^{\pi}} \{
\sum_{j=1}^N  \frac{1}{\lambda} (\sum_{i=1}^N Q^{s*}_{ij}) \log{\frac{\pi(a_j|s)}{\sum_{k=1}^N \exp{(\frac{\lambda}{\beta}A^\pi(s,a_k) - \lambda D_{kj})}}}
\}  \\
& = \frac{1}{\lambda} \mathbb{E}_{s \sim \rho_{\upsilon}^{\pi}} \{
\sum_{j=1}^N  \pi(a_j|s) \log{\frac{\pi(a_j|s)}{\sum_{k=1}^N \exp{(\frac{\lambda}{\beta}A^\pi(s,a_k) - \lambda D_{kj})}}}
\}  \\
& \leq \frac{1}{\lambda} \mathbb{E}_{s \sim \rho_{\upsilon}^{\pi}} \{
\sum_{j=1}^N  \pi(a_j|s) \log{\frac{\pi(a_j|s)}{ \exp{(\frac{\lambda}{\beta}A^\pi(s,a_j))}}}
\}  \\
& \leq \frac{1}{\lambda} \mathbb{E}_{s \sim \rho_{\upsilon}^{\pi}} \{
\sum_{j=1}^N  \pi(a_j|s) \log{\frac{1}{ \exp{(\frac{\lambda}{\beta}A^\pi(s,a_j))}}}
\}  \\
& = \frac{1}{\lambda} \mathbb{E}_{s \sim \rho_{\upsilon}^{\pi}} \{ \sum_{j=1}^N  \pi(a_j|s) (-\frac{\lambda}{\beta} A^\pi(s,a_j)) \}  \\
& \leq \frac{1}{\beta} \mathbb{E}_{s \sim \rho_{\upsilon}^{\pi}} \{ \sum_{j=1}^N \pi(a_j|s) |A^\pi(s,a_j)| \} \\
& \leq \frac{A^{\mbox \tiny max}}{\beta} = \frac{\delta}{2}.
\end{align} 
\end{subequations}

Therefore, $Q_{ij}^{s*}(\frac{2A^{\mbox {\tiny max}}}{\delta})$ is feasible to (\ref{constraint1_sinkhorn_primal}) for any $\lambda$.
\end{proof}

\section{Gradient of the Objective in the Sinkhorn Dual Formulation}\label{appendix:gradient}
The closed-form gradient of the objective in the Sinkhorn dual formulation (\ref{sinkhorn_dual_formulation}) is as follows:
\small
\begin{equation*}
\begin{split}
    &\delta - \mathbb{E}_{s \sim \rho^{\pi}_\upsilon}\sum_{j=1}^N \pi(a_j|s) \Big\{\frac{1}{\lambda} + \frac{1}{\lambda}\ln(\pi(a_j|s)) - \frac{1}{\lambda} \ln [\sum_{i=1}^N \exp{(\frac{\lambda}{\beta}A^\pi(s,a_i) - \lambda D_{ij})}] \\
    &- \frac{\beta}{\lambda} \cdot \frac{1}{\sum_{i=1}^N \exp{(\frac{\lambda}{\beta}A^\pi(s,a_i) - \lambda D_{ij})}} \times \sum_{i=1}^N [\exp{(\frac{\lambda}{\beta}A^\pi(s,a_i) - \lambda D_{ij})} \times -\lambda A^\pi(s,a_i) \beta^{-2}] \Big\} \\
    &+ \mathbb{E}_{s \sim \rho^{\pi}_\upsilon}\sum_{i=1}^N\sum_{j=1}^N \Big\{\frac{\pi(a_j|s)}{\lambda}\frac{\exp{(\frac{\lambda}{\beta}A^\pi(s,a_i) - \lambda D_{ij})}}{\sum_{k=1}^N \exp{(\frac{\lambda}{\beta}A^\pi(s,a_k) - \lambda D_{kj})}} \\
    &+ \frac{\beta \pi(a_j|s)}{\lambda} \cdot \frac{\exp{(\frac{\lambda}{\beta}A^\pi(s,a_i) - \lambda D_{ij})} \times -\lambda A^\pi(s,a_i) \beta^{-2} \times \sum_{k=1}^N \exp{(\frac{\lambda}{\beta}A^\pi(s,a_k) - \lambda D_{kj})}}{(\sum_{k=1}^N \exp{(\frac{\lambda}{\beta}A^\pi(s,a_k) - \lambda D_{kj})})^2} \\
    &- \frac{\beta \pi(a_j|s)}{\lambda} \cdot \frac{\exp{(\frac{\lambda}{\beta}A^\pi(s,a_i) - \lambda D_{ij})} \times \sum_{k=1}^N [\exp{(\frac{\lambda}{\beta}A^\pi(s,a_k) - \lambda D_{kj})} \times -\lambda A^\pi(s,a_k) \beta^{-2}] }{(\sum_{k=1}^N \exp{(\frac{\lambda}{\beta}A^\pi(s,a_k) - \lambda D_{kj})})^2} \Big\}.
\end{split}
\end{equation*}
\normalsize

\section{Proof of Theorem \ref{thm_opt_beta_sinkhorn_wass_relationship}}
\label{appendix:optbetaconvergence}
\xred{
Given the upper bound of Wassertein optimal $\beta$ in Theorem \ref{thm_opt_policy_wass} and the upper bound of Sinkhorn optimal $\beta$ in Proposition \ref{prop_opt_sinkhorn_beta_bounded}, we are able to derive the following theorem: }

\thmoptbetasinkhornwassrelationship*
\begin{proof}[Proof of Theorem \ref{thm_opt_beta_sinkhorn_wass_relationship}]
To show that $F_\lambda(\beta)$ converges to $F(\beta)$ uniformly on $[0, \beta_{\text{UB}}]$, it is equivalent to show that $\lim_{\lambda \xrightarrow{} \infty} \sup_{0 \le \beta \le \beta_{\text{UB}}} \Big| F_\lambda(\beta) - F(\beta) \Big| = 0$. Let $\zcyan{\epsilon_s^\pi(\beta,i,j)} = \max_{k = 1 \dots N} (A^{\pi} (s,a_k) -  \beta D_{kj}) - [A^{\pi} (s,a_i) -  \beta D_{ij}]$, and $\zcyan{\epsilon_s^\pi(\beta,i,j)} \geq 0$.
First, we have
\begin{subequations}
    \begin{align}
    & \Big| F_\lambda(\beta) - F(\beta) \Big| \notag \\
    & = \Big| \beta \delta - \mathbb{E}_{s \sim \rho^{\pi}_\upsilon} \sum_{j=1}^N \pi(a_j|s) \{\frac{\beta}{\lambda} + \frac{\beta}{\lambda} \ln (\pi(a_j|s)) - \frac{\beta}{\lambda} \ln [\sum_{i=1}^N \exp{(\frac{\lambda}{\beta}A^\pi(s,a_i) - \lambda D_{ij})}]\} \notag \\
    & + \mathbb{E}_{s \sim \rho^{\pi}_\upsilon} \sum_{i=1}^N \sum_{j=1}^N \frac{\beta}{\lambda} \frac{\exp{(\frac{\lambda}{\beta}A^\pi(s,a_i) - \lambda D_{ij})} \cdot \pi(a_j|s)}{\sum_{k=1}^N \exp{(\frac{\lambda}{\beta}A^\pi(s,a_k) - \lambda D_{kj})}} - \beta\delta \notag \\
    & - \mathbb{E}_{s \sim \rho^{\pi}_\upsilon} \sum_{j=1}^N \pi(a_j|s) \max_{i = 1 \dots N} (A^{\pi} (s,a_i) -  \beta D_{ij}) \Big| \\
    & \leq  \Big|\frac{\beta}{\lambda}\mathbb{E}_{s \sim \rho^{\pi}_\upsilon} \sum_{j=1}^N \pi(a_j|s)\Big| +  \Big|\frac{\beta}{\lambda}\mathbb{E}_{s \sim \rho^{\pi}_\upsilon} \sum_{j=1}^N \pi(a_j|s)\ln (\pi(a_j|s))\Big|  \notag \\ & + \Big|\mathbb{E}_{s \sim \rho^{\pi}_\upsilon} \sum_{i=1}^N \sum_{j=1}^N \frac{\beta}{\lambda} \frac{\exp{(\frac{\lambda}{\beta}A^\pi(s,a_i) - \lambda D_{ij})} \cdot \pi(a_j|s)}{\sum_{k=1}^N \exp{(\frac{\lambda}{\beta}A^\pi(s,a_k) - \lambda D_{kj})}}\Big| \notag \\
    & +  \Big|\mathbb{E}_{s \sim \rho^{\pi}_\upsilon} \sum_{j=1}^N \pi(a_j|s)  \frac{\beta}{\lambda} \ln [\sum_{i=1}^N \exp{(\frac{\lambda}{\beta}A^\pi(s,a_i) - \lambda D_{ij})}]  \notag \\
    & - \mathbb{E}_{s \sim \rho^{\pi}_\upsilon} \sum_{j=1}^N \pi(a_j|s) \max_{i = 1 \dots N} (A^{\pi} (s,a_i) -  \beta D_{ij}) \Big| \\
    & \le 2\Big|\frac{\beta}{\lambda}\mathbb{E}_{s \sim \rho^{\pi}_\upsilon} \sum_{j=1}^N \pi(a_j|s)\Big| +  \Big|\frac{\beta}{\lambda}\mathbb{E}_{s \sim \rho^{\pi}_\upsilon} \sum_{j=1}^N \pi(a_j|s)\ln (\pi(a_j|s))\Big| \notag \\
    & + \Big|\mathbb{E}_{s \sim \rho^{\pi}_\upsilon} \sum_{j=1}^N \pi(a_j|s) \frac{\beta}{\lambda} \ln [\sum_{i=1}^N \exp{(\frac{\lambda}{\beta}A^\pi(s,a_i) - \lambda D_{ij})}]  \notag \\ 
    & - \mathbb{E}_{s \sim \rho^{\pi}_\upsilon} \sum_{j=1}^N \pi(a_j|s) \max_{i = 1 \dots N} (A^{\pi} (s,a_i) -  \beta D_{ij}) \Big| \label{abslimit}.
    \end{align}
\end{subequations}
In addition, 
\begin{subequations}
    \begin{align}
    &\Big|\mathbb{E}_{s \sim \rho^{\pi}_\upsilon} \sum_{j=1}^N \pi(a_j|s) \frac{\beta}{\lambda} \ln [\sum_{i=1}^N \exp{(\frac{\lambda}{\beta}A^\pi(s,a_i) - \lambda D_{ij})}] \notag \\ 
    & - \mathbb{E}_{s \sim \rho^{\pi}_\upsilon} \sum_{j=1}^N \pi(a_j|s) \max_{i = 1 \dots N} (A^{\pi} (s,a_i) -  \beta D_{ij}) \Big| \\
    = &  \Big|\mathbb{E}_{s \sim \rho^{\pi}_\upsilon} \sum_{j=1}^N \pi(a_j|s) \frac{\beta}{\lambda} \ln [\exp{(\frac{\lambda}{\beta}\max_{k = 1 \dots N} (A^{\pi} (s,a_k) -  \beta D_{kj}))} \sum_{i=1}^N \exp{(-\frac{\lambda}{\beta}\zcyan{\epsilon_s^\pi(\beta,i,j)})}] \notag \\&  - \mathbb{E}_{s \sim \rho^{\pi}_\upsilon} \sum_{j=1}^N \pi(a_j|s) \max_{i = 1 \dots N} (A^{\pi} (s,a_i) -  \beta D_{ij}) \Big| \\
    = & \Big|\mathbb{E}_{s \sim \rho^{\pi}_\upsilon} \sum_{j=1}^N \pi(a_j|s) \frac{\beta}{\lambda} \{ \ln [\exp{(\frac{\lambda}{\beta}\max_{k = 1 \dots N} (A^{\pi} (s,a_k) -  \beta D_{kj}))}] + \ln[\sum_{i=1}^N \exp{(-\frac{\lambda}{\beta}\zcyan{\epsilon_s^\pi(\beta,i,j)})}]\} \notag \\&  - \mathbb{E}_{s \sim \rho^{\pi}_\upsilon} \sum_{j=1}^N \pi(a_j|s) \max_{i = 1 \dots N} (A^{\pi} (s,a_i) -  \beta D_{ij}) \Big| \\
    = &  \Big|\mathbb{E}_{s \sim \rho^{\pi}_\upsilon} \sum_{j=1}^N \pi(a_j|s)   \frac{\beta}{\lambda} \ln[\sum_{i=1}^N \exp{(-\frac{\lambda}{\beta}\zcyan{\epsilon_s^\pi(\beta,i,j)})}] \Big|.
    \end{align}
\end{subequations}
Therefore, 
\begin{subequations}
\begin{align}
& \lim_{\lambda \xrightarrow{} \infty} \sup_{0 \le \beta \le \beta_{\text{UB}}} \Big| F_\lambda(\beta) - F(\beta) \Big|  \\
   & \leq \lim_{\lambda \xrightarrow{} \infty}  \frac{2\beta_{\text{UB}}}{\lambda}\Big|\mathbb{E}_{s \sim \rho^{\pi}_\upsilon} \sum_{j=1}^N \pi(a_j|s)\Big| + \lim_{\lambda \xrightarrow{} \infty} \frac{\beta_{\text{UB}}}{\lambda} \Big|\mathbb{E}_{s \sim \rho^{\pi}_\upsilon} \sum_{j=1}^N \pi(a_j|s)\ln (\pi(a_j|s))\Big|  \nonumber \\
   & + \lim_{\lambda \xrightarrow{} \infty} \sup_{0 \le \beta \le \beta_{\text{UB}}} \frac{\beta}{\lambda}\Big|\mathbb{E}_{s \sim \rho^{\pi}_\upsilon} \sum_{j=1}^N \pi(a_j|s)   \ln[\sum_{i=1}^N \exp{(-\frac{\lambda}{\beta}\zcyan{\epsilon_s^\pi(\beta,i,j)})}] \Big| \\
   &= \lim_{\lambda \xrightarrow{} \infty} \sup_{0 \le \beta \le \beta_{\text{UB}}} \frac{\beta}{\lambda}\Big|\mathbb{E}_{s \sim \rho^{\pi}_\upsilon} \sum_{j=1}^N \pi(a_j|s)   \ln[\sum_{i=1}^N \exp{(-\frac{\lambda}{\beta}\zcyan{\epsilon_s^\pi(\beta,i,j)})}] \Big|. \label{limsup}
\end{align}
\end{subequations}
In addition, $\forall \beta \in [0, \beta_{\text{UB}}]$ and $\forall \lambda$, $\zcyan{\epsilon_s^\pi(\beta,i,j)}$ is bounded since
\begin{align}
 \Big|\zcyan{\epsilon_s^\pi(\beta,i,j)}\Big| & = 
 \Big|\max_{k = 1 \dots N} (A^{\pi} (s,a_k) -  \beta D_{kj})  - [A^{\pi} (s,a_i) -  \beta D_{ij}]\Big|
 \\
 & \leq 2\max_{s,a}A^{\pi}(s,a) + \beta_{\text{UB}} \max_{i,j}D_{ij} \nonumber \\ 
 & \zcyan{\leq 2 A^{\mbox {\tiny max}} + \beta_{\text{UB}} \max_{i,j}D_{ij}} < \infty \label{boundfore}.
\end{align}
Then, $\Big|\mathbb{E}_{s \sim \rho^{\pi}_\upsilon} \sum_{j=1}^N \pi(a_j|s)   \ln[\sum_{i=1}^N \exp{(-\frac{\lambda}{\beta}\zcyan{\epsilon_s^\pi(\beta,i,j)})}] \Big|$ is bounded. Therefore in (\ref{limsup}), the optimal $\beta$ can be achieved. Let $\beta^\lambda = \text{argmax}_{0 \le \beta \le \beta_{\text{UB}}} \frac{\beta}{\lambda}\Big|\mathbb{E}_{s \sim \rho^{\pi}_\upsilon} \sum_{j=1}^N \pi(a_j|s)   \ln[\sum_{i=1}^N \exp{(-\frac{\lambda}{\beta}\zcyan{\epsilon_s^\pi(\beta,i,j)})}] \Big|$, and then we have: 
\begin{subequations}
\begin{align}
    & \lim_{\lambda \xrightarrow{} \infty} \sup_{0 \le \beta \le \beta_{\text{UB}}} \frac{\beta}{\lambda}\Big|\mathbb{E}_{s \sim \rho^{\pi}_\upsilon} \sum_{j=1}^N \pi(a_j|s)   \ln[\sum_{i=1}^N \exp{(-\frac{\lambda}{\beta}\zcyan{\epsilon_s^\pi(\beta,i,j)})}] \Big| \\
    & = \lim_{\lambda \xrightarrow{} \infty} \frac{\beta^\lambda}{\lambda} \Big|\mathbb{E}_{s \sim \rho^{\pi}_\upsilon} \sum_{j=1}^N \pi(a_j|s)   \ln[\sum_{i=1}^N \exp{(-\frac{\lambda}{\beta^\lambda}\zcyan{\epsilon_s^\pi(\beta^\lambda,i,j)})}] \Big|.
\end{align}
\end{subequations}

Let $\mathcal{K}^\pi_s(\beta,j) = \text{argmax}_{k = 1 \dots N} A^{\pi} (s,a_k) -  \beta D_{kj}$. Define $\zcyan{\sigma_s(j)} = \min_{0\le \beta \le \beta_{\text{UB}}} \min_{i = 1 \dots N, i \notin \zcyan{\mathcal{K}^\pi_s(\beta,j)}} \zcyan{\epsilon_s^\pi(\beta,i,j)}$. Then since $\zcyan{\epsilon_s^\pi(\beta,i,j)}>0$ \zcyan{for $i \notin \mathcal{K}^\pi_s(\beta,j)$} based on its definition, we have $\zcyan{\sigma_s(j)} > 0$. On one hand, we have
\begin{subequations}
\begin{align}
    &\lim_{\lambda \xrightarrow{} \infty}\ln[\sum_{i=1}^N \exp{ (-\frac{\lambda}{\beta^\lambda}\zcyan{\epsilon_s^\pi(\beta^\lambda,i,j)})}] \\
    &= \lim_{\lambda \xrightarrow{} \infty}\ln[\sum_{i=1| i \notin \zcyan{\mathcal{K}^\pi_s(\beta_\lambda,j)}}^N \exp{ (-\frac{\lambda}{\beta^\lambda}\zcyan{\epsilon_s^\pi(\beta^\lambda,i,j)})} + \sum_{i=1| i \in \zcyan{\mathcal{K}^\pi_s(\beta_\lambda,j)}}^N \exp{ ( -\frac{\lambda}{\beta^\lambda}\zcyan{\epsilon_s^\pi(\beta^\lambda,i,j)})}] \\
    &\le \lim_{\lambda \xrightarrow{} \infty}\ln[\sum_{i=1| i \notin \zcyan{\mathcal{K}^\pi_s(\beta_\lambda,j)}}^N \exp{ ( -\frac{\lambda}{\beta_{\text{UB}}} \zcyan{\sigma_s(j)})} + \sum_{i=1| i \in \zcyan{\mathcal{K}^\pi_s(\beta_\lambda,j)}}^N \exp{ (0)}] \\
    & = \lim_{\lambda \xrightarrow{} \infty}\ln[\sum_{i=1| i \notin \zcyan{\mathcal{K}^\pi_s(\beta_\lambda,j)}}^N \exp{ ( -\frac{\lambda}{\beta_{\text{UB}}} \zcyan{\sigma_s(j)})} + |\zcyan{\mathcal{K}^\pi_s(\beta_\lambda,j)}| ] \\
    & = \lim_{\lambda \xrightarrow{} \infty} \ln[|\zcyan{\mathcal{K}^\pi_s(\beta_\lambda,j)}|]\label{upper_bound_ln}.
\end{align}
\end{subequations}
On the other hand, we have
\begin{subequations}
\begin{align}
    &\lim_{\lambda \xrightarrow{} \infty} \ln[\sum_{i=1}^N \exp{ (-\frac{\lambda}{\beta^\lambda}\zcyan{\epsilon_s^\pi(\beta^\lambda,i,j)})}] \\
    &= \lim_{\lambda \xrightarrow{} \infty}\ln[\sum_{i=1| i \notin \zcyan{\mathcal{K}^\pi_s(\beta_\lambda,j)}}^N \exp{ (-\frac{\lambda}{\beta^\lambda}\zcyan{\epsilon_s^\pi(\beta^\lambda,i,j)})} + \sum_{i=1| i \in \zcyan{\mathcal{K}^\pi_s(\beta_\lambda,j)}}^N \exp{ ( -\frac{\lambda}{\beta^\lambda}\zcyan{\epsilon_s^\pi(\beta^\lambda,i,j)})}] \\
    &\ge \lim_{\lambda \xrightarrow{} \infty}\ln[\sum_{i=1| i \in \zcyan{\mathcal{K}^\pi_s(\beta_\lambda,j)}}^N \exp{ ( -\frac{\lambda}{\beta^\lambda}\zcyan{\epsilon_s^\pi(\beta^\lambda,i,j)})}] \\
    &= \lim_{\lambda \xrightarrow{} \infty}\ln[\sum_{i=1| i \in \zcyan{\mathcal{K}^\pi_s(\beta_\lambda,j)}}^N \exp{ (0)}] \\
    & = \lim_{\lambda \xrightarrow{} \infty}\ln[ |\zcyan{\mathcal{K}^\pi_s(\beta_\lambda,j)}| ] \label{lower_bound_ln}.
\end{align}
\end{subequations}
Therefore, $\lim_{\lambda \xrightarrow{} \infty} \Big|\ln[\sum_{i=1}^N \exp{ (-\frac{\lambda}{\beta^\lambda}\zcyan{\epsilon_s^\pi(\beta^\lambda,i,j)})}]\Big| = \lim_{\lambda \xrightarrow{} \infty} \ln[|\zcyan{\mathcal{K}^\pi_s(\beta_\lambda,j)}|]$. Based on that, we have   
\begin{subequations}
\begin{align}
&\lim_{\lambda \xrightarrow{} \infty} \frac{\beta^\lambda}{\lambda} \Big|\mathbb{E}_{s \sim \rho^{\pi}_\upsilon} \sum_{j=1}^N \pi(a_j|s)   \ln[\sum_{i=1}^N \exp{(-\frac{\lambda}{\beta^\lambda}\zcyan{\epsilon_s^\pi(\beta^\lambda,i,j)})}] \Big|  \\
&\le \lim_{\lambda \xrightarrow{} \infty} \frac{\beta^\lambda}{\lambda} \Big|\sum_{j=1}^N \ln[\sum_{i=1}^N \exp{(-\frac{\lambda}{\beta^\lambda}\zcyan{\epsilon_s^\pi(\beta^\lambda,i,j)})}] \Big| \\
&\le \lim_{\lambda \xrightarrow{} \infty} \frac{\beta^\lambda}{\lambda} \sum_{j=1}^N \Big|\ln[\sum_{i=1}^N \exp{(-\frac{\lambda}{\beta^\lambda}\zcyan{\epsilon_s^\pi(\beta^\lambda,i,j)})}] \Big| \\
& = \lim_{\lambda \xrightarrow{} \infty} \frac{\beta^\lambda}{\lambda} \sum_{j=1}^N  \ln[|\zcyan{\mathcal{K}^\pi_s(\beta_\lambda,j)}|] \\
& \le \lim_{\lambda \xrightarrow{} \infty} \frac{\beta_{\text{UB}}}{\lambda} N \ln N = 0,
\end{align}
\end{subequations}
which means $\lim_{\lambda \xrightarrow{} \infty} \sup_{0 \le \beta \le \beta_{\text{UB}}} \Big| F_\lambda(\beta) - F(\beta) \Big| \le 0$. Furthermore, since $\lim_{\lambda \xrightarrow{} \infty} \sup_{0 \le \beta \le \beta_{\text{UB}}} |F_\lambda(\beta) - F(\beta)| \ge 0$ holds naturally, we have $\lim_{\lambda \xrightarrow{} \infty} \sup_{0 \le \beta \le \beta_{\text{UB}}} |F_\lambda(\beta) - F(\beta)| = 0$. Therefore, $F_\lambda(\beta)$ converges to $F(\beta)$ uniformly on $[0, \beta_{\text{UB}}]$, which also indicates $F_\lambda(\beta)$ epi-converges to $F(\beta)$ on $[0, \beta_{\text{UB}}]$ \citep{royset2016epiconvergence, rockafellar1998variationalanalysis}. By properties of epi-convergence, we have that $\begin{aligned} \lim_{\lambda \xrightarrow{} \infty} \text{argmin}_{0 \le \beta \le \beta_{\text{UB}}} F_{\lambda} (\beta) \subseteq \text{argmin}_{0 \le \beta \le \beta_{\text{UB}}} F(\beta) \end{aligned}$ \citep{rockafellar1998variationalanalysis}.
\end{proof}

\section{Proof of Lemma \ref{lemma_spo_converge_to_wpo}}
\label{appendix:spo_converge_to_wpo}
\lemmaspoconvergetowpo*

\begin{proof}[Proof of Lemma \ref{lemma_spo_converge_to_wpo}]
Let $\xi_s^k(i,j) = \frac{\lambda}{\beta_k} \{
\max_{l=1,\dots,N} (\hat{A}^{\pi_k} (s,a_{l}) -  \beta_k D_{lj}) - [\hat{A}^{\pi_k} (s,a_i) -  \beta_k D_{ij}]\}$. The SPO update with $\lambda \xrightarrow{} \infty$ equals to: 

\begin{subequations}
\begin{align}
    \pi_{k+1}(a_i|s) &= \lim_{\lambda \xrightarrow{} \infty} \mathbb{F}^{\textrm{SPO}}(\pi_k)=  \lim_{\lambda \xrightarrow{} \infty} \sum_{j=1}^N \frac{\exp{(\frac{\lambda}{\beta_k}\hat{A}^{\pi_k}(s,a_i) - \lambda D_{ij})}}{{\sum}_{l=1}^N \exp{(\frac{\lambda}{\beta_k}\hat{A}^{\pi_k}(s,a_l) - \lambda D_{lj})}} \pi_k(a_j|s) \\
    & = \lim_{\lambda \xrightarrow{} \infty} \sum_{j=1}^N  \frac{\exp{(\hat{A}^{\pi_k} (s,a_{\hat{k}^{\pi_k}_s(\beta_k, j)}) -  \beta_k D_{\hat{k}^{\pi_k}_s(\beta_k, j)j}) \cdot \exp{(- \xi_s^k(i,j))}}}{ \exp{(\hat{A}^{\pi_t} (s,a_{\hat{k}^{\pi_k}_s(\beta_k, j)}) -  \beta_k D_{\hat{k}^{\pi_k}_s(\beta_k, j)j}) \cdot {\sum}_{l=1}^N \exp(-\xi_s^k(l,j))}} \pi_k(a_j|s) \\
    & = \lim_{\lambda \xrightarrow{} \infty} \sum_{j=1}^N  \frac{\exp{(- \xi_s^k(i,j))}}{{\sum}_{l=1}^N \exp(-\xi_s^k(l,j))} \pi_k(a_j|s) \\
    & = \lim_{\lambda \xrightarrow{} \infty} \sum_{j=1}^N  \frac{\exp{(- \xi_s^k(i,j))}\cdot\pi_k(a_j|s)}{{\sum}_{l \in \hat{\mathcal{K}}^{\pi_k}_s(\beta_k, j)} \exp(-\xi_s^k(l,j)) + {\sum}_{l \notin \hat{\mathcal{K}}^{\pi_k}_s(\beta_k, j)} \exp(-\xi_s^k(l,j))}  \\
    & = \sum_{j=1}^N  \frac{\lim_{\lambda \xrightarrow{} \infty}\exp{(- \xi_s^k(i,j))}\cdot\pi_k(a_j|s)}{{\sum}_{l \in \hat{\mathcal{K}}^{\pi_k}_s(\beta_k, j)} \lim_{\lambda \xrightarrow{} \infty}\exp(-\xi_s^k(l,j)) + {\sum}_{l \notin \hat{\mathcal{K}}^{\pi_k}_s(\beta_k, j)} \lim_{\lambda \xrightarrow{} \infty}\exp(-\xi_s^k(l,j))} \\
    & = \sum_{j=1}^N  \frac{I_{\hat{\mathcal{K}}^{\pi_k}_s(\beta_k, j)}(i)}{|\hat{\mathcal{K}}^{\pi_k}_s(\beta_k, j)|}\pi_k(a_j|s), \label{spo_infty_lambda_proof1}
\end{align}
\label{spo_infty}
\end{subequations}
where $I$ denotes the indicator function; (\ref{spo_infty_lambda_proof1}) holds because as $\lambda \xrightarrow{} \infty$, $\xi_s^k(i,j) = \infty$ for $i \notin \hat{\mathcal{K}}^{\pi_k}_s(\beta_k, j)$ and $0$ otherwise, thus $\lim_{\lambda \xrightarrow{} \infty}\exp{(- \xi_s^k(i,j))} = 0$ for $i \notin \hat{\mathcal{K}}^{\pi_k}_s(\beta_k, j)$ and $1$ otherwise. 

Let $f_s^k(i,j) = \frac{1}{|\hat{\mathcal{K}}^{\pi_k}_s(\beta_k, j)|}$ if $i \in \hat{\mathcal{K}}^{\pi_k}_s(\beta_k, j)$, and  $f_s^k(i,j) = 0$ otherwise. Therefore, SPO update with $\lambda \xrightarrow{} \infty$ equals to the following WPO update, $\mathbb{F}^{\text{WPO}}(\pi_k) = \sum_{j=1}^N \pi_k(a_j|s)f_s^k(i,j)$.  
\end{proof}

\section{Proof of Theorem \ref{thm_monotonic_improvement_wpo_and_spo}}
\label{appendix:monotonic}
\thmmonotonicimprovementwpoandspo*

\begin{proof}[Proof of Theorem \ref{thm_monotonic_improvement_wpo_and_spo}]
\begin{subequations}
\begin{align}
    {J}(\pi_{k+1}) - {J}(\pi_k) &= \mathbb{E}_{s \sim \rho_\upsilon^{\pi_{k+1}}} \mathbb{E}_{a \sim \pi_{k+1}}  [A^{\pi_k} (s,a)] \label{monotonic_improvement_inexact_proof_1} \\
    & = \mathbb{E}_{s \sim \rho_\upsilon^{\pi_{k+1}}} \sum_{i=1}^N \pi_{k+1}(a_i|s) A^{\pi_k} (s,a_i) \label{monotonic_improvement_inexact_proof_2} \\
    & = \mathbb{E}_{s \sim \rho_\upsilon^{\pi_{k+1}}} \sum_{i=1}^N \sum_{j=1}^N \pi_k(a_j|s) f_s^k(i, j) A^{\pi_k} (s,a_i) \label{monotonic_improvement_inexact_proof_3} \\
    & = \mathbb{E}_{s \sim \rho_\upsilon^{\pi_{k+1}}} \sum_{j=1}^N  \pi_k(a_j|s) \sum_{i=1}^N  f_s^k(i, j) A^{\pi_k} (s,a_i)  \label{monotonic_improvement_inexact_proof_4} \\
    & = \mathbb{E}_{s \sim \rho_\upsilon^{\pi_{k+1}}} \sum_{j=1}^N \pi_{k}(a_j|s) \sum_{i \in \hat{\mathcal{K}}^{\pi_k}_s(\beta_k, j)} f_s^k(i, j) A^{\pi_k} (s,a_i)   \label{monotonic_improvement_inexact_proof_5} \\
    & \ge \mathbb{E}_{s \sim \rho_\upsilon^{\pi_{k+1}}} \sum_{j=1}^N \pi_{k}(a_j|s) \sum_{i \in \hat{\mathcal{K}}^{\pi_k}_s(\beta_k, j)} f_s^k(i, j) [A^{\pi_k} (s,a_j) + \beta_k D_{ij} - 2\epsilon] \label{monotonic_improvement_inexact_proof_6} \\
    & = \beta_k \mathbb{E}_{s \sim \rho_\upsilon^{\pi_{k+1}}} \sum_{j=1}^N \pi_{k}(a_j|s) \sum_{i \in \hat{\mathcal{K}}^{\pi_k}_s(\beta_k, j)} f_s^k(i, j) D_{ij} - \frac{2\epsilon}{1-\gamma}, \label{monotonic_improvement_inexact_proof_7}
\end{align}
\label{monotonic_improvement_inexact_proof}
\end{subequations}

where (\ref{monotonic_improvement_inexact_proof_1}) holds due to the performance difference lemma in \citet{kakade2002_approximatelyoptimal}; (\ref{monotonic_improvement_inexact_proof_6}) follows from the definition of $\hat{\mathcal{K}}^{\pi_k}_s(\beta_k, j)$ and the fact that $||\hat{A}^{\pi_k} - A^{\pi_k} ||_\infty \le \epsilon$, therefore for $i \in \hat{\mathcal{K}}^{\pi_k}_s(\beta_k, j)$, $[A^{\pi_k} (s, a_i) + \epsilon] - \beta_k D_{ij} \ge \hat{A}^{\pi_k} (s, a_i) - \beta_k D_{ij} \ge \hat{A}^{\pi_k} (s, a_j) - \beta_k D_{jj} = \hat{A}^{\pi_k} (s, a_j) \ge A^{\pi_k} (s, a_j) - \epsilon$; (\ref{monotonic_improvement_inexact_proof_7}) holds since $\mathbb{E}_{a \sim \pi}[A^\pi(s,a)] = 0$. 
\end{proof}

\section{Proof of Theorem \ref{thm: GlobalConvergence}}
\label{appendix:globalconvergenceproof}
\thmglobalconvergence*

\begin{proof}[Proof of Theorem \ref{thm: GlobalConvergence}] Our proof is inspired by the 
work \cite{bhandari2021linear}. 

We use the shorthand $\pi_s$ for the probability distribution $\pi(\cdot\mid s)$ on the actions and denote the probability distribution on the action space $\mathcal{A}$ as $\Delta$. To save notations, we rewrite $\pi_{k+1},\pi_k$ and $\beta_k$ as $\pi^+,\pi$ and $\beta$ respectively. We use $d$ for either $d_{\text{W}}$ or $d_{\text{S}}$ in the following derivation. Note $d \leq \|D\|_{\infty} =\,:D $ for both cases\footnote{For Sinkhorn divergence, note that the entropy function is always nonnegative.}, and $d_{\text{S}}\geq -2\frac{\log N}{\lambda}$.
\footnote{This lower bound is obtained via 
$d_{\text{S}}(\pi',\pi|\lambda) \geq\min_{Q\geq 0, \sum_{i,j}Q_{ij}=1}\left\{\langle Q,D\rangle -\frac{1}{\lambda}h(Q)\right\}\overset{(a)}{=}
\langle Q,D\rangle -\frac{1}{\lambda}h(Q)|_{Q_{ij} =\frac{\exp(-\lambda D_{ij})}{\sum_{i,j} \exp(-\lambda D_{ij})}} = -\frac{1}{\lambda} \log \left(\sum_{i,j} \exp(-\lambda D_{ij})\right) \overset{(b)}{\geq} -\frac{2\log N}{\lambda}$. Here in the step $(a)$, we use the Lagrangian multiplier method to derive the optimal $Q_{ij} =\frac{\exp(-\lambda D_{ij})}{\sum_{i,j} \exp(-\lambda D_{ij})}$. In the step $(b)$, we use the fact that $\log (\sum_{i=1}^n \exp(x_i))\leq \max\{x_1,\dots,x_n\}+\log n$ for any $x_1,\dots, x_n\in \mathbb{R}$ and $D_{ii} = 0$ for any $i$.}

Since a policy $\pi$ is indeed just a member of $\prod_{i=1}^{|S|} \Delta$, we find that the problem (\ref{odrpo_WA}) can be split into $|\mathcal{S}|$ many optimization problems. 
For each $s\in \mathcal{S}$, we need to solve
\begin{equation}
\begin{split}
& \max_{\pi'_s \in \Delta} \hspace{3mm} \rho^{\pi}(s)\mathbb{E}_{a \sim \pi'(\cdot|s)}[A^{\pi} (s,a)] - \beta 
\rho^{\pi}(s)d (\pi'_s, \pi_s) .
\end{split}
\label{odrpo_WAsplitState}
\end{equation}
Denote the quality function of $\pi$ as $Q^{\pi}(s,a) = \mathbb{E}[R_t|s_t = s, a_t = a; \pi]$, and the value function of $\pi$ as $V^{\pi}(s) = \mathbb{E}[R_t|s_t = s; \pi]$, we find that 
$A^{\pi}(s,a) =Q^{\pi}(s,a) - V^{\pi}(s)$. Since the second term is only a function of the current policy $\pi$ and the state $s$, we find that Problem (\ref{odrpo_WAsplitState}) is further equivalent to (in the sense of the same solution set):
\begin{equation}
\begin{split}
& \max_{\pi'_s\in \Delta} \hspace{3mm} \mathbb{E}_{a\sim \pi'_s}[Q^{\pi}(s,a)] - \beta d (\pi'_s, \pi_s)   .
\end{split}
\label{odrpo_WAsplitState2}
\end{equation}
Here we use $\rho_0(s)>0$ for all $s$.
Let $\bar{\pi}$ be a solution of the policy iteration: 
\begin{equation}
    \bar{\pi}_s \in \arg\max_{\pi'_s}\; \mathbb{E}_{a\sim \pi'_s}[Q^{\pi}(s,a)] .\label{eq: policyiteration}
\end{equation}
Also define the bellman operator $T:\mathbb{R}^{|\mathcal{S}|} \rightarrow \mathbb{R}^{|\mathcal{S}|}$ and the operator 
$T^{\pi'}:\mathbb{R}^{|\mathcal{S}|} \rightarrow \mathbb{R}^{|\mathcal{S}|}$: for any $V\in \mathbb{R}^{|\mathcal{S}|} $,
\begin{align}
    (TV)_s = \max_{a\in \mathcal{A}}r(s,a) + \gamma \mathbb{E}_{s'\sim P(\cdot|s,a)}[V(s')],\\
     (T^{\pi'}V)_s = \mathbb{E}_{a\sim \pi'_s}[r(s,a) + \gamma \mathbb{E}_{s'\sim P(\cdot|s,a)} V(s')].
\end{align}
Using the relation between the quality function and the value function, $Q^\pi (s,a) = r(s,a) + \mathbb{E}_{s'\sim P(\cdot|s,a)}[V^\pi(s')]$, we can rewrite the above equations in terms of the quality function for $V=V^{\pi}$:
\begin{align}
    (TV^{\pi})_s & =  \max_{a\in \mathcal{A}}r(s,a) + \gamma \mathbb{E}_{s'\sim P(\cdot|s,a)}[V^{\pi}(s')] = \max_{a\in \mathcal{A}} Q(s,a) = T^{\bar{\pi}}V^{\pi}, \label{eq: bellmaniteration}\\ 
    (T^{\pi'}V^{\pi})_s & = \mathbb{E}_{a\sim \pi'_s}[Q^{\pi}(s,a)]. \label{eq: bellmanparticularpi}
\end{align}

Let us consider $d= d_{\text{W}}$ first. Using the optimality of $\pi^+$ for the problem (\ref{odrpo_WAsplitState}), we know that 
\begin{equation}
\begin{aligned}\label{eq: wacomparedtopolicyiteration}
    & \mathbb{E}_{a\in \pi^+_s}[Q^{\pi}(s,a)] - \beta d (\pi^+_s, \pi_s) \geq \mathbb{E}_{a\in \bar{\pi}_s}[Q^{\pi}(s,a)]  - \beta d (\bar{\pi}_s, \pi_s)\\
    \implies & 
    \mathbb{E}_{a\in \pi^+_s}[Q^{\pi}(s,a)] \geq \mathbb{E}_{a\in \bar{\pi}_s}[Q^{\pi}(s,a)] - \beta D. 
\end{aligned} 
\end{equation}
and 
\begin{equation}
\begin{aligned}\label{eq: wacomparedtooldpolicy}
    & \mathbb{E}_{a\in \pi^+_s}[Q^{\pi}(s,a)] - \beta d (\pi^+_s, \pi_s) \geq \mathbb{E}_{a\in \pi_s}[Q^{\pi}(s,a)]  - \beta d (\pi_s, \pi_s)\\
    \implies & 
    \mathbb{E}_{a\in \pi^+_s}[Q^{\pi}(s,a)] \geq \mathbb{E}_{a\in \pi_s}[Q^{\pi}(s,a)] = V^\pi (s).
\end{aligned} 
\end{equation}
Using the notation in (\ref{eq: bellmaniteration}) and (\ref{eq: bellmanparticularpi}), (\ref{eq: wacomparedtopolicyiteration}) and (\ref{eq: wacomparedtooldpolicy}) become
\begin{align} 
    T^{\pi^+} V^{\pi} \geq T V^{\pi} - \beta D \mathbf{1}_{|\mathcal{S}|}, \label{eq: piplusapprox}\\
    T^{\pi^+}V^{\pi} \geq V^{\pi} \label{eq: piplus}.
\end{align}
Here $\mathbf{1}_{|\mathcal{S}|}$ is a vector of all one entries and the inequality $\geq$ means entrywisely larger than or equal to. 
By iteratively applying $T^{\pi^+}$ to (\ref{eq: piplus}) and use the fact that $T^{\pi^+}$ is a monotone and contraction map with $V^{\pi^+}$ as the unique fixed point, we have  
\begin{equation}
    V^{\pi^+} \geq \dots \geq (T^{\pi^+})^2 V^{\pi}\geq T^{\pi^+} V^{\pi}\geq V^{\pi}. \label{eq: Vpiplus}
\end{equation}
Hence we have 
\begin{equation}\label{eq: iterative bound}
   0\overset{(a)}{\leq}  V^\star - V^{\pi^+} \overset{(b)}{\leq} V^\star - T^{\pi^+}V^{\pi} \overset{(c)}{\leq}
   V^\star - TV^{\pi} + \beta D \mathbf{1}_{|\mathcal{S}|}.
\end{equation}
Here the inequality $(a)$ is due to the optimality of $V^\star$. The inequality $(b)$ is due to (\ref{eq: Vpiplus}), and the inequality $(c)$ is due to (\ref{eq: piplusapprox}). Now using the fact $V^\star$ is the unique fixed point of $T$, and 
$T$ is a monotone and contraction map, we have from (\ref{eq: iterative bound}) that 
\begin{equation}\label{eq: Wfinal}
    \|V^\star - V^{\pi^+}\|_{\infty} \leq 
    \|TV^\star - TV^{\pi}\|_{\infty} + \beta D \leq \gamma   \|V^\star - V^{\pi}\|_{\infty} +\beta D.
\end{equation}

Next consider $d= d_{\text{S}}$. The optimality of $\pi^+$ reveals that for $\tilde{\pi} = \bar{\pi}$ or $\pi$:
\begin{equation}
\begin{aligned}\label{eq: Scomparedtopolicyiteration}
    & \mathbb{E}_{a\in \pi^+_s}[Q^{\pi}(s,a)] - \beta d (\pi^+_s, \pi_s) \geq \mathbb{E}_{a\in \tilde{\pi}_s}[Q^{\pi}(s,a)]  - \beta d (\tilde{\pi}_s, \pi_s)\\
    \implies & 
    \mathbb{E}_{a\in \pi^+_s}[Q^{\pi}(s,a)] \geq \mathbb{E}_{a\in \tilde{\pi}_s}[Q^{\pi}(s,a)] - \beta (D+2\frac{\log N}{\lambda}). 
\end{aligned} 
\end{equation}
Thus we have the following 
\begin{align} 
    T^{\pi^+} V^{\pi} & \geq T V^{\pi} - \beta (D +\frac{2\log N}{\lambda})\mathbf{1}_{|\mathcal{S}|}, \label{eq: Spiplusapprox}\\
    T^{\pi^+}V^{\pi} & \geq V^{\pi} -\beta (D +\frac{2\log N}{\lambda})\mathbf{1}_{|\mathcal{S}|} \label{eq: Spiplus}.
\end{align}
By iteratively applying $T^{\pi^+}$ to (\ref{eq: Spiplus}) and use the fact that $T^{\pi^+}$ is a monotone and contraction map with $V^{\pi^+}$ as the unique fixed point, we have  
\begin{equation}
    V^{\pi^+}\geq V^{\pi}-\frac{\beta}{1-\gamma} (D +2\frac{\log N}{\lambda})\mathbf{1}_{|\mathcal{S}|}. \label{eq: SVpiplus}
\end{equation}
Hence we have 
\begin{equation}\label{eq: Siterative bound}
\begin{aligned} 
   0\overset{(a)}{\leq}  V^\star - V^{\pi^+} \overset{(b)}{\leq} & V^\star - T^{\pi^+}V^{\pi} +\frac{\beta}{1-\gamma} (D +2\frac{\log N}{\lambda})\mathbf{1}_{|\mathcal{S}|}\\ \overset{(c)}{\leq} &
   V^\star - TV^{\pi} + 2\frac{\beta}{1-\gamma} (D +2\frac{\log N}{\lambda}) \mathbf{1}_{|\mathcal{S}|}.
 \end{aligned} 
\end{equation}
Here the inequality $(a)$ is due to the optimality of $V^\star$. The inequality $(b)$ is due to (\ref{eq: SVpiplus}), and the inequality $(c)$ is due to (\ref{eq: Spiplusapprox}). A similar derivation as (\ref{eq: Wfinal}) shows the inequality in the theorem. 
Hence the theorem is established. 
\end{proof}

\section{Computational Complexity of the Algorithm \ref{odrpo_algorithm}}\label{appendix:complexity}
Our overall algorithm applies a general actor-critic framework: the actor follows the proposed WPO or SPO update while the critic follows TD methods. The computational complexity depends on (i) the per-iteration computation cost of the policy and critic update and (ii) the iteration complexity of the actor-critic method. Here we mainly discuss the per-iteration computation cost of the policy update, as studies on the iteration complexity of actor-critic framework for constrained policy optimization are limited. 

The computation cost of WPO and SPO updates at each iteration depends on the selection of $\beta_k$. If $\beta_k$ is chosen time dependently, the computation cost of WPO/SPO policy update is $O(n_a^2 n_s)$, where $n_a$ and $n_s$ are the number of actions and states to perform policy update. If we set $\beta_k$ as the dual optimizer, there will be additional cost to run gradient descent to solve the one-dimensional dual formulation. As discussed in our experiments, we can set $\beta_k$ to be the dual optimizer only in the first a few iterations and use a decaying afterward. Therefore, the average computational complexity of a policy update step can be $O(n_a^2 n_s)$. 

\section{Difference between SPO/WPO and Other Exponential Style Updates}

\hlight{Sinkhorn divergence smooths the original Wasserstein by adding an entropy term, which causes the SPO update to contain exponential components similar to standard exponential style updates such as NPG }\citep{kakade2001natural, peng2019advantage}\hlight{. Thus, SPO can be viewed as a smoother version of WPO update. Nonetheless, it's important to note that SPO/WPO updates differ fundamentally from standard exponential style updates that are based solely on entropy or KL divergence. In both SPO and WPO, the probability mass at action $a$ is redistributed to neighboring actions with high value (i.e., those $a'$ with high $A^\pi(s,a') - \beta d(a',a)$). In contrast,  in these standard exponential style updates, probability mass at action $a$ is reweighted according to its exponential advantage or Q value.}

\section{Exploration Properties of WPO/SPO}

\hlight{Compared to the Wasserstein metric, the KL divergence between policies is often larger, especially when considering the policy shifts of closely related actions, as shown in Figure {\ref{fig:wd_geometric_advantage}}. In practice, when employing the same trust region size $\delta$, Wasserstein metric allows for more admissible policies within the trust region compared to KL, thereby leading to better exploration. This advantage is demonstrated in our motivating example in Figure {\ref{fig:policy_update_example_kl_wass}}.}

\hlight{Furthermore, Sinkhorn divergence has even more exploration advantages than using Wasserstein. As Sinkhorn smooths the original Wasserstein with an entropy term, it includes additional smoother (more uniform) policies in the trust region, leading to even faster exploration.}

\hlight{Our numerical results in Section {\ref{section_experiments}} also support that WPO/SPO explores better than KL; and SPO achieves faster exploration than WPO.}   

\section{Policy Parametrization, Prior Work on Nonparametric Policy}

\hlight{As noted in }\citep{tessler2019_dpo}\hlight{, the suboptimality of policy gradient is not due to parametrization (e.g., neural network), but is a result of the parametric distribution assumption imposed on policy, which constrains policies to a predefined set. In our work, we strive to avoid suboptimality by circumventing the parametric distribution assumption imposed on policy, while still allowing for parametrization of policy in our empirical studies. }

\hlight{Previous research, such as }\citep{abdolmaleki2018maximum, peng2019advantage}\hlight{, has investigated theoretical policy update rules based on KL divergence without making explicit parametric assumptions about the policy being used. However, to our best knowledge, no prior work has explored theoretical policy update rules based on Wasserstein metric or Sinkhorn divergence.}

\section{T-tests to Compare the Performance of WPO, SPO with BGPG and WNPG}

We conduct independent two-sample one-tailed t-tests \citep{student1908ttest} to compare the mean performance of our proposed methods (WPO and SPO) with two other Wasserstein-based policy optimization approaches: BGPG \citep{pacchiano2019_bgpg} and WNPG \citep{moskovitz2021wnpg}. Specifically, we formulate four alternative hypotheses for each task: $J_{\text{WPO}} > J_{\text{BGPG}}$, $J_{\text{WPO}} > J_{\text{WNPG}}$, $J_{\text{SPO}} > J_{\text{BGPG}}$, and $J_{\text{SPO}} > J_{\text{WNPG}}$.

MuJuCo continuous control tasks are considered for the t-tests, with a sample size of $10$ for each algorithm. All t-tests are conducted at a confidence level of $90\%$. The results of the t-tests are presented in Table \ref{ttest_performance_table}, where a checkmark ($\checkmark$) indicates that the alternative hypothesis is supported with $90\%$ confidence, and a dash ($-$) indicates a failure to support the alternative hypothesis.

Based on the results presented in Table \ref{ttest_performance_table}, we can conclude the following:

\begin{itemize}
    \item The mean performance of WPO is higher than BGPG with $90\%$ confidence for all tasks.
    \item The mean performance of WPO is higher than WNPG with $90\%$ confidence for all tasks.
    \item The mean performance of SPO is higher than BGPG with $90\%$ confidence for all tasks except Ant-v2.
    \item The mean performance of SPO is higher than WNPG with $90\%$ confidence for all tasks except HalfCheetah-v2.
\end{itemize}

We note that though SPO's performance is not statistically significantly higher than BGPG or WNPG in Ant-v2 and HalfCheetah-v2 tasks, SPO demonstrates a faster convergence speed than WNPG and BGPG in these two tasks. 

\begin{table}[H]
\caption{T-tests results on the performance of WPO, SPO, BGPG and WNPG}
\renewcommand{\arraystretch}{1.5}
\centering
\begin{adjustbox}{width=0.8\linewidth,center}
\begin{tabular}{lllll}
 \hline\hline
Environment  & $J_{\text{WPO}} > J_{\text{BGPG}}$ & $J_{\text{WPO}} > J_{\text{WNPG}}$ & $J_{\text{SPO}} > J_{\text{BGPG}}$ & $J_{\text{SPO}} > J_{\text{WNPG}}$  \\
\hline
HalfCheetah-v2 & $\checkmark$ & $\checkmark$ & $\checkmark$ & $-$ \\
\hline
Hopper-v2 & $\checkmark$ & $\checkmark$ & $\checkmark$ & $\checkmark$ \\
\hline
Walker2d-v2 & $\checkmark$ & $\checkmark$ & $\checkmark$ & $\checkmark$ \\
\hline
Ant-v2 & $\checkmark$ & $\checkmark$ & $-$ & $\checkmark$ \\
\hline
Humanoid-v2 & $\checkmark$ & $\checkmark$ & $\checkmark$ & $\checkmark$ \\
\hline
\end{tabular}
\label{ttest_performance_table}
\end{adjustbox}
\end{table}

\end{document}